\pdfoutput=1
\documentclass[letterpaper]{article} 
\usepackage{aaai24}  
\usepackage{times}  
\usepackage{helvet}  
\usepackage{courier}  
\usepackage[hyphens]{url}  
\usepackage{graphicx} 
\urlstyle{rm} 
\usepackage{natbib}  
\usepackage{caption} 
\frenchspacing  
\setlength{\pdfpagewidth}{8.5in} 
\setlength{\pdfpageheight}{11in} 
%
\usepackage{xcolor}
\usepackage{algorithm}
\usepackage{algorithmic}
\usepackage[utf8]{inputenc} 
\usepackage[T1]{fontenc}    
\usepackage{url}            
\usepackage{booktabs}       
\usepackage{amsfonts}       
\usepackage{nicefrac}       
\usepackage{bm}

\usepackage{microtype}      
\usepackage{subcaption}
\usepackage{todonotes}
\usepackage{lipsum}
\usepackage{mwe}

\newcommand{\Robust}{Adversarially Robust\xspace}
\newcommand{\algor}{algorithmic recourse\xspace}
\newcommand{\AlgoR}{Algorithmic Recourse\xspace}
\newcommand{\vanilla}{non-robust\xspace}
\newcommand{\robust}{adversarially robust\xspace}
\newcommand{\advexs}{adversarial examples\xspace}
\newcommand{\chvae}{{C-CHVAE}\xspace}
\usepackage{amssymb,amsmath,amsthm}
\newtheorem{definition}{Definition}
\newtheorem{theorem}{Theorem}
\newtheorem{lemma}{Lemma}

\newenvironment{hproof}{%
  \proof}{\endproof}



\newcommand{\ie}{i.e.\xspace}
\newcommand{\eg}{e.g.\xspace}

\newcommand{\hideh}[1]{}

\usepackage{amsmath,amsfonts,bm}









\def\eqref#1{equation~\ref{#1}}









\def\1{\bm{1}}










\DeclareMathAlphabet{\mathsfit}{\encodingdefault}{\sfdefault}{m}{sl}
\SetMathAlphabet{\mathsfit}{bold}{\encodingdefault}{\sfdefault}{bx}{n}













\DeclareMathOperator*{\argmin}{arg\,min}




\def\to{{\,\rightarrow\,}}

\mathchardef\mhyphen="2D



\newcommand{\norm}[1]{{ \left\lVert#1\right\rVert }}

\newcommand{\vertiii}[1]{{\left\vert\kern-0.25ex\left\vert\kern-0.25ex\left\vert #1
    \right\vert\kern-0.25ex\right\vert\kern-0.25ex\right\vert}}









\def\bm{{\mathbf{m}}}

\def\bw{{\mathbf{w}}}
\def\bx{{\mathbf{x}}}

\def\bz{{\mathbf{z}}}

\def\bI{{\mathbf{I}}}

\def\bK{{\mathbf{K}}}

\def\bX{{\mathbf{X}}}
\def\bY{{\mathbf{Y}}}






\def\cD{\mathcal{D}}

\def\cG{\mathcal{G}}

\def\cI{\mathcal{I}}

\def\cL{\mathcal{L}}

\def\cX{\mathcal{X}}

\def\cZ{\mathcal{Z}}




\usepackage{colortbl}

\definecolor{Gray}{gray}{0.9}
\definecolor{LightCyan}{rgb}{0.88,1,1}

\usepackage{listings}
\DeclareCaptionStyle{ruled}{labelfont=normalfont,labelsep=colon,strut=off} 
\lstset{%
	basicstyle={\footnotesize\ttfamily},
	numbers=left,numberstyle=\footnotesize,xleftmargin=2em,
	aboveskip=0pt,belowskip=0pt,%
	showstringspaces=false,tabsize=2,breaklines=true}
\floatstyle{ruled}
\newfloat{listing}{tb}{lst}{}
\floatname{listing}{Listing}
%
\pdfinfo{
/TemplateVersion (2024.1)
}

\setcounter{secnumdepth}{2} 

%


\title{On the Trade-offs between Adversarial Robustness and Actionable Explanations}
\author {
    Satyapriya Krishna\textsuperscript{\rm 1},
    Chirag Agarwal\textsuperscript{\rm 1},
    Himabindu Lakkaraju\textsuperscript{\rm 1}
}
\affiliations {
    \textsuperscript{\rm 1} Harvard University\\
    skrishna@g.harvard.edu
}

\usepackage{bibentry}

\begin{document}

\maketitle

\begin{abstract}
As machine learning models are increasingly being employed in various high-stakes settings, it becomes important to ensure that predictions of these models are not only adversarially robust, but also readily explainable to relevant stakeholders. However, it is unclear if these two notions can be simultaneously achieved or if there exist trade-offs between them. In this work, we make one of the first attempts at studying the impact of \robust models on actionable explanations which provide end users with a means for recourse. We theoretically and empirically analyze the cost (ease of implementation) and validity (probability of obtaining a positive model prediction) of recourses output by state-of-the-art algorithms when the underlying models are \robust vs. \vanilla. More specifically, we derive theoretical bounds on the differences between the cost and the validity of the recourses generated by state-of-the-art algorithms for \robust vs. \vanilla linear and non-linear models. Our empirical results with multiple real-world datasets validate our theoretical results and show the impact of varying degrees of model robustness on the cost and validity of the resulting recourses. Our analyses demonstrate that \robust models significantly increase the cost and reduce the validity of the resulting recourses, thus shedding light on the inherent trade-offs between adversarial robustness and actionable explanations.
\end{abstract}

\section{Introduction}
\label{sec:intro}

In recent years, machine learning (ML) models have made significant strides, becoming indispensable tools in high-stakes domains such as banking, healthcare, and criminal justice. As these models continue to gain prominence, it is more crucial than ever to address the dual challenge of providing actionable explanations to individuals negatively impacted (e.g., denied loan applications) by model predictions, and ensuring adversarial robustness to maintain model integrity. 
Both prior research and recent regulations have emphasized the importance of adversarial robustness and actionable explanations, deeming them as key pillars of trustworthy machine learning~\citep{GDPR, hamon2020robustness,voigt2017eu} that are critical to real-world applications. 

Existing machine learning research has explored adversarial robustness and actionable explanations in different silos, where several techniques have been proposed for generating actionable explanations in practice (using counterfactual explanations)~\citep{wachter2017counterfactual,Ustun2019ActionableRI,pawelczyk2020learning,karimi2019model} and adversarial examples~\citep{szegedy2013intriguing,goodfellow2014explaining}. The goal of a counterfactual explanation is to generate recourses for individuals impacted by model outcomes, \eg when an individual is denied a loan by a predictive model, a counterfactual explanation informs them about which feature should be changed and by how much in order to obtain a positive outcome from the model. However, an adversarial attack aims to demonstrate the vulnerabilities of modern deep neural network (DNN) models and generates infinitesimal input perturbations to achieve adversary-selected model outcomes. Recently, adversarial training has been proposed as a defense against adversarial examples, with the goal of training robust models in adversarial scenarios~\citep{madry2017towards}. However, its potential impact on generating algorithmic recourse has not been studied yet, and this is the focus of our work.

Despite recent regulations~\citep{GDPR,hamon2020robustness} emphasizing the importance of adversarial robust models and generating actionable explanations for end-users, there has been little to no work in understanding the connection between these contrasting trustworthy properties. While previous works~\citep{xie2020adversarial} study the trade-offs between adversarial robustness and accuracy on the original data, the trade-offs between adversarial robustness and recourse have been unexplored, and ours is one of the first works to shed light on these trade-offs across diverse models and datasets. To this end, only a few works~\citep{bansal2020sam,shafahi2019adversarial} show the impact of robust models on model gradients, where they find that input gradients are less noisier for adversarially robust models as compared to non-robust models. While these findings highlight the impact of robust models on model gradients, we aim to show that adversarial robustness does not come for free, and there are trade-offs between robustness and recourse costs.

\paragraph{Present work.} In this study, we address the aforementioned gaps by presenting the first-ever investigation of the impact of adversarially robust models on algorithmic recourse. We provide a theoretical and empirical analysis of the \textit{cost} (ease of implementation) and \textit{validity} (likelihood of achieving a desired model prediction) of the recourses generated by state-of-the-art algorithms for \robust and \vanilla models. In particular, we establish theoretical bounds on the differences in cost and validity for recourses produced by various gradient-based~\citep{laugel2017inverse,wachter2017counterfactual} and manifold-based~\citep{pawelczyk2020learning} recourse methods for \robust and \vanilla linear and non-linear models (see Section~\ref{sec:theory}). To achieve this, we first derive theoretical bounds on the differences between the weights (parameters) of \robust and \vanilla linear and non-linear models, and then use these bounds to establish the differences in cost and validity of the corresponding recourses. It is important to note that in this study, we analyze the impact on algorithmic recourses when the model is made adversarially robust. This should not be confused with the analysis comparing adversarial examples and algorithmic recourses\cite{freiesleben2022intriguing,pawelczyk2020learning}.

We conducted extensive experiments with multiple real-world datasets from diverse domains (Section~\ref{sec:expt}). Our theoretical and empirical analyses provide several interesting insights into the relationship between adversarial robustness and algorithmic recourse: i) the cost of recourse increases with the degree of robustness of the underlying model, and ii) the validity of recourse deteriorates as the degree of robustness of the underlying model increases. Additionally, we conducted a qualitative analysis of the recourses generated by state-of-the-art methods, and observed that the number of valid recourses for any given instance decreases as the underlying model's robustness increases. More broadly, our analyses and findings shed light on the the inherent trade-offs between adversarial robustness and actionable explanations.
\section{Related Work}
\label{sec:related}

\paragraph{Algorithmic Recourse. } Several approaches have been proposed in recent literature to provide recourses to affected individuals~\citep{wachter2017counterfactual,Ustun2019ActionableRI,van2019interpretable,pawelczyk2020learning,mahajan2019preserving,karimi2019model,karimi2020probabilistic}. These approaches can be broadly categorized along the following dimensions \cite{verma2020counterfactual}: 
\emph{type of the underlying predictive model} (e.g., tree-based vs. differentiable classifier), \emph{type of access} of the underlying predictive model (e.g., black box vs. gradient access), whether they encourage \emph{sparsity} in counterfactuals (i.e., allowing changes in a small number of features), whether counterfactuals should lie on the \emph{data manifold}, whether the underlying \emph{causal relationships} should be accounted for when generating counterfactuals, and whether the produced output by the method should be \emph{multiple diverse counterfactuals} or a single counterfactual. Recent works also demonstrate that recourses output by state-of-the-art techniques are not robust, i.e., small perturbations to the original instance~\citep{dominguezolmedo2021adversarial,slack2021counterfactual}, the underlying model~\citep{upadhyay2021robust,rawal2021modelshifts}, or the recourse~\citep{pawelczyk2022algorithmic} itself may render the previously prescribed recourse(s) invalid. These works also proposed minimax optimization problems to find \emph{robust} recourses to address the aforementioned challenges.

\paragraph{Adversarial Examples and Robustness. } Prior works have shown that complex machine learning model, such as deep neural networks, are vulnerable to small changes in input~\citep{szegedy2013intriguing}. This behavior of predictive models allows for generating \advexs (AEs) by adding infinitesimal changes to input targeted to achieve adversary-selected outcomes~\citep{szegedy2013intriguing, goodfellow2014explaining}. Prior works have proposed several techniques to generate AEs using varying degrees of access to the model, training data, and the training procedure~\citep{chakraborty2018adversarial}. While gradient-based methods~\citep{goodfellow2014explaining, kurakin2016adversarial} return the smallest input perturbations which flip the label as \advexs, generative methods~\citep{zhao2017generating} constrain the search for \advexs to the training data-manifold. Finally, some methods~\citep{cisse2017houdini} generate \advexs for non-differentiable and non-decomposable measures in complex domains such as speech recognition and image segmentation. Prior works have shown that Empirical Risk Minimization (ERM) does not yield models that are robust to \advexs~\citep{goodfellow2014explaining,kurakin2016adversarial}. Hence, to reliably train \robust models,~\citet{madry2017towards} proposed the adversarial training objective which minimizes the worst-case loss within some $\epsilon$-ball perturbation region around the input instances.

\paragraph{Intersections between Adversarial ML and Model Explanations. } There has been a growing interest in studying the intersection of adversarial ML and model explainability~\citep{hamon2020robustness}. 
Among all these works, two are relevant to our work~\citep{pawelczyk2021connections,shah2021input}. \citet{shah2021input} studied the interplay between adversarial robustness and post hoc explanations~\citep{shah2021input}, demonstrating that gradient-based explanations violate the primary assumption of attributions -- features with higher attribution are more important for model prediction -- in case of \vanilla models. Further, they show that such a violation does not occur when the underlying models are robust to $\ell_{2}$ and $\ell_{\infty}$ input perturbations. More recently,~\citet{pawelczyk2021connections} demonstrated that the distance between the recourses generated by state-of-the-art methods and adversarial examples is small for linear models. While existing works explore the connections between adversarial ML and model explanations, none focus on the trade-offs between adversarial robustness and actionable explanations, which is the focus of our work.

\section{Preliminaries}
\label{sec:prelim}

\paragraph{Notation}
In this work, we denote a model $f: \mathbb{R}^{d} \to \mathbb{R}$, where $\bx \in \cX$ is a $d$-dimensional input sample, $\cX$ is the training dataset, and the model is parameterized with weights $\bw$. In addition, we represent the \vanilla and \robust models using $f_{\text{NR}}(\bx)$ and $f_{\text{R}}(\bx)$, and the linear and neural network models using $f^{\text{L}}(\bx)$ and $f^{\text{NTK}}(\bx)$. Below, we provide a brief overview of adversarially robust models, and some popular methods for generating recourses.

\paragraph{\Robust Models.} Despite the superior performance of machine learning (ML) models, they are susceptible to adversarial examples (AEs), i.e., inputs generated by adding infinitesimal perturbations to the original samples targeted to change prediction label~\citep{agarwal2019improving}. One standard approach to ameliorate this problem is via adversarial training which minimizes the worst-case loss within some perturbation region (the perturbation model)~\citep{madryzico}. In particular, for a model $f$ parameterized by weights $\bw$, loss function $\ell(\cdot)$, and training data $\{\mathbf{x}_{i}, y_{i}\}_{i=\{1,2,\dots,n\}} \in \mathcal{D}_{\textup{train}}$, the optimization problem of minimizing the worst-case loss within $\ell_{p}-$norm perturbation with radius $\epsilon$ is:
\begin{equation}
    \min_{\bw} \frac{1}{|\cD_{\text{train}}|}\sum_{(x,y)\in \cD_{\text{train}}} \max_{\delta \in \Delta_{p,\epsilon}} \ell(f(\bx + \delta)), y),
\end{equation}
where $\cD_{\text{train}}$ denotes the training dataset and $\Delta_{p,\epsilon} = \{\delta: \|\delta\|_{p} \leq \epsilon \}$ is the $\ell_{p}$ ball with radius $\epsilon$ centered around sample $\bx$. We use $p = \infty$ for our theoretical analysis resulting in a closed-form solution of the model parameters $\bw$.

\paragraph{Algorithmic Recourse.}  One way to generate recourses is by explaining to affected individuals what features in their profile need to change and by how much in order to obtain a positive outcome. Counterfactual explanations that essentially capture the aforementioned information can be therefore used to provide recourses. The terms \textit{counterfactual explanations} and \textit{algorithmic recourse} have, in fact, become synonymous in recent literature~\citep{karimi2020survey, Ustun2019ActionableRI,Venkatasubramanian2020}. In particular, methods that try to find algorithmic recourses do so by finding a counterfactual $\bx'=\bx+\zeta$ that is closest to the original instance $\bx$ and change the model's prediction $f(\bx+\zeta)$ to the target label, where $\zeta$ determines a set of changes that can be made to $\bx$ in order to reverse the negative outcome. Next, we describe three popular recourse methods we analyze to understand the implications of adversarially robust models on algorithmic recourses.

\paragraph{Score CounterFactual Explanations (SCFE).} For a given model $f: \mathbb{R}^{d} \to \mathbb{R}$, a distance function $d: \mathbb{R}^{d} \times \mathbb{R}^{d} \to \mathbb{R}_{+}$, and sample $\bx$,~\citet{wachter2017counterfactual} define the problem of generating a counterfactual $\bx'{=}\bx+\zeta$ using the following objective:
\begin{equation}
    \argmin_{\bx'}~~(f(\bx') - s)^2 + \lambda d(\bx', \bx),
    \label{eq:scfe}
\end{equation}
where $s$ is the target score for the counterfactual $\bx'$, $\lambda$ is the regularization coefficient, and $d(\cdot)$ is the distance between sample $\bx$ and its counterfactual $\bx'$.

\paragraph{C-CHVAE.} Given a Variational AutoEncoder (VAE) model with encoder $\cI_{\gamma}$ and decoder $\cG_{\theta}$ trained on the original data distribution $\cD_{\textup{train}}$, C-CHVAE~\citep{pawelczyk2020learning} aims to generate recourses in the latent space $\cZ$, where $\cI_{\gamma}: \cX \to \cZ$. The encoder transforms a given sample $\bx$ into a latent representation $\bz \in \cZ$ and the decoder takes $\bz$ as input and generates $\hat{\bx}$ as similar as possible to $\bx$. Formally, \chvae generates recourse using the following objective function:
\begin{equation}
    \zeta^{*} = \argmin_{\zeta \in \cZ} \|\zeta\| ~~~ \text{such that} ~~~ f(\cG_{\theta}(\cI_{\gamma}(\bx) + \zeta)) \not= f(\bx),
\end{equation}
where $\zeta$ is the cost for generating a recourse, $\cI_{\gamma}$ allows to search for counterfactuals in the data manifold and $\cG_{\theta}$ projects the latent counterfactuals back to the input feature space.

\paragraph{Growing Spheres Method (GSM).} While the above techniques directly optimize specific objective functions for generating counterfactuals, 
GSM~\citep{laugel2017inverse} uses a search-based algorithm to generate recourses by randomly sampling points around the original instance $\bx$ until a sample with the target label is found. In particular, GSM method involves first drawing an $\ell_{2}$-sphere around a given instance $\bx$, randomly samples point within that sphere and checks whether any sampled points result in target prediction. This method then contracts or expands the sphere until a (sparse) counterfactual is found and returns it. GSM defines a minimization problem using a function $c: \cX \times \cX \to \mathbb{R}_{+}$, where $c(\bx, \bx')$ is the cost of going from instance $\bx$ to counterfactual $\bx'$.
\begin{equation}
    {\bx'}^{*} = \argmin_{\bx' \in \cX} \{ c(\bx, \bx')~~|~~f(\bx') \neq f(\bx) \},
\end{equation}
where $\bx'$ is sampled from the $\ell_{2}$-ball around $\bx$ such that $f(\bx'){\neq}f(\bx)$, $c(\bx, \bx') {=} \| \bx' - \bx\|_{2} + \gamma\|\bx' - \bx\|_{0}$, and $\gamma \in \mathbb{R}_{+}$ is the weight associated to the sparsity objective.
\section{Our Theoretical Analysis}
\label{sec:theory}

Here, we perform a detailed theoretical analysis to bound the cost and validity differences of recourses generated by state-of-the-art methods when the underlying models are \vanilla vs. \robust, for linear and non-linear predictors. In particular, we compare the cost differences (Sec.~\ref{sec:cost}) of the recourses obtained using 1) gradient-based methods like SCFE ~\citep{wachter2017counterfactual} and 2) manifold-based methods like \chvae~\citep{pawelczyk2020learning}. Finally, we show that the validity of the recourses generated using existing methods for robust models is lower compared to that of \vanilla models (Sec.~\ref{sec:validity}).

\subsection{Cost Analysis}
\label{sec:cost}

The cost of a generated algorithmic recourse is defined as the distance between the input instance $\bx$ and the counterfactual $\bx'$ obtained using a recourse finding method~\citep{verma2020counterfactual}. Algorithmic recourses with lower costs are considered better as they achieve the desired outcome with minimal changes to input. Next, we theoretically analyze the cost difference of recourses generated for \vanilla and \robust linear and non-linear models.
Below, we first find the weight difference between \vanilla and \robust models and then use these lemmas to derive the recourse cost differences.

\paragraph{Cost Analysis of recourses generated using SCFE method} Here, we derive the lower and upper bound for the cost difference of recourses generated using SCFE~\citep{wachter2017counterfactual} method when the underlying models are \vanilla vs. \robust linear and non-linear models. We first derive a bound for the difference between \vanilla and \robust linear model weights.
\begin{lemma}(Difference between \vanilla and \robust linear model weights) For an instance $\bx$, let $\bw_{\textup{NR}}$ and $\bw_{\textup{R}}$ be weights of the \vanilla and \robust linear model. Then, for a normalized Lipschitz activation function $\sigma(\cdot)$, the difference in the weights can be bounded as:
\begin{equation}
    \|\mathbf{w}_{\textup{NR}} - \mathbf{w}_{\textup{R}}\|_2 \leq \Delta
\end{equation}

where $\Delta = N\eta ( y\|\mathbf{x}^{T}\|_{2} + \epsilon \sqrt{d})$, $\eta$ is the learning rate, $\epsilon$ is the $\ell_{2}$-norm perturbation ball around the sample $\bx$, $y$ is the label for $\bx$, $N$ is the total number of training epochs, and $d$ is the dimension of the input features. Subsequently, we show that $\|\mathbf{w}_{\textup{NR}}\|_2 - \Delta \leq \|\mathbf{w}_{\textup{R}}\|_2 \leq \|\mathbf{w}_{\textup{NR}}\|_2 + \Delta$.
\label{thm:weight-linear-sketch}
\end{lemma}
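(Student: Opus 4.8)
The plan is to run the two gradient-descent trajectories from a common initialization and control how far they can drift apart over $N$ epochs. Let $\bw_{\textup{NR}}^{(t)}$ and $\bw_{\textup{R}}^{(t)}$ denote the iterates of the \vanilla and \robust training runs, both starting from the same $\bw^{(0)}$, and let $g^{(t)}$ be the (sub)gradient of the respective objective at step $t$, so that $\bw^{(t+1)} = \bw^{(t)} - \eta\, g^{(t)}$. Telescoping gives $\bw_{\textup{NR}} - \bw_{\textup{R}} = \eta\sum_{t=0}^{N-1}\big(g_{\textup{R}}^{(t)} - g_{\textup{NR}}^{(t)}\big)$, hence by the triangle inequality $\|\bw_{\textup{NR}} - \bw_{\textup{R}}\|_2 \le \eta\sum_{t=0}^{N-1}\|g_{\textup{R}}^{(t)} - g_{\textup{NR}}^{(t)}\|_2$. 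It therefore suffices to bound the single-step discrepancy by $y\|\bx^{T}\|_2 + \epsilon\sqrt{d}$ \emph{uniformly over $t$}; multiplying by $N\eta$ then yields $\Delta$.

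For the single-step bound I would use the closed form of the inner maximization for a linear predictor. The \vanilla gradient w.r.t.\ $\bw$ equals $\ell'(\cdot)\,\sigma'(\cdot)\,y\bx$ up to sign, and its norm is at most $|y|\,\|\bx\|_2$ since the normalized-Lipschitz hypothesis forces $|\ell'\sigma'|\le 1$. For the \robust objective with an $\ell_\infty$ (radius-$\epsilon$) perturbation set, the worst-case $\delta$ is $\pm\epsilon\,\sign(\bw)$, so the robust loss is $\ell\big(\sigma(\bw^{T}\bx \pm \epsilon\|\bw\|_1),\,y\big)$ with gradient $\ell'(\cdot)\sigma'(\cdot)\big(y\bx \pm \epsilon\,\sign(\bw)\big)$. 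Subtracting, the shared $y\bx$ direction survives only through the mismatch of the two scalar factors $\ell'\sigma'$, contributing at most $|y|\,\|\bx\|_2$, while the genuinely new adversarial term contributes at most $\epsilon\|\sign(\bw)\|_2 \le \epsilon\sqrt{d}$; summing the two gives the claimed per-step bound. The remaining claim $\|\bw_{\textup{NR}}\|_2 - \Delta \le \|\bw_{\textup{R}}\|_2 \le \|\bw_{\textup{NR}}\|_2 + \Delta$ is then immediate from the reverse triangle inequality applied to $\|\bw_{\textup{NR}} - \bw_{\textup{R}}\|_2 \le \Delta$.

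The crux — and the step I expect to be delicate — is the \emph{uniformity} of the per-step bound: since the two trajectories separate, $g_{\textup{R}}^{(t)}$ and $g_{\textup{NR}}^{(t)}$ are evaluated at different weight vectors, so a naive ``Lipschitz-in-$\bw$'' estimate would let the per-step error grow with $t$ and the sum diverge. The argument above avoids this because, for a linear model, the only dependence of the gradient on the current iterate is (i) the scalar factor $\ell'\sigma'$, which the normalized-Lipschitz assumption confines to $[-1,1]$ irrespective of the iterate, and (ii) the sign pattern $\sign(\bw)$ in the adversarial term, whose $\ell_2$ norm is $\le\sqrt{d}$ irrespective of the iterate. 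I would accordingly isolate the closed-form adversarial perturbation as a small preliminary fact and be careful that the difference of the two scalar factors is really $\le 1$ (this may need the loss derivative to be one-signed, e.g.\ logistic, rather than merely bounded in magnitude; otherwise one picks up a harmless factor of two in $\Delta$). Finally I would note the scope: the bound as stated involves a single instance $\bx$, so it describes the one-sample training contribution, and extending to a full training set simply replaces $y\|\bx^{T}\|_2 + \epsilon\sqrt{d}$ by the corresponding sum or average over $\mathcal{D}_{\textup{train}}$.
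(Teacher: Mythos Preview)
Your proposal is correct and follows essentially the same route as the paper: telescope the two gradient-descent trajectories from a common initialization, use the closed-form inner maximization $\delta^\star = -\epsilon\,\sign(\bw)$ to expose the extra $\epsilon\,\sign(\bw)$ term in the robust gradient, and then bound the per-step discrepancy by $y\|\bx\|_2 + \epsilon\sqrt d$ using that the sigmoid factors lie in $[0,1]$ (so their difference is $\le 1$) and $\|\sign(\bw)\|_2\le\sqrt d$, finishing with the reverse triangle inequality. Your caveat about the factor of two is well taken---the paper avoids it precisely by specializing to the logistic loss so that the relevant scalar prefactors are in $[0,1]$ rather than $[-1,1]$.
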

\begin{hproof}
    We separately derive the gradients for updating the weight for the \vanilla and \robust linear models. The proof uses sigmoidal and triangle inequality properties to derive the bound for the difference between the \vanilla and \robust linear model. In addition, we use reverse triangle inequality properties to show that the weights of the \robust linear model are bounded by $\|\bw_{\text{NR}}\|_2\pm\Delta$. See Appendix~\ref{sec:proof-lemma1-full}
    for detailed proof.
\end{hproof}

\textit{Implications:} We note that the weight difference in Eqn.~\ref{thm:weight-linear-sketch} is proportional to the $\ell_{2}$-norm of the input and the square root of the number of dimensions of $\bx$. In particular, the bound is tighter for samples with lower feature dimensions $d$ and models with a smaller degree of robustness $\epsilon$.

Next, we define the closed-form solution for the cost $\zeta^{*}$ to generate a recourse for the linear model.

\begin{definition} (Optimal cost for linear models~\citep{pawelczyk2021connections}) For a given scoring function $f(\bx){=}\bw^{T}\bx$, the SCFE method generates
a recourse for an input $\bx$ using cost $\zeta$ such that:
\begin{equation}
    \zeta^{*} = m \frac{\lambda}{\lambda + \|\bw\|_{2}^2} \cdot \bw,
\end{equation}
where $m=s-f(\bx')$ is the target residual, $s$ is the target score for $\bx$, $\bw$ is the weight of the linear model, and $\lambda$ is a given hyperparameter.
\label{thm:definition-optimal}
\end{definition}
We now derive the cost difference bounds of recourses generated using SCFE when the underlying model is \vanilla and \robust linear models.
\begin{theorem}  (Cost difference of SCFE for linear models) For a given instance $\bx$, let $\bx'_{\textup{NR}}=\bx+\zeta_{\textup{NR}}$ and $\bx'_{\textup{R}}=\bx+\zeta_{\textup{R}}$ be the recourse generated using Wachter's algorithm for the \vanilla and \robust linear models. Then, for a normalized Lipschitz activation function $\sigma(\cdot)$, the difference in the recourse for both models can be bounded as:
\begin{equation}
    \|\zeta_{\textup{NR}}\|_2 - \|\zeta_{\textup{R}}\|_2
    \leq \Big|~\lambda\frac{2\norm{\bw_{\textup{NR}}}_{2} + \Delta}{\norm{\bw_{\textup{NR}}}_{2} (\|\mathbf{w}_{\textup{NR}}\|_2 - \Delta)}~\Big|,
    \label{eq:scfe_bound}
\end{equation}
where $\bw_{\textup{NR}}$ is the weight of the \vanilla model, $\lambda$ is the scalar coefficient on the distance between original sample $\bx$ and generated counterfactual $\bx'$, and $\Delta$ is defined in Lemma~\ref{thm:weight-linear-sketch}.
\label{thm:cost-bound-linear-sketch}
\end{theorem}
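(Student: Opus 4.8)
The plan is to obtain the bound directly from the closed-form recourse cost of Definition~\ref{thm:definition-optimal} (from \citealp{pawelczyk2021connections}) combined with the weight-difference estimate of Lemma~\ref{thm:weight-linear-sketch}, reducing the whole claim to an inequality involving only $\|\bw_{\textup{NR}}\|_2$ and $\|\bw_{\textup{R}}\|_2$. Instantiating Definition~\ref{thm:definition-optimal} for the two predictors gives $\zeta_{\textup{NR}} = m\,\frac{\lambda}{\lambda + \|\bw_{\textup{NR}}\|_2^2}\,\bw_{\textup{NR}}$ and $\zeta_{\textup{R}} = m\,\frac{\lambda}{\lambda + \|\bw_{\textup{R}}\|_2^2}\,\bw_{\textup{R}}$, so that $\|\zeta_{\textup{NR}}\|_2 = |m|\,\frac{\lambda\|\bw_{\textup{NR}}\|_2}{\lambda + \|\bw_{\textup{NR}}\|_2^2}$ and analogously for the \robust model, where the target residual $m$ is treated as a fixed scalar (absorbed into the bound, or normalized so that $|m|\le 1$, since the target score $s$ is fixed by the decision threshold).

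First I would record the elementary simplification $\frac{\lambda\|\bw\|_2}{\lambda + \|\bw\|_2^2} \le \frac{\lambda}{\|\bw\|_2}$, which follows from $\lambda + \|\bw\|_2^2 \ge \|\bw\|_2^2$, yielding $\|\zeta_{\textup{NR}}\|_2 \le \frac{\lambda}{\|\bw_{\textup{NR}}\|_2}$ and $\|\zeta_{\textup{R}}\|_2 \le \frac{\lambda}{\|\bw_{\textup{R}}\|_2}$. Next I would pass from the signed gap to a sum of norms through the chain $\big|\,\|\zeta_{\textup{NR}}\|_2 - \|\zeta_{\textup{R}}\|_2\,\big| \le \|\zeta_{\textup{NR}} - \zeta_{\textup{R}}\|_2 \le \|\zeta_{\textup{NR}}\|_2 + \|\zeta_{\textup{R}}\|_2$ (reverse triangle inequality followed by the triangle inequality). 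Finally I would invoke the consequence of Lemma~\ref{thm:weight-linear-sketch} that $\|\bw_{\textup{R}}\|_2 \ge \|\bw_{\textup{NR}}\|_2 - \Delta$, hence $\frac{1}{\|\bw_{\textup{R}}\|_2} \le \frac{1}{\|\bw_{\textup{NR}}\|_2 - \Delta}$ whenever $\|\bw_{\textup{NR}}\|_2 > \Delta$ (which holds for a sufficiently small robustness radius $\epsilon$), and combine the two fractions over the common denominator $\|\bw_{\textup{NR}}\|_2(\|\bw_{\textup{NR}}\|_2 - \Delta)$ to reach the stated right-hand side $\big|\,\lambda\frac{2\|\bw_{\textup{NR}}\|_2 + \Delta}{\|\bw_{\textup{NR}}\|_2(\|\bw_{\textup{NR}}\|_2 - \Delta)}\,\big|$, the outer absolute value and the exact sign on the $\Delta$ term in the numerator being dictated by which of the two Lemma~\ref{thm:weight-linear-sketch} bounds ($\|\bw_{\textup{NR}}\|_2 \pm \Delta$) is substituted in the numerator versus the denominator.

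The step I expect to be the main obstacle is controlling $\big|\,\|\zeta_{\textup{NR}}\|_2 - \|\zeta_{\textup{R}}\|_2\,\big|$ cleanly: the map $t \mapsto \lambda t/(\lambda + t^2)$ is not monotone (it rises and then decays), so the raw weight bound $\|\bw_{\textup{NR}} - \bw_{\textup{R}}\|_2 \le \Delta$ does not by itself translate into a tidy bound on the cost gap, and one must either accept the crude ``drop-$\lambda$'' relaxation above or carry a mean-value estimate on that map. Two bookkeeping issues also need care: ensuring $\|\bw_{\textup{NR}}\|_2 > \Delta$ so the denominator stays positive and the bound is non-vacuous, and verifying that the target residuals of the \robust and \vanilla problems can be handled uniformly (otherwise a stray $|m_{\textup{NR}}|/|m_{\textup{R}}|$ factor would survive in the final expression).
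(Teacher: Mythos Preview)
Your proposal is correct and follows essentially the same route as the paper: bound each cost by $\lambda/\|\bw\|_2$ via the ``drop-$\lambda$'' relaxation, sum them using the triangle inequality, and then substitute the Lemma~\ref{thm:weight-linear-sketch} bounds on $\|\bw_{\textup{R}}\|_2$. The only cosmetic difference is that the paper first writes the combined fraction as $\lambda\frac{\|\bw_{\textup{NR}}\|_2+\|\bw_{\textup{R}}\|_2}{\|\bw_{\textup{NR}}\|_2\|\bw_{\textup{R}}\|_2}$ and then applies $\|\bw_{\textup{R}}\|_2\le\|\bw_{\textup{NR}}\|_2+\Delta$ in the numerator and $\|\bw_{\textup{R}}\|_2\ge\|\bw_{\textup{NR}}\|_2-\Delta$ in the denominator, which is exactly the resolution of the $\pm\Delta$ sign ambiguity you anticipated; the paper also treats the target residual and the $\|\bw_{\textup{NR}}\|_2>\Delta$ condition with the same informality you flagged.
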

\begin{hproof}

We use the optimal cost for recourses in linear models (see Def.~\ref{thm:definition-optimal}) for deriving the cost difference bounds. The proof for the weight difference uses linear algebra and triangle inequality properties. See Appendix~\ref{sec:proof-thm1-full} for the complete proof.
\end{hproof}
\textit{Implications:} The derived bounds imply that the differences between costs are a function of the quantity $\Delta$ (RHS term from Lemma~\ref{thm:weight-linear-sketch}), the weights of the non-robust model $||\mathbf{w}_{\text{NR}}||_{2}$, and $\lambda$, where the bound of the difference between costs is tighter (lower) for smaller $\Delta$ values and when the $\ell_{2}$-norm of the non-robust model weight is large (due to the quadratic term in the denominator). We note that the $\Delta$ term is a function of the $\ell_{2}$-norm of the input $\bx$ and the square root of the number of dimensions $d$ of the input sample, where the bound is tighter for smaller feature dimensions $d$, models with a smaller degree of robustness $\epsilon$, and $\bx$ with larger $\ell_{2}$-norms.

Next, we define the closed-form solution for the cost $\zeta^{*}$ required to generate a recourse when the underlying model is a wide neural network.
\begin{definition} (Kernel Matrix for ReLU networks~\citep{du2018gradient, zhang2022rethinking}) The closed-form solution of the Neural Tangent Kernel for a two-layer neural network model with ReLU non-linear activation is given by:
\begin{equation}
    \bK^{\infty}(\bx_{i}, \bx_{j}) = \bx_{i}^{\textup{T}}\bx_{j} \Big( \pi - \textup{arccos}(\frac{\bx_{i}^{\textup{T}}\bx_{j}}{\|\bx_{i}\|~\|\bx_{j}\|})\Big)/2\pi,
\end{equation}
where $\bK^{\infty}$ is the Neural Tangent Kernel matrix and $\bx_{i} \in \mathbb{R}^{d}$.
\label{def:kernel-matrix}
\end{definition}
We now derive the difference between costs for generated SCFE recourses when the underlying model is \vanilla vs. \robust wide neural network model.
\begin{theorem} (Cost difference for SCFE for wide neural network) For an NTK model with weights $\bw^{\textup{NTK}}_{\textup{NR}}$ and $\bw^{\textup{NTK}}_{\textup{R}}$ for the \vanilla and \robust models, the cost difference between the recourses generated for sample $\bx$ is bounded as:
    \begin{equation}
         \|\zeta_{\textup{NR}}\|_2 - \|\zeta_{\textup{R}}\|_2 \leq \Big|~\frac{2}{\textup{H}(\|\bar{\bw}^{\textup{NTK}}_{\textup{NR}}\|_{2},\|\bar{\bw}^{\textup{NTK}}_{\textup{R}}\|_{2})}~\Big|,
         \label{eq:proof-non-linear-cost-sketch}
    \end{equation}
    where $\textup{H}(\cdot, \cdot)$ denotes the harmonic mean, $\bar{\bw}^{\textup{NTK}}_{\textup{NR}}{=}\nabla_{\bx}\bK^{\infty}(\bx, \bX)\bw^{\textup{NTK}}_{\textup{NR}}$, $\bK^{\infty}$ is the NTK associated with the wide neural network model, $\bar{\bw}^{\textup{NTK}}_{\textup{R}}{=}\nabla_{\bx}\bK^{\infty}(\bx, \bX_{\textup{R}})\bw^{\textup{NTK}}_{\textup{R}}$, $\bw^{\textup{NTK}}_{\textup{NR}}{=}(\bK^{\infty}(\bX, \bX){+}\beta\bI_{n})^{-1}\bY$, $\bw^{\textup{NTK}}_{\textup{R}}{=}(\bK^{\infty}(\bX_{\textup{R}}, \bX_{\textup{R}}) + \beta\bI_{n})^{-1}\bY$, $\beta$ is the bias of the NTK model, $(\bX, \bX_{\textup{R}})$ are the training samples for the \vanilla and \robust models, and $\bY$ are the labels of the training samples.
    \label{thm:cost-bound-non-linear-sketch}
\end{theorem}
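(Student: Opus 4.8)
The plan is to reduce the wide-network recourse problem to the linear case of Theorem~\ref{thm:cost-bound-linear-sketch} by exploiting the fact that, in the infinite-width regime, the trained network behaves locally around $\bx$ like a linear model whose weight vector is the input gradient of the induced kernel predictor. First I would invoke the standard NTK equivalence: an infinitely wide two-layer ReLU network trained by gradient descent on the squared loss coincides with kernel regression using the kernel $\bK^{\infty}$ of Definition~\ref{def:kernel-matrix} and ridge term $\beta$, so that $f^{\textup{NTK}}_{\textup{NR}}(\bx)=\bK^{\infty}(\bx,\bX)\,\bw^{\textup{NTK}}_{\textup{NR}}$ with $\bw^{\textup{NTK}}_{\textup{NR}}=(\bK^{\infty}(\bX,\bX)+\beta\bI_{n})^{-1}\bY$; running the same argument on the adversarially perturbed training inputs $\bX_{\textup{R}}$ (the wide-network analogue of the robust-weight derivation in Lemma~\ref{thm:weight-linear-sketch}) gives $f^{\textup{NTK}}_{\textup{R}}(\bx)=\bK^{\infty}(\bx,\bX_{\textup{R}})\,\bw^{\textup{NTK}}_{\textup{R}}$ with $\bw^{\textup{NTK}}_{\textup{R}}=(\bK^{\infty}(\bX_{\textup{R}},\bX_{\textup{R}})+\beta\bI_{n})^{-1}\bY$.

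Next I would differentiate in $\bx$ to obtain $\nabla_{\bx}f^{\textup{NTK}}_{\textup{NR}}(\bx)=\nabla_{\bx}\bK^{\infty}(\bx,\bX)\,\bw^{\textup{NTK}}_{\textup{NR}}=\bar{\bw}^{\textup{NTK}}_{\textup{NR}}$ and $\nabla_{\bx}f^{\textup{NTK}}_{\textup{R}}(\bx)=\bar{\bw}^{\textup{NTK}}_{\textup{R}}$, so that near $\bx$ each predictor agrees to first order with the linear model $f(\bx+\zeta)\approx f(\bx)+\langle\bar{\bw}^{\textup{NTK}},\zeta\rangle$. Substituting this expansion into the SCFE objective of Eq.~\ref{eq:scfe} reduces it, to first order, to the linear SCFE problem with weight vector $\bar{\bw}^{\textup{NTK}}$, so the closed-form optimal cost of Definition~\ref{thm:definition-optimal} applies verbatim with $\bw$ replaced by $\bar{\bw}^{\textup{NTK}}$. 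Bounding that closed form — the linear SCFE cost for weight $\bw$ is of order $1/\|\bw\|_{2}$ and, under the target-residual normalization used in Definition~\ref{thm:definition-optimal}, is at most $1/\|\bw\|_{2}$ — then yields $\|\zeta_{\textup{NR}}\|_{2}\le 1/\|\bar{\bw}^{\textup{NTK}}_{\textup{NR}}\|_{2}$ and $\|\zeta_{\textup{R}}\|_{2}\le 1/\|\bar{\bw}^{\textup{NTK}}_{\textup{R}}\|_{2}$.

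Finally, the remaining step is elementary: since $\|\zeta_{\textup{R}}\|_{2}\ge 0$, adding the nonnegative term $1/\|\bar{\bw}^{\textup{NTK}}_{\textup{R}}\|_{2}$ gives $\|\zeta_{\textup{NR}}\|_{2}-\|\zeta_{\textup{R}}\|_{2}\le\|\zeta_{\textup{NR}}\|_{2}\le 1/\|\bar{\bw}^{\textup{NTK}}_{\textup{NR}}\|_{2}+1/\|\bar{\bw}^{\textup{NTK}}_{\textup{R}}\|_{2}$, and the identity $1/a+1/b=2/\textup{H}(a,b)$ for the harmonic mean rewrites the right-hand side as $2/\textup{H}(\|\bar{\bw}^{\textup{NTK}}_{\textup{NR}}\|_{2},\|\bar{\bw}^{\textup{NTK}}_{\textup{R}}\|_{2})$, which is Eq.~\ref{eq:proof-non-linear-cost-sketch}. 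I expect the real work to be in the two structural inputs rather than in this inequality: (i) justifying the passage from the finitely wide trained network to the NTK kernel-regression closed form, which needs the infinite-width regime and the ridge term $\beta$ to keep $\bK^{\infty}(\bX,\bX)+\beta\bI_{n}$ invertible, together with the premise that the SCFE recourse stays inside the neighborhood of $\bx$ on which the first-order expansion is accurate (the same local-linearity assumption implicit in all gradient-based recourse analyses); and (ii) characterizing $\bX_{\textup{R}}$ as the worst-case $\ell_{\infty}$ perturbations of the training points, the wide-network counterpart of the robust-weight computation in Lemma~\ref{thm:weight-linear-sketch}, which is where the perturbation radius $\epsilon$ enters the construction even though it is ultimately absorbed into $\bar{\bw}^{\textup{NTK}}_{\textup{R}}$ in the stated bound.
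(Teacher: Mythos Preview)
Your proposal is correct and follows essentially the same approach as the paper: both linearize the NTK predictor via a first-order Taylor expansion at $\bx$, apply the closed-form SCFE cost of Definition~\ref{thm:definition-optimal} with $\bw$ replaced by $\bar{\bw}^{\textup{NTK}}$, bound each resulting cost by $1/\|\bar{\bw}^{\textup{NTK}}\|_2$ (using $|s-f(\bx)|\le 1$ and that $\lambda$ is small), and then rewrite the sum of reciprocals via the harmonic-mean identity. The only difference is in the final combining step: the paper first bounds the vector difference $\|\zeta_{\textup{NR}}-\zeta_{\textup{R}}\|_2$ by the triangle inequality and then passes to $\|\zeta_{\textup{NR}}\|_2-\|\zeta_{\textup{R}}\|_2$ via the reverse triangle inequality (which also delivers the matching lower bound), whereas you bound $\|\zeta_{\textup{NR}}\|_2$ directly and use nonnegativity of $\|\zeta_{\textup{R}}\|_2$---a slightly more elementary route to the one-sided inequality stated in the theorem.
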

\begin{hproof}
    
    The proof follows from Def.~\ref{def:kernel-matrix}, where we use data processing, Taylor expansion, and triangle inequalities to bound the difference between costs of recourses output by SCFE for \vanilla vs. \robust wide neural network models. See Appendix~\ref{sec:proof-thm2-full} for the complete proof.
\end{hproof}
\textit{Implications:} The proposed bounds imply that the difference in costs is bounded by the harmonic mean of the NTK models weights of non-robust and robust models, \textit{i.e.,} the bound is tighter for larger harmonic means, and vice-versa. In particular, the norm of the weight of the \vanilla and \robust NTK model is large if the gradient of NTK associated with the respective model is large.

\paragraph{Cost Analysis of recourses generated using C-CHVAE method.} We extend our analysis of the cost difference for recourses generated using manifold-based methods for \vanilla and \robust models. In particular, we leverage \chvae that leverages variational autoencoders to generate counterfactuals. For a fair comparison, we assume that both models use the same encoder $\cI_{\gamma}$ and decoder $\cG_{\theta}$ networks for learning the latent space of the given input space $\cX$.

\begin{definition} (\citet{bora2017compressed})
An encoder model $\cI$ is $L$-Lipschitz if $~\forall \bz_{1},\bz_{2} \in \cZ$, we have:
\begin{equation}
    \|\cI(\bz_{1}) - \cI(\bz_{2})\|_{p} \leq L \|\bz_{1}-\bz_{2}\|_{p}.
    \label{eq:generative}
\end{equation}
\label{def:generative}
\end{definition}
Next, we derive the bounds of the cost difference of recourses generated for \vanilla and \robust models using Eqn.~\ref{eq:generative}.
\begin{theorem} (Cost difference for C-CHVAE)
    Let $\bz_{\textup{NR}}'$ and $\bz_{\textup{R}}'$ be the solution returned by the \chvae recourse method by sampling from $\ell_{p}$-norm ball in the latent space using an L$_{G}$-Lipschitz decoder $\cG(\cdot)$ for a \vanilla and \robust model. By definition of the recourse method, let $\bx_{\textup{NR}}'{=}\cG(\bz_{\textup{NR}}'){=}\bx+\zeta_{\textup{NR}}$ and $\bx_{\textup{R}}'{=}\cG(\bz_{\textup{R}}'){=}\bx+\zeta_{\textup{R}}$ be the respective recourses in the input space whose difference can then be bounded as:
    \begin{equation}
         \|\zeta_{\textup{NR}}\|_2 - \|\zeta_{\textup{R}}\|_2 \leq \Big|~L_{G}(r_{\textup{R}}+r_{\textup{NR}})~\Big|,
         \label{eq:proof_cost_difference}
    \end{equation}
    where $r_{\textup{NR}}$ and $r_{\textup{R}}$ be the corresponding radii chosen by the algorithm such that they successfully return a recourse for the \vanilla and \robust model.
    \label{thm:cost-bound-cchvae-sketch}
\end{theorem}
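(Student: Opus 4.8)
The plan is to transport the trivial bound that holds in the latent space out to the input space, using only the Lipschitz continuity of the shared decoder together with two applications of the triangle inequality; no property of the classifiers $f_{\textup{NR}}$ and $f_{\textup{R}}$ is needed beyond the fact that they determine which ball radii get accepted by the search. First I would fix notation: let $\bz = \cI_\gamma(\bx)$ be the (common) latent encoding of the query point, and let $\zeta_{z,\textup{NR}},\zeta_{z,\textup{R}} \in \cZ$ be the latent displacements that \chvae actually returns for the \vanilla and \robust models, so that $\bz_{\textup{NR}}' = \bz + \zeta_{z,\textup{NR}}$ and $\bz_{\textup{R}}' = \bz + \zeta_{z,\textup{R}}$. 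By construction of the search procedure (it inflates an $\ell_p$ ball around $\bz$ until it contains a latent point whose decoded image flips the label), the accepted displacement lies inside the ball of the accepted radius; I would record this as the defining property of $r_{\textup{NR}}$ and $r_{\textup{R}}$, namely $\|\zeta_{z,\textup{NR}}\|_p \le r_{\textup{NR}}$ and $\|\zeta_{z,\textup{R}}\|_p \le r_{\textup{R}}$.

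Next I would write the input-space costs as decoded latent points, $\zeta_{\textup{NR}} = \cG(\bz + \zeta_{z,\textup{NR}}) - \bx$ and $\zeta_{\textup{R}} = \cG(\bz + \zeta_{z,\textup{R}}) - \bx$, and observe that $\bx$ — and with it any reconstruction error of the autoencoder — cancels on differencing: $\zeta_{\textup{NR}} - \zeta_{\textup{R}} = \cG(\bz + \zeta_{z,\textup{NR}}) - \cG(\bz + \zeta_{z,\textup{R}})$. The chain is then: the reverse triangle inequality gives $\|\zeta_{\textup{NR}}\|_2 - \|\zeta_{\textup{R}}\|_2 \le \|\zeta_{\textup{NR}} - \zeta_{\textup{R}}\|_2$; the Lipschitz hypothesis (Def.~\ref{def:generative}), applied to the decoder $\cG$, bounds this by $L_G\,\|(\bz+\zeta_{z,\textup{NR}}) - (\bz+\zeta_{z,\textup{R}})\|_p = L_G\,\|\zeta_{z,\textup{NR}} - \zeta_{z,\textup{R}}\|_p$; and a final triangle inequality together with the radius bounds yields $L_G(\|\zeta_{z,\textup{NR}}\|_p + \|\zeta_{z,\textup{R}}\|_p) \le L_G(r_{\textup{NR}} + r_{\textup{R}})$. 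Since the right-hand side is already nonnegative, wrapping it in absolute values reproduces the stated form $|\,L_G(r_{\textup{R}} + r_{\textup{NR}})\,|$.

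The only place that needs genuine care — the closest thing to an obstacle — is the norm bookkeeping: Def.~\ref{def:generative} is phrased for a generic $\ell_p$, while the theorem measures recourse cost in $\ell_2$, so strictly the Lipschitz step is clean only when $p=2$ (which is the case \chvae's latent sphere search uses). Otherwise I would insert finite-dimensional norm-equivalence constants between $\ell_p$ and $\ell_2$ and absorb them into $L_G$, remarking that this only rescales the constant and leaves the qualitative conclusion intact. Everything else is a mechanical sequence of triangle inequalities, which is exactly why this bound — unlike the SCFE bounds — is expressed through the algorithmic radii $r_{\textup{NR}}, r_{\textup{R}}$ rather than through an explicit weight difference.
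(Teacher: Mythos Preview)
Your proposal is correct and follows essentially the same strategy as the paper: reverse triangle inequality, Lipschitz continuity of the decoder, triangle inequality, and the latent radius bounds. The only difference is ordering: the paper splits $\cG(\bz_{\textup{R}}') - \cG(\bz_{\textup{NR}}')$ through the intermediate point $\bx = \cG(\bz)$ and applies Lipschitz twice, whereas you apply Lipschitz once directly to the difference and then split in the latent space. Your variant is marginally cleaner because it does not silently assume perfect reconstruction $\bx = \cG(\cI_\gamma(\bx))$, and you are right to flag the $\ell_p$ versus $\ell_2$ mismatch, which the paper leaves implicit.
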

\begin{hproof}
    The proof follows from Def.~\ref{def:generative}, triangle inequality, L-Lipschitzness of the generative model, and the fact that the $\ell_p$-norm of the model’s outputs are known in the latent space. See Appendix~\ref{sec:proof-thm3} for detailed proof.
\end{hproof}
\textit{Implications:} The right term in the Eqn.~\ref{eq:proof_cost_difference} entails that the $\ell_p$-norm of the difference between the generated recourses using \chvae is bounded if, 1) the Lipschitz constant of the decoder is small, and 2) 
the sum of the radii determined by \chvae to successfully generate a recourse, such that they successfully return a recourse for the \vanilla and \robust model, is smaller.

\subsection{Validity Analysis}
\label{sec:validity}
The validity of a recourse $\bx'$ is defined as the probability that it results in the desired outcome~\citep{verma2020counterfactual}, \ie, $\Pr(f(\bx') = 1)$. Below, we analyze the validity of the recourses generated for linear models and using Lemma~\ref{thm:weight-linear-sketch} show that it higher for \vanilla models.

\begin{theorem} (Validity comparison for linear model) For a given instance $\mathbf{x} \in \mathbb{R}^{d}$ and desired target label denoted by unity, let $\bx_{\textup{R}}'$ and $\bx_{\textup{NR}}'$ be the counterfactuals for \robust $f_{\textup{R}}(\bx)$ and \vanilla $f_{\textup{NR}}(\bx)$ models, respectively. Then, $\Pr(f_{\textup{NR}}(\bx_{\textup{NR}}') = 1) \geq \Pr(f_{\textup{R}}(\bx_{\textup{R}}') = 1)$ if $|f_{\textup{NR}}(\bx_{\textup{R}}') - f_{\textup{NR}}(\bx_{\textup{NR}}')| \leq \Delta \norm{\bx_{\textup{R}}'}_{2}$. Here, the validity is denoted by $\Pr(f(\bx') = 1)$, which is the probability that the counterfactual $\bx'$ results in the desired outcome.
\label{thm:valid-linear-sketch}
\end{theorem}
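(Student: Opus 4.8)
The plan is to convert the probabilistic claim into a deterministic inequality between the two models' logits and then control the logit gap by a telescoping argument that invokes Lemma~\ref{thm:weight-linear-sketch} for the weight part and the stated hypothesis for the point part. For a linear model the validity is $\Pr(f(\bx')=1)=\sigma(\bw^{\top}\bx')$ with $\sigma$ the normalized-Lipschitz (monotone increasing) link, so $\Pr(f_{\textup{NR}}(\bx'_{\textup{NR}})=1)\ge\Pr(f_{\textup{R}}(\bx'_{\textup{R}})=1)$ is equivalent to $f_{\textup{NR}}(\bx'_{\textup{NR}})\ge f_{\textup{R}}(\bx'_{\textup{R}})$, and it suffices to lower-bound the logit gap by zero.

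For the logit gap I would telescope through the point $\bx'_{\textup{R}}$:
\begin{equation}
f_{\textup{NR}}(\bx'_{\textup{NR}})-f_{\textup{R}}(\bx'_{\textup{R}})=\big(f_{\textup{NR}}(\bx'_{\textup{NR}})-f_{\textup{NR}}(\bx'_{\textup{R}})\big)+\big(f_{\textup{NR}}(\bx'_{\textup{R}})-f_{\textup{R}}(\bx'_{\textup{R}})\big).
\end{equation}
The second bracket equals $(\bw_{\textup{NR}}-\bw_{\textup{R}})^{\top}\bx'_{\textup{R}}$ since both models are evaluated at the same point; Cauchy--Schwarz together with Lemma~\ref{thm:weight-linear-sketch} gives $|(\bw_{\textup{NR}}-\bw_{\textup{R}})^{\top}\bx'_{\textup{R}}|\le\|\bw_{\textup{NR}}-\bw_{\textup{R}}\|_{2}\|\bx'_{\textup{R}}\|_{2}\le\Delta\|\bx'_{\textup{R}}\|_{2}$. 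The first bracket is a single model evaluated at two counterfactuals, and the assumed bound $|f_{\textup{NR}}(\bx'_{\textup{R}})-f_{\textup{NR}}(\bx'_{\textup{NR}})|\le\Delta\|\bx'_{\textup{R}}\|_{2}$ is exactly what keeps it from dragging the gap below zero. To pin down signs I would substitute the SCFE closed form from Definition~\ref{thm:definition-optimal}: for $f(\bx)=\bw^{\top}\bx$ the optimal counterfactual score $f(\bx')$ is a convex combination of the target $s$ and $f(\bx)$ with weight on $s$ that is increasing in $\|\bw\|_{2}$ whenever the instance is negatively classified ($f(\bx)$ below $s$), so the non-robust recourse is the one pushed closer to the target; combined with the two displayed bounds this yields $f_{\textup{NR}}(\bx'_{\textup{NR}})\ge f_{\textup{R}}(\bx'_{\textup{R}})$, and monotonicity of $\sigma$ finishes the argument.

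The routine bookkeeping with Lemma~\ref{thm:weight-linear-sketch} and Cauchy--Schwarz is not where the difficulty lies; the genuine obstacle is the term $f_{\textup{NR}}(\bx'_{\textup{NR}})-f_{\textup{NR}}(\bx'_{\textup{R}})$, because a blunt triangle-inequality split of the logit gap would lose a spurious factor of two. One therefore has to exploit the explicit structure of the SCFE optimum --- its monotone dependence on $\|\bw\|_{2}$ for negatively-classified instances --- to see that this term and the weight-difference term cannot both be adversarially negative, and that the inequality in the statement is precisely the sufficient condition that makes the two slacks cancel. A smaller point worth fixing first is the exact reading of ``$\Pr(f(\bx')=1)$'' for a real-valued $f$; adopting the sigmoid-link interpretation is what makes the monotone reduction in the first step clean and unambiguous.
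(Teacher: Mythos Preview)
Your reduction to the logit inequality via the monotone sigmoid link and your telescoping through $\bx'_{\textup{R}}$ are exactly the paper's argument: after rewriting $\bw_{\textup{NR}}^{\top}\bx'_{\textup{NR}}\ge\bw_{\textup{R}}^{\top}\bx'_{\textup{R}}$ as $\bw_{\textup{NR}}^{\top}(\bx'_{\textup{R}}-\bx'_{\textup{NR}})\le(\bw_{\textup{NR}}-\bw_{\textup{R}})^{\top}\bx'_{\textup{R}}$, the paper bounds the right-hand side by Cauchy--Schwarz and Lemma~\ref{thm:weight-linear-sketch}, just as you do.

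Where you diverge is the last step. The paper does \emph{not} appeal to the SCFE closed form or any structural property of the recourses to fix signs; it simply passes to absolute values on both sides of the rearranged scalar inequality and reads off $|f_{\textup{NR}}(\bx'_{\textup{R}})-f_{\textup{NR}}(\bx'_{\textup{NR}})|\le\Delta\|\bx'_{\textup{R}}\|_{2}$ as the stated sufficient condition. Your SCFE detour is thus extra machinery the paper neither uses nor needs, and it is also outside the theorem's scope: the statement places no restriction on how $\bx'_{\textup{R}}$ and $\bx'_{\textup{NR}}$ are produced, so Definition~\ref{thm:definition-optimal} cannot legitimately be invoked here. The sign concern you flag is real---replacing a scalar inequality by a norm inequality is not a valid implication in general---but the paper's derivation proceeds exactly that way, so to match it you should drop the SCFE argument and stop after the Cauchy--Schwarz/Lemma~\ref{thm:weight-linear-sketch} bound.
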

\begin{hproof}
    We derive the conditions under which the probability of obtaining a valid recourse is higher for a \vanilla model compared to its \robust counterpart using Lemma~\ref{thm:weight-linear-sketch}, natural logarithms, data processing, and Cauchy-Schwartz inequalities. See Appendix~\ref{app:validity-linear} for the complete proof.
\end{hproof}
\textit{Implications:} We show that the condition for the validity is dependent on the weight difference $\Delta$ of the models (from Lemma~\ref{thm:weight-linear-sketch}). Formally, the validity of \vanilla models will be greater than or equal to that of \robust models only if the difference between the prediction of the \vanilla model on $\bx_{\textup{NR}}'$ and $\bx_{\textup{R}}'$ is bounded by $\Delta$ times the $\ell_{2}$-norm of $\bx_{\textup{R}}'$.

Next, we bound the weight difference of \vanilla and \robust wide neural networks.

\begin{figure*}[t!]
        \begin{flushleft}
            \footnotesize
            \hspace{2.0cm}German-Credit dataset\hspace{3.2cm}Adult dataset\hspace{4.5cm}COMPAS dataset
        \end{flushleft}
        \centering
        \begin{subfigure}[b]{0.32\textwidth}
            \centering
            \includegraphics[width=\textwidth]{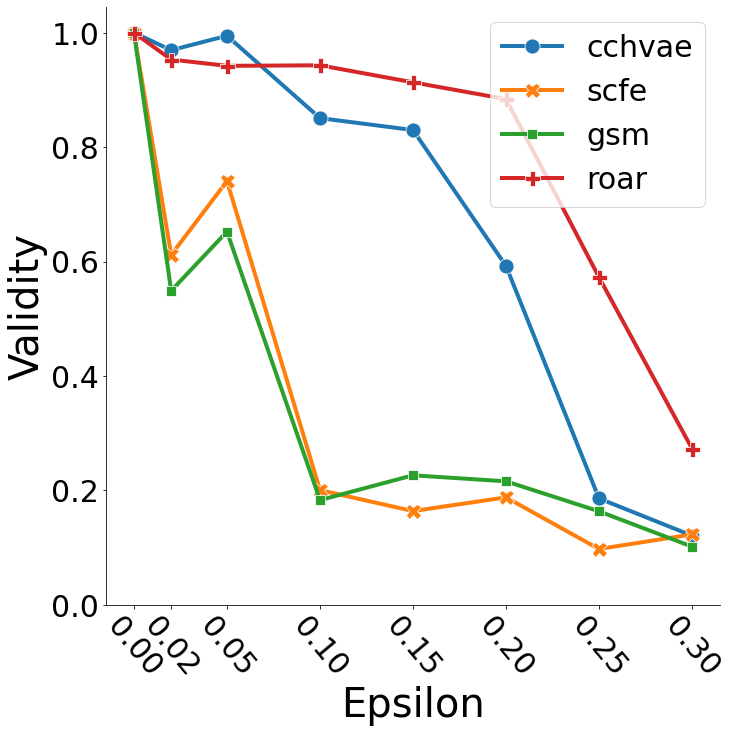}
            \label{fig:cost-adult}
        \end{subfigure}
        \begin{subfigure}[b]{0.32\textwidth}
            \centering
            \includegraphics[width=\textwidth]{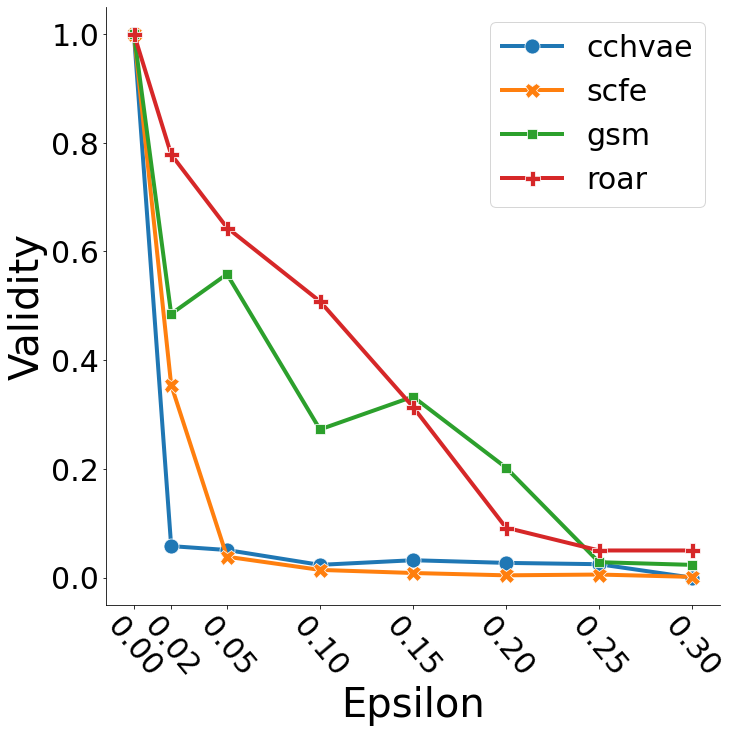}
            \label{fig:cost-compas}
        \end{subfigure}
        \begin{subfigure}[b]{0.32\textwidth}
            \centering
            \includegraphics[width=\textwidth]{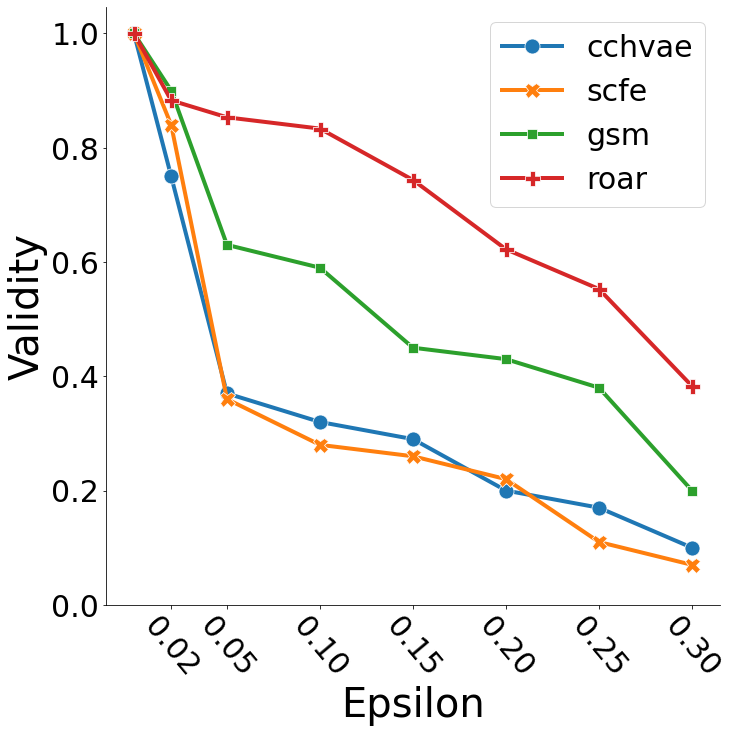}
            \label{fig:validity-adult}
        \end{subfigure}
       
        \caption{Analyzing validity differences between recourses generated using \vanilla and \robust wide neural neural networks for German Credit, Adult, and COMPAS datasets. We find that the validity decreases for increasing values of $\epsilon$. Refer to Appendix~\ref{app:wide_nn} for similar results on larger neural networks.
        }
        \label{fig:all-cost-non-linear}
\end{figure*}

\begin{lemma}(Difference between \vanilla and \robust weights for wide neural network models) For a given NTK model, let $\bw_{\textup{NR}}^{\textup{NTK}}$ and $\bw_{\textup{R}}^{\textup{NTK}}$ be weights of the \vanilla and \robust model. Then, for a wide neural network model with ReLU activations, the difference in the weights can be bounded as:
\begin{equation}
    \|\bw^{\textup{NTK}}_{\textup{NR}}{-}\bw^{\textup{NTK}}_{\textup{R}}\|_2 \leq \Delta_{\textup{K}} \|\bY\|_2
\end{equation}
where $\Delta_{\textup{K}} {=} \|(\bK^{\infty}(\bX, \bX) {+} \beta\bI_{n})^{-1} {-} (\bK^{\infty}(\bX_{\textup{R}}, \bX_{\textup{R}}){+}\\\beta\bI_{n})^{-1}\|_2$, $\bK^{\infty}$ is the kernel matrix for the NTK model defined in Def.~\ref{def:kernel-matrix}, $(\bX, \bX_{\textup{R}})$ are the training samples for the \vanilla and \robust NTK models, $\beta$ is the bias of the ReLU NTK model, and $\bY$ are the labels of the training samples. Subsequently, we show that $\|\bw^{\textup{NTK}}_{\textup{NR}}\|_2 - \Delta_{\textup{K}}\|\bY\|_2 \leq \|\bw^{\textup{NTK}}_{\textup{R}}\|_2 \leq \|\bw^{\textup{NTK}}_{\textup{NR}}\|_2 + \Delta_{\textup{K}}\|\bY\|_2$. 
\label{thm:weight-non-linear-sketch}
\end{lemma}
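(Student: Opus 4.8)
The plan is to proceed directly from the closed-form NTK weights already introduced in Theorem~\ref{thm:cost-bound-non-linear-sketch}, namely $\bw^{\textup{NTK}}_{\textup{NR}} = (\bK^{\infty}(\bX, \bX) + \beta\bI_{n})^{-1}\bY$ and $\bw^{\textup{NTK}}_{\textup{R}} = (\bK^{\infty}(\bX_{\textup{R}}, \bX_{\textup{R}}) + \beta\bI_{n})^{-1}\bY$, treating the clean and adversarially perturbed training inputs $\bX$ and $\bX_{\textup{R}}$ as fixed once the perturbations are computed. Both inverses are well defined: $\bK^{\infty}$ is positive semidefinite (it is a Gram matrix of the NTK feature map associated with the arccos kernel of Def.~\ref{def:kernel-matrix}), and since $\beta>0$ each matrix $\bK^{\infty}(\cdot,\cdot)+\beta\bI_{n}$ is symmetric positive definite, hence invertible. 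This well-posedness is the only regularity fact needed before the main computation.

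First I would subtract the two closed forms and factor out the common label vector, obtaining
\[
\bw^{\textup{NTK}}_{\textup{NR}} - \bw^{\textup{NTK}}_{\textup{R}} = \Big[(\bK^{\infty}(\bX, \bX) + \beta\bI_{n})^{-1} - (\bK^{\infty}(\bX_{\textup{R}}, \bX_{\textup{R}}) + \beta\bI_{n})^{-1}\Big]\bY .
\]
Taking $\ell_2$ norms on both sides and applying the standard operator-norm inequality $\|\bA\bv\|_2 \le \|\bA\|_2\|\bv\|_2$ yields exactly $\|\bw^{\textup{NTK}}_{\textup{NR}} - \bw^{\textup{NTK}}_{\textup{R}}\|_2 \le \Delta_{\textup{K}}\|\bY\|_2$, with $\Delta_{\textup{K}}$ as in the statement. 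For the two-sided bound on $\|\bw^{\textup{NTK}}_{\textup{R}}\|_2$, I would invoke the reverse triangle inequality $\big|\,\|\bw^{\textup{NTK}}_{\textup{NR}}\|_2 - \|\bw^{\textup{NTK}}_{\textup{R}}\|_2\,\big| \le \|\bw^{\textup{NTK}}_{\textup{NR}} - \bw^{\textup{NTK}}_{\textup{R}}\|_2$ and combine it with the bound just derived, giving $\|\bw^{\textup{NTK}}_{\textup{NR}}\|_2 - \Delta_{\textup{K}}\|\bY\|_2 \le \|\bw^{\textup{NTK}}_{\textup{R}}\|_2 \le \|\bw^{\textup{NTK}}_{\textup{NR}}\|_2 + \Delta_{\textup{K}}\|\bY\|_2$.

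The linear algebra above is routine; the genuine obstacle is justifying the closed-form NTK weights in the \robust case. One must argue that adversarial ($\ell_\infty$) training of a sufficiently wide two-layer ReLU network still operates in the lazy/kernel regime, so that gradient descent on the perturbed training set converges to the kernel-ridge-regression solution with Gram matrix $\bK^{\infty}(\bX_{\textup{R}}, \bX_{\textup{R}})$. I would handle this by appealing to the NTK convergence results cited in Def.~\ref{def:kernel-matrix} (Du et al.; Zhang et al.), observing that $\bX_{\textup{R}}$ is merely another fixed finite dataset once the $\epsilon$-perturbations have been fixed, and by carrying over the closed form already adopted as the working assumption in Theorem~\ref{thm:cost-bound-non-linear-sketch}. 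As an optional refinement I would note that $\Delta_{\textup{K}}$ itself can be related back to the input-space perturbation radius $\epsilon$ via Lipschitz continuity of matrix inversion together with Lipschitz continuity of the arccos-based kernel in $\bX$, but since the lemma states the bound directly in terms of $\Delta_{\textup{K}}$, this extra step is not required for the proof.
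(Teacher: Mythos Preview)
Your proposal is correct and follows essentially the same route as the paper: subtract the two closed-form NTK solutions, factor out $\bY$, apply the submultiplicative norm inequality (the paper labels this step ``Cauchy--Schwartz''), and then use the reverse triangle inequality for the two-sided bound on $\|\bw^{\textup{NTK}}_{\textup{R}}\|_2$. Your additional remarks on well-posedness of the regularized kernel inverses and on justifying the closed form in the \robust regime go beyond what the paper argues, but they are sound and do not alter the approach.
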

\begin{hproof}
    We derive the bound for the weight of the adversarially robust NTK model using the closed-form expression for the NTK weights. Using Cauchy-Schwartz and reverse triangle inequality, we prove that the $\ell_2$-norm of the difference between the \vanilla and \robust NTK model weights is upper bounded by the difference between the kernel matrix $\bK^{\infty}$ of the two models. See Appendix~\ref{sec:proof-lemma2-full} for detailed proof.
\end{hproof}
\textit{Implications:} Lemma~\ref{thm:weight-non-linear-sketch} implies that the bound is tight if the generated adversarial samples $\bX_{\text{R}}$ are very close to the original samples, \ie, the degree of robustness of the \robust model is small.

Next, we show that the validity of recourses generated for \vanilla wide neural network models is higher than their \robust counterparts. 

\begin{theorem} (Validity Comparison for wide neural network) For a given instance $\bx \in \mathbb{R}^{d}$ and desired target label denoted by unity, let $\bx_{\textup{R}}'$ and $\bx_{\textup{NR}}'$ be the counterfactuals for \robust $f_{\textup{R}}^{\textup{NTK}}(\bx)$ and \vanilla $f_{\textup{NR}}^{\textup{NTK}}(\bx)$ wide neural network models respectively. Then, $\Pr(f_{\textup{NR}}^{\textup{NTK}}(\bx_{\textup{NR}}') = 1) \geq \Pr(f_{\textup{R}}^{\textup{NTK}}(\bx_{\textup{R}}') = 1)$ if $\norm{(\bK^{\infty}(\bx_{\textup{R}}', \bX_{\textup{R}}) - \bK^{\infty}(\bx_{\textup{NR}}', \bX) )^{\textup{T}} \bw_{\textup{NR}}^{\textup{NTK}}} \leq  \norm{\bK^{\infty}(\bx_{\textup{R}}', \bX_{\textup{R}})^{\textup{T}}} \Delta_{\textup{K}} \|\bY\|_2 $. Here, the validity is denoted by $\Pr(f(\bx') = 1)$, which is the probability that the counterfactual $\bx'$ results in the desired outcome.
\label{thm:valid-nonlinear-sketch}
\end{theorem}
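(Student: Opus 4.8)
The plan is to reduce the probabilistic claim to a deterministic comparison of the two wide-network scores and then control that comparison with the weight inequality of Lemma~\ref{thm:weight-non-linear-sketch}. Writing the validity $\Pr(f(\bx')=1)$ as the composition of the network output with a logistic link $\sigma$ and using that $\sigma$ is strictly increasing, the desired inequality $\Pr(f_{\textup{NR}}^{\textup{NTK}}(\bx_{\textup{NR}}')=1) \ge \Pr(f_{\textup{R}}^{\textup{NTK}}(\bx_{\textup{R}}')=1)$ becomes equivalent to $f_{\textup{NR}}^{\textup{NTK}}(\bx_{\textup{NR}}') \ge f_{\textup{R}}^{\textup{NTK}}(\bx_{\textup{R}}')$ --- the same passage through natural logarithms used in Theorem~\ref{thm:valid-linear-sketch}. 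I then substitute the kernel-ridge form of the infinitely wide network, $f_{\textup{NR}}^{\textup{NTK}}(\bx_{\textup{NR}}') = \bK^{\infty}(\bx_{\textup{NR}}',\bX)^{\textup{T}}\bw_{\textup{NR}}^{\textup{NTK}}$ and $f_{\textup{R}}^{\textup{NTK}}(\bx_{\textup{R}}') = \bK^{\infty}(\bx_{\textup{R}}',\bX_{\textup{R}})^{\textup{T}}\bw_{\textup{R}}^{\textup{NTK}}$, with $\bw_{\textup{NR}}^{\textup{NTK}},\bw_{\textup{R}}^{\textup{NTK}}$ exactly as in the statement.

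Next I would insert the hybrid quantity $\bK^{\infty}(\bx_{\textup{R}}',\bX_{\textup{R}})^{\textup{T}}\bw_{\textup{NR}}^{\textup{NTK}}$ to split the score gap into a feature-difference term and a weight-difference term:
\begin{equation}
\begin{aligned}
f_{\textup{NR}}^{\textup{NTK}}(\bx_{\textup{NR}}') - f_{\textup{R}}^{\textup{NTK}}(\bx_{\textup{R}}') &= \big(\bK^{\infty}(\bx_{\textup{NR}}',\bX) - \bK^{\infty}(\bx_{\textup{R}}',\bX_{\textup{R}})\big)^{\textup{T}}\bw_{\textup{NR}}^{\textup{NTK}} \\
&\quad+ \bK^{\infty}(\bx_{\textup{R}}',\bX_{\textup{R}})^{\textup{T}}\big(\bw_{\textup{NR}}^{\textup{NTK}} - \bw_{\textup{R}}^{\textup{NTK}}\big).
\end{aligned}
\end{equation}
The weight-difference term is handled by Cauchy--Schwarz and then Lemma~\ref{thm:weight-non-linear-sketch}: its magnitude is at most $\norm{\bK^{\infty}(\bx_{\textup{R}}',\bX_{\textup{R}})^{\textup{T}}}\,\norm{\bw_{\textup{NR}}^{\textup{NTK}} - \bw_{\textup{R}}^{\textup{NTK}}} \le \norm{\bK^{\infty}(\bx_{\textup{R}}',\bX_{\textup{R}})^{\textup{T}}}\,\Delta_{\textup{K}}\norm{\bY}_2$. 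The hypothesis of the theorem bounds the feature-difference term by this same quantity and is set up precisely so that the feature-difference term offsets the worst-case negative value of the weight-difference term; substituting both estimates into the display then yields a nonnegative score gap, i.e.\ $f_{\textup{NR}}^{\textup{NTK}}(\bx_{\textup{NR}}') \ge f_{\textup{R}}^{\textup{NTK}}(\bx_{\textup{R}}')$, and hence the validity inequality after re-applying the monotone link.

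Two remarks on the difficulty. Unlike the linear case, here \emph{two} perturbations enter at once --- the fitted weights change from $\bw_{\textup{NR}}^{\textup{NTK}}$ to $\bw_{\textup{R}}^{\textup{NTK}}$ and the kernel feature map changes because the robust predictor is trained on the perturbed design $\bX_{\textup{R}}$ rather than $\bX$ --- which is exactly why the sufficient condition couples $\Delta_{\textup{K}}$ with the two-counterfactual kernel-vector difference rather than a single norm as in Theorem~\ref{thm:valid-linear-sketch}. The step I expect to be the main obstacle is this final closure: showing that the stated hypothesis genuinely forces the triangle-inequality lower bound on the score gap to be $\ge 0$ (rather than merely $\ge$ a negative multiple of $\norm{\bK^{\infty}(\bx_{\textup{R}}',\bX_{\textup{R}})^{\textup{T}}}\Delta_{\textup{K}}\norm{\bY}_2$) requires carefully tracking the direction of each inequality and, most likely, a one-sided estimate exploiting that the feature-difference term is signed favorably against the weight-difference term; a secondary care point is justifying the probability-to-score reduction, which relies on both models sharing the same monotone link.
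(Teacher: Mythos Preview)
Your proposal is essentially identical to the paper's proof: both reduce the probability comparison to the score comparison via monotonicity of the sigmoid, perform the same add-and-subtract decomposition with the hybrid term $\bK^{\infty}(\bx_{\textup{R}}',\bX_{\textup{R}})^{\textup{T}}\bw_{\textup{NR}}^{\textup{NTK}}$, and then bound the weight-difference piece by Cauchy--Schwarz together with Lemma~\ref{thm:weight-non-linear-sketch}. The ``main obstacle'' you flag---tracking signs so that the hypothesis forces a nonnegative score gap---is exactly the step the paper dispatches by passing to norms on both sides of the rearranged scalar inequality; your instinct that this closure is the loose point is well-founded, but it is precisely what the paper does.
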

\begin{hproof}
     We extend Theorem~\ref{thm:weight-non-linear-sketch} by deriving an analogous condition for wide neural network models using Lemma~\ref{thm:weight-non-linear-sketch}, natural logarithms, data processing, and Cauchy-Schwartz inequalities. See Appendix~\ref{app:validity-nonlinear} for the complete proof.
\end{hproof}
\textit{Implications:} Our derived conditions show that if the difference between the NTK $\mathbf{K}^{\infty}$ associated with the non-robust and adversarially robust model is bounded (i.e., $\bK^{\infty}(\bx_{\textup{R}}', \bX_{\text{R}})  \approx  \bK^{\infty}(\bx_{\textup{NR}}', \bX)$), then it is likely to have a validity greater than or equal to that of its adversarial robust counterpart. Further, we show that this bound is tighter for smaller $\Delta_{\textup{K}}$, and vice-versa.

\begin{figure*}[t!]
        \begin{flushleft}
            \footnotesize
            \hspace{2.0cm}German-Credit dataset\hspace{3.4cm}Adult dataset\hspace{4.0cm}COMPAS dataset
        \end{flushleft}
        \centering
        \begin{subfigure}[b]{0.32\textwidth}
            \centering
            \includegraphics[width=\textwidth]{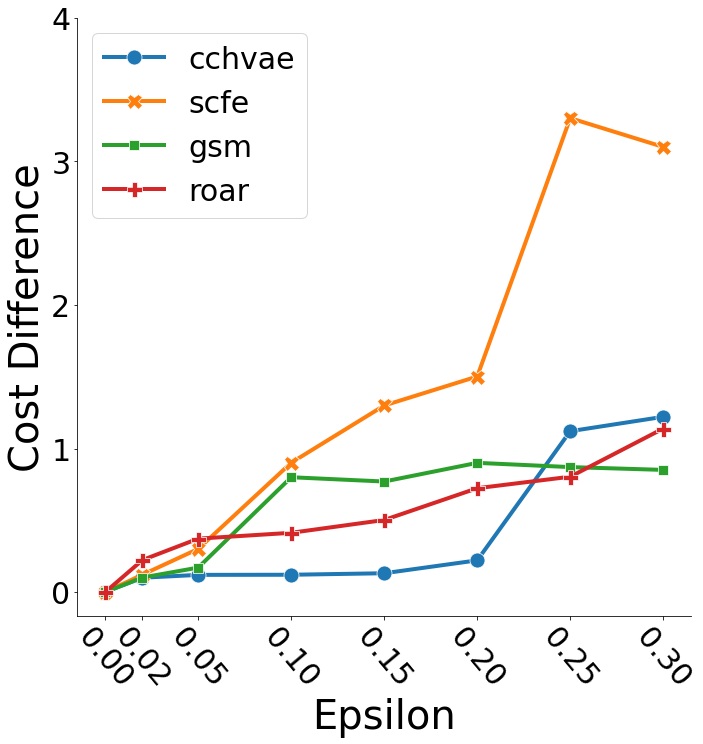}
            \label{fig:cost-adult}
        \end{subfigure}
        \begin{subfigure}[b]{0.32\textwidth}
            \centering
            \includegraphics[width=\textwidth]{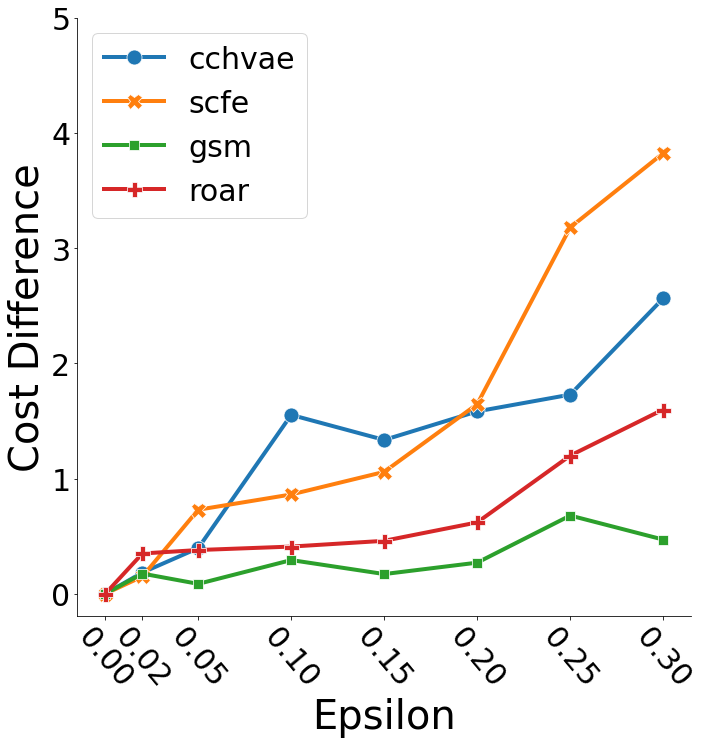}
            \label{fig:cost-compas}
        \end{subfigure}
        \begin{subfigure}[b]{0.32\textwidth}
            \centering
            \includegraphics[width=\textwidth]{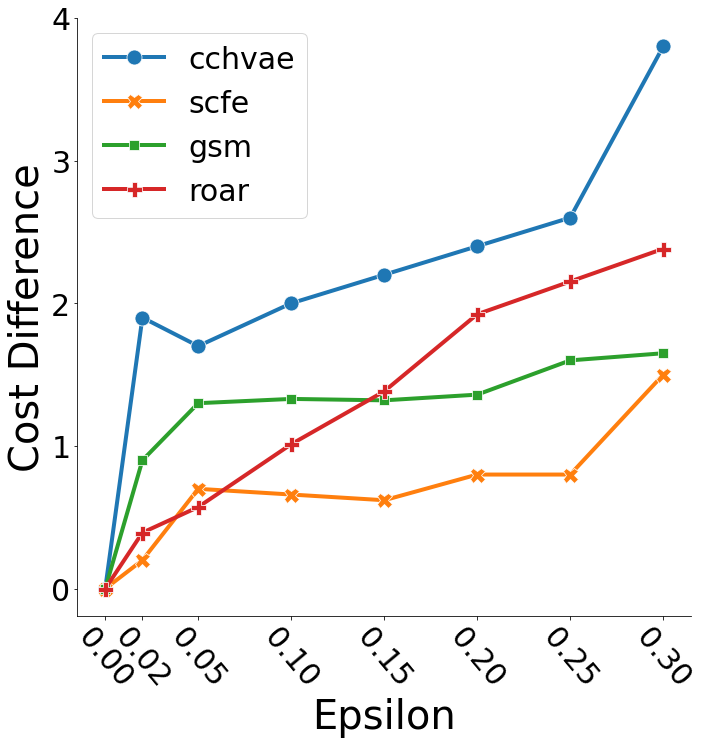}
            \label{fig:validity-adult}
        \end{subfigure}
       
        \caption{Analyzing cost differences between recourses generated using \vanilla and \robust wide neural neural networks for German Credit, Adult, and COMPAS datasets. We find that the cost difference (i.e., $\ell_{2}-$norm) between the recourses generated for \vanilla and \robust models increases for increasing values of $\epsilon$. Refer to Figure~\ref{fig:all-cost-non-linear-large} for similar results on larger neural networks.
        }
        \label{fig:all-cost-non-linear}
\end{figure*}

\begin{figure*}[t!]
        \begin{flushleft}
            \footnotesize
            \hspace{2.0cm}German-Credit dataset\hspace{3.4cm}Adult dataset\hspace{4.0cm}COMPAS dataset
        \end{flushleft}
        \centering
        \begin{subfigure}[b]{0.32\textwidth}
            \centering
            \includegraphics[width=\textwidth]{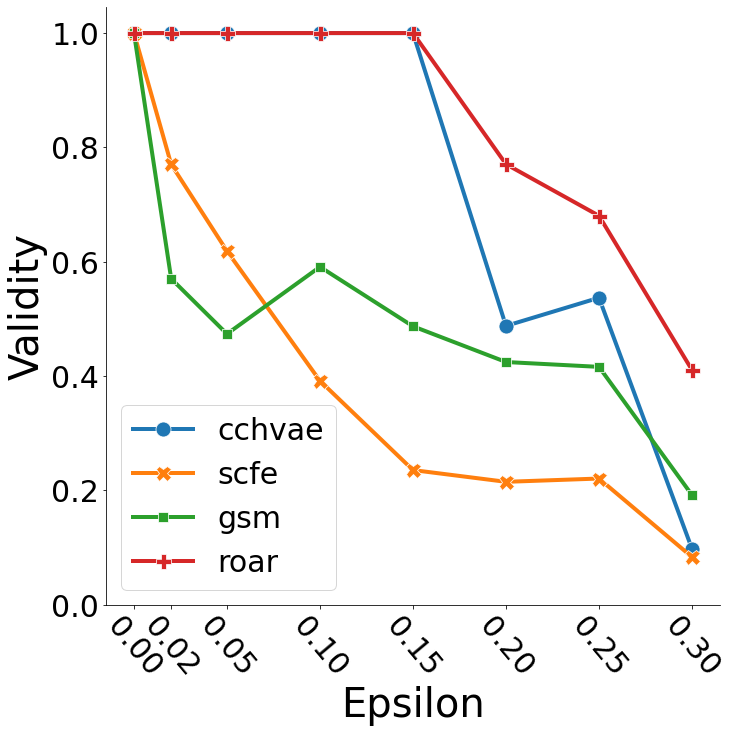}
            \label{fig:cost-adult}
        \end{subfigure}
        \begin{subfigure}[b]{0.32\textwidth}
            \centering
            \includegraphics[width=\textwidth]{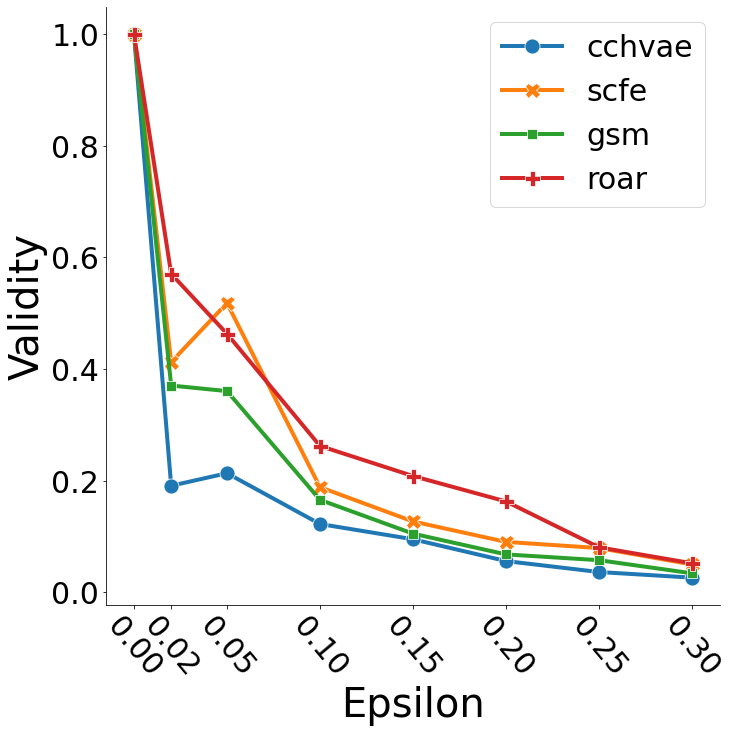}
            \label{fig:cost-compas}
        \end{subfigure}
        \begin{subfigure}[b]{0.32\textwidth}
            \centering
            \includegraphics[width=\textwidth]{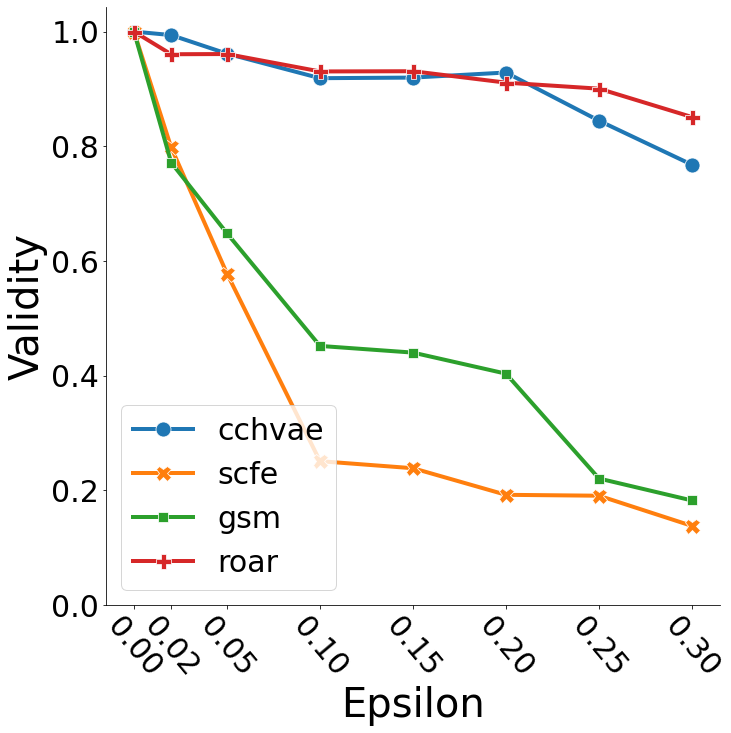}
            \label{fig:validity-adult}
        \end{subfigure}
       
        \caption{Analyzing validity differences between recourses generated using \vanilla and \robust logistic regression for German Credit, Adult, and COMPAS datasets. We find that the validity decreases for increasing values of $\epsilon$. Refer to Appendix~\ref{app:wide_nn} for similar results on larger neural networks.
        }
        \label{fig:all-cost-non-linear}
\end{figure*}

\begin{figure*}[t!]
        \begin{flushleft}
            \footnotesize
            \hspace{2.0cm}German-Credit dataset\hspace{3.4cm}Adult dataset\hspace{4.0cm}COMPAS dataset
        \end{flushleft}
        \centering
        \begin{subfigure}[b]{0.32\textwidth}
            \centering
            \includegraphics[width=\textwidth]{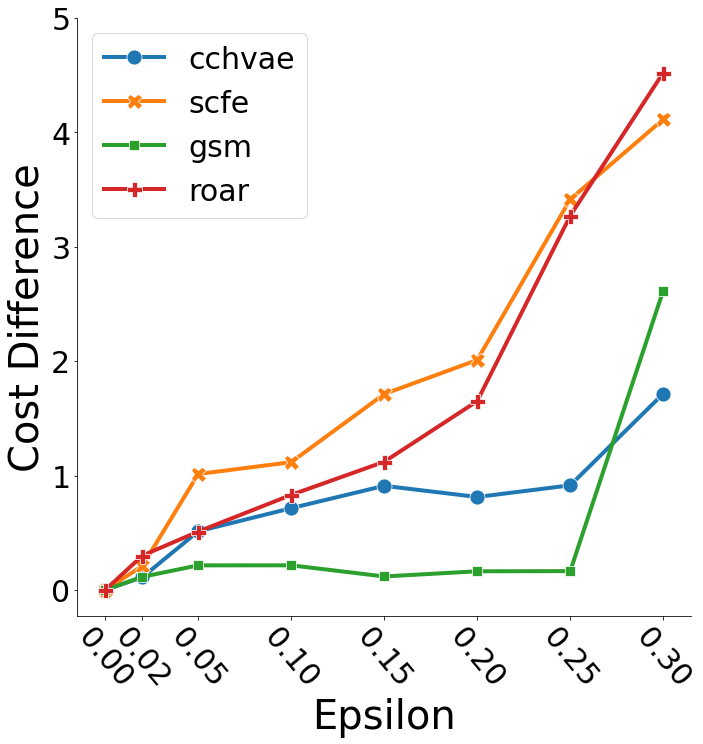}
            \label{fig:cost-adult}
        \end{subfigure}
        \begin{subfigure}[b]{0.32\textwidth}
            \centering
            \includegraphics[width=\textwidth]{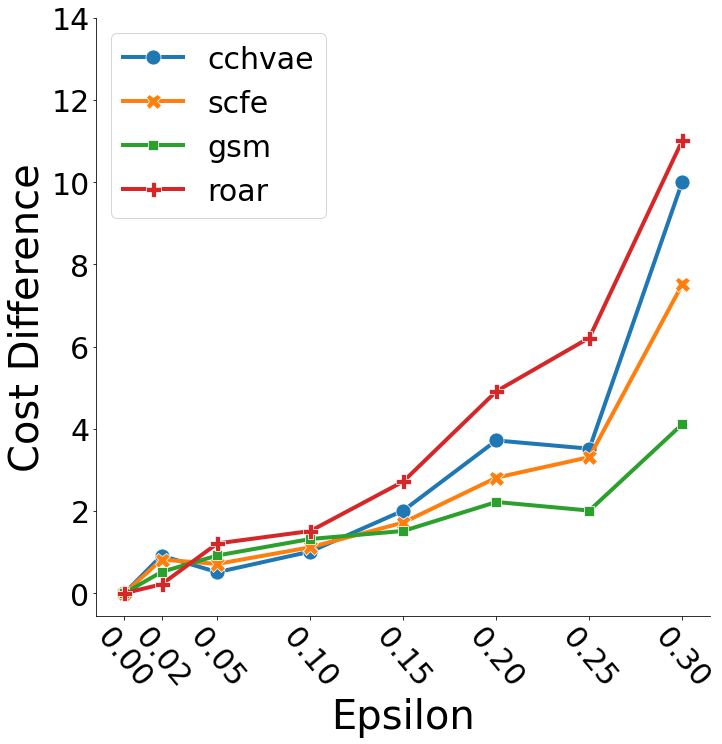}
            \label{fig:cost-compas}
        \end{subfigure}
        \begin{subfigure}[b]{0.32\textwidth}
            \centering
            \includegraphics[width=\textwidth]{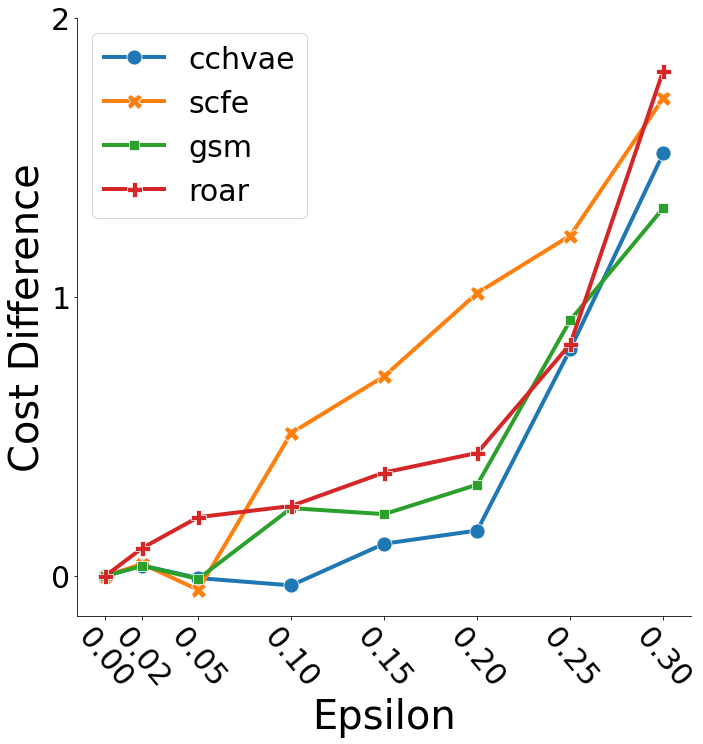}
            \label{fig:cost-adult}
        \end{subfigure}
       
        \caption{Analyzing cost differences between recourses generated using \vanilla and \robust logistic regression for German Credit, Adult, and COMPAS datasets. We find that the cost difference (i.e., $\ell_{2}-$norm) between the recourses generated for \vanilla and \robust models increases for increasing values of $\epsilon$.
        }
        \label{fig:all-cost-non-linear}
\end{figure*}

\begin{figure*}[t!]
        \begin{flushleft}
            \footnotesize
            \hspace{4.2cm}Cost Differences\hspace{7cm}Validity
        \end{flushleft}
        \begin{flushleft}
            \footnotesize
            \hspace{2.3cm}Depth\hspace{3.0cm}Width\hspace{4.5cm}Depth\hspace{3.0cm}Width
        \end{flushleft}
        \centering
        \begin{subfigure}[b]{0.24\textwidth}
            \centering
            \includegraphics[width=\textwidth]{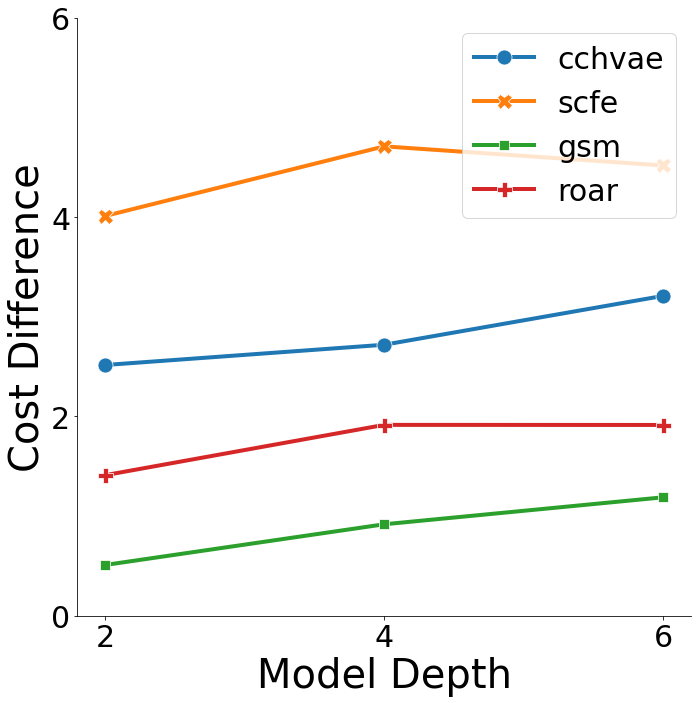}
            \label{fig:cost-adult}
        \end{subfigure}
        \begin{subfigure}[b]{0.24\textwidth}
            \centering
            \includegraphics[width=\textwidth]{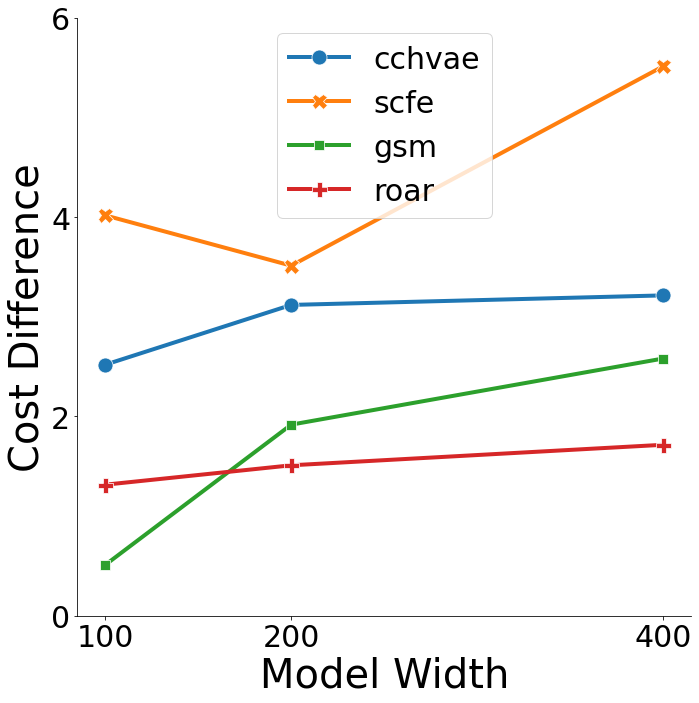}
            \label{fig:cost-compas}
        \end{subfigure}
        \begin{subfigure}[b]{0.24\textwidth}
            \centering
            \includegraphics[width=\textwidth]{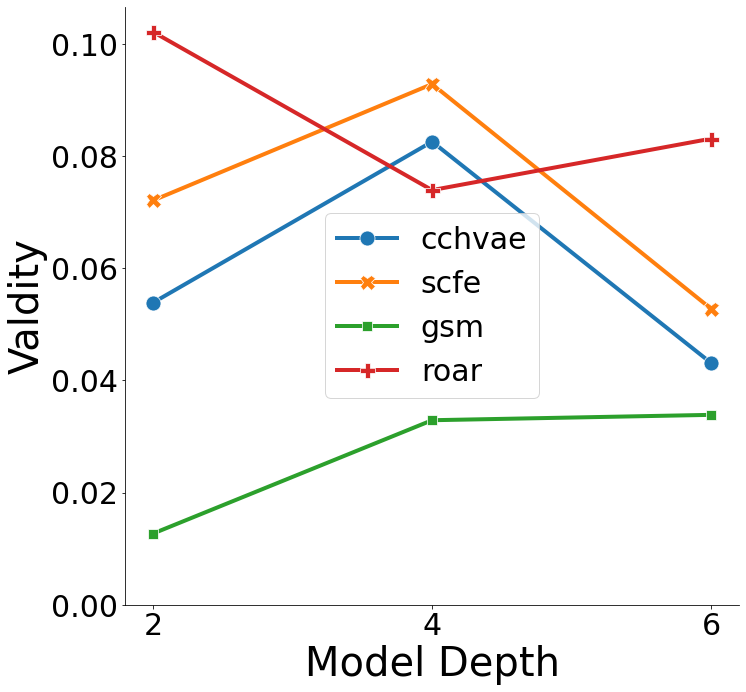}
            \label{fig:validity-adult}
        \end{subfigure}
        \begin{subfigure}[b]{0.24\textwidth}
            \centering
            \includegraphics[width=\textwidth]{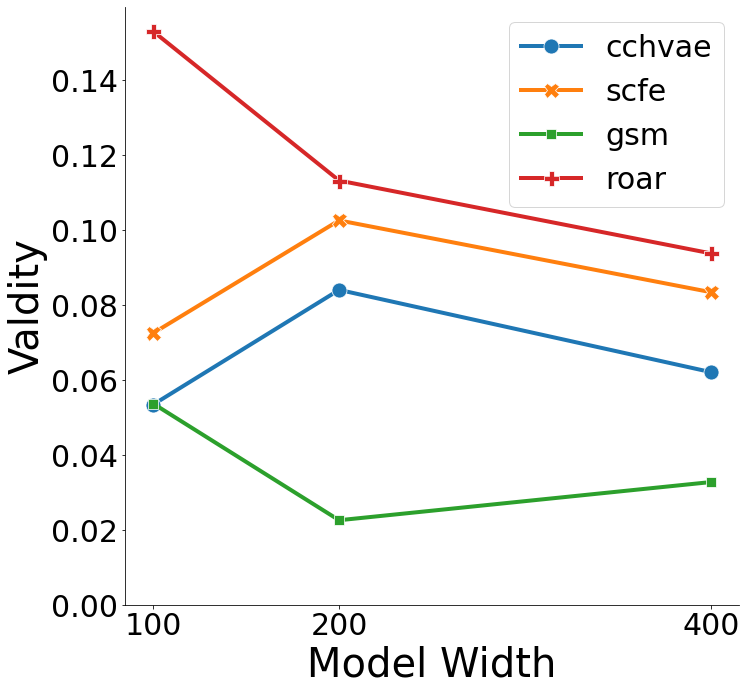}
            \label{fig:validity-compas}
        \end{subfigure}
        \caption{This figure analyzes the cost and validity differences between recourses generated using \vanilla and \robust neural networks trained on the Adult dataset. These differences are examined as the model size increases in terms of depth (defined as the number of hidden layers) and width (defined as the number of nodes in each hidden layer in a neural network of depth=2). Our findings suggest that: i) the cost difference (i.e., $\ell_{2}-$norm) between the recourses generated for \vanilla and \robust models remains consistent even as the model's depth or width increases, and ii) the validity of the recourses remains consistent even as the model's depth or width increases. Here, the \robust model is trained with $\epsilon = 0.3$.
        }
        \label{fig:all-cost-non-linear-large}
\end{figure*}

\begin{figure*}[t!]
        \begin{flushleft}
            \footnotesize
            \hspace{5.0cm}C-CHVAE\hspace{7.1cm}SCFE 
        \end{flushleft}        
        \centering
        
        \begin{subfigure}[b]{0.48\textwidth}
            \centering
            \includegraphics[width=\textwidth]{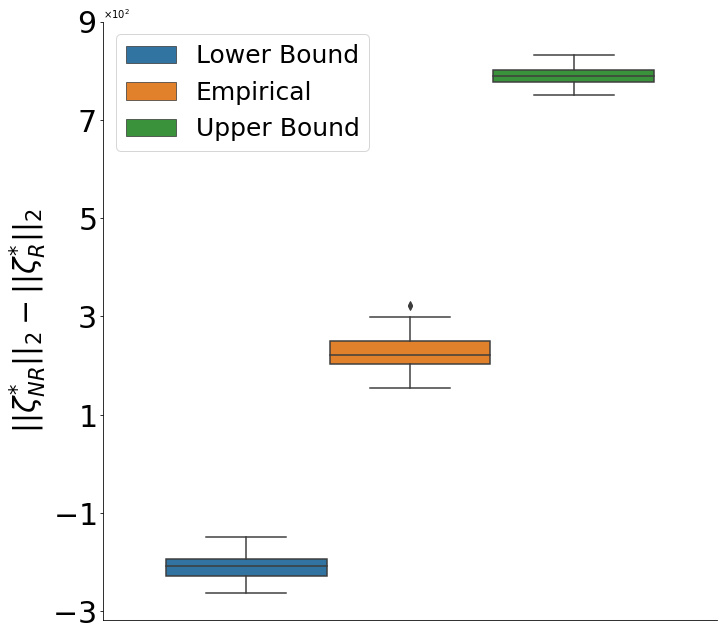}
            \label{fig:theory-scfe}
        \end{subfigure}
        \begin{subfigure}[b]{0.48\textwidth}
            \centering
            \includegraphics[width=\textwidth]{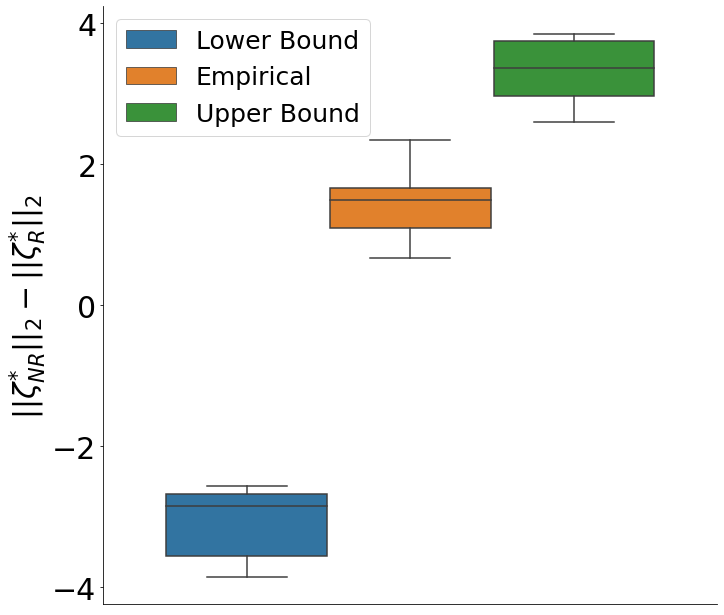}
            \label{fig:theory-cchvae}
        \end{subfigure}
        \caption{
        \textbf{Cost differences (left):} Empirically calculated cost differences (in orange) and our theoretical lower (in blue) and upper (in green) bounds for \chvae and SCFE recourses corresponding to \robust (trained using $\epsilon{=}0.3$) vs. \vanilla linear models trained on the Adult dataset. See Fig.~\ref{fig:theory-validation-bound-nn} for similar bounds for \robust ($\epsilon{=}0.3$) vs. \vanilla neural-network model. \textbf{Validity (right):} Empirical difference between the validity of recourses for \vanilla and \robust linear and neural network model trained on Adult dataset.  Results show no violations of our theoretical bounds.}
        \label{fig:theory-validation-full}
\end{figure*}
\begin{figure*}[t!]
        \centering
        \begin{subfigure}[b]{0.45\textwidth}
            \centering
            \includegraphics[width=\textwidth]{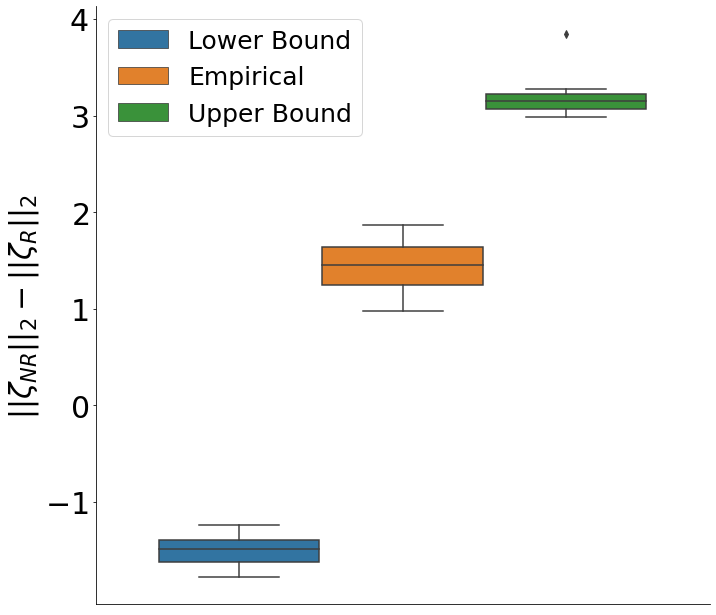}
            \caption{}
            \label{fig:theory-validity-non-linear}
        \end{subfigure}
        \begin{subfigure}[b]{0.45\textwidth}
            \centering
            \includegraphics[width=\textwidth]{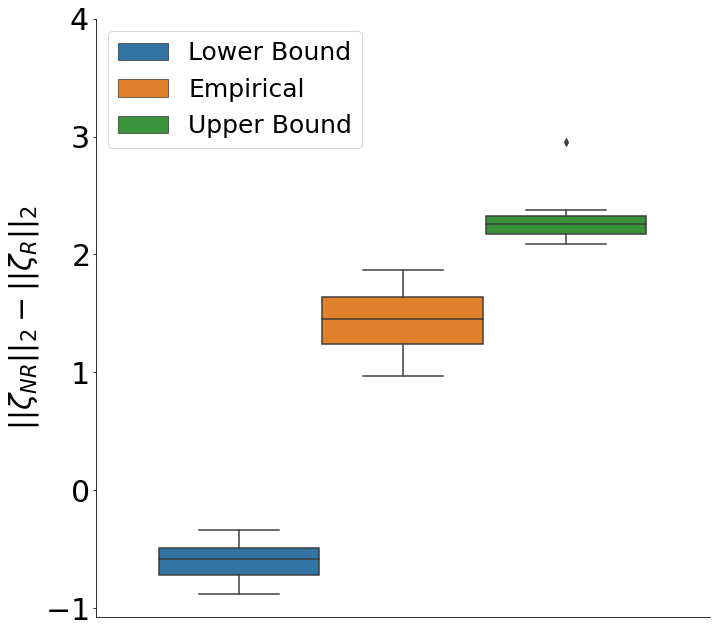}
            \caption{}
            \label{fig:theory-cost}
        \end{subfigure}
        \caption{
        (a) Empirically calculated cost differences (in orange) for the original model and our theoretical lower (in blue) and upper (in green) bounds for SCFE recourses corresponding to \robust (trained using $\epsilon{=}0.1$) vs. \vanilla neural networks corresponding to test samples of the Adult dataset, based on Theorem \ref{thm:cost-bound-non-linear-sketch}. (b) Empirically calculated cost differences (in orange) for the original model and our theoretical lower (in blue) and upper (in green) bounds for SCFE recourses corresponding to \robust (trained using $\epsilon{=}0.2$) vs. \vanilla neural networks corresponding to test samples of the Adult dataset, based on Theorem \ref{thm:cost-bound-non-linear-sketch}.
        }
        \label{fig:theory-validation-bound-nn}
\end{figure*}

\begin{figure}[t!]
        \centering
        \begin{subfigure}[b]{0.36\textwidth}
            \centering
            \includegraphics[width=\textwidth]{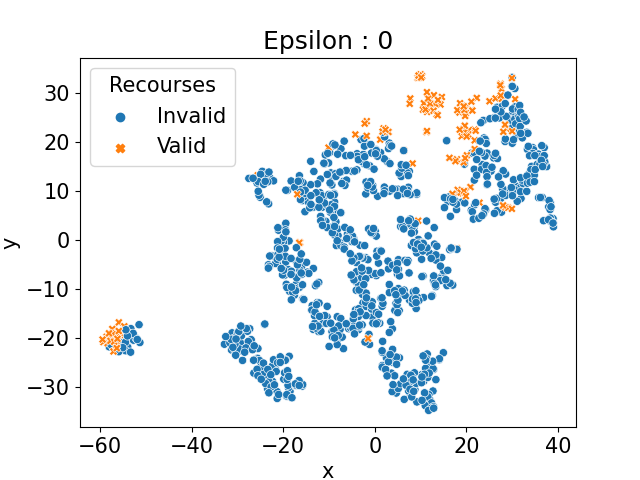}
            \label{fig:adv_accuracy}
        \end{subfigure}
        \begin{subfigure}[b]{0.36\textwidth}
            \centering
            \includegraphics[width=\textwidth]{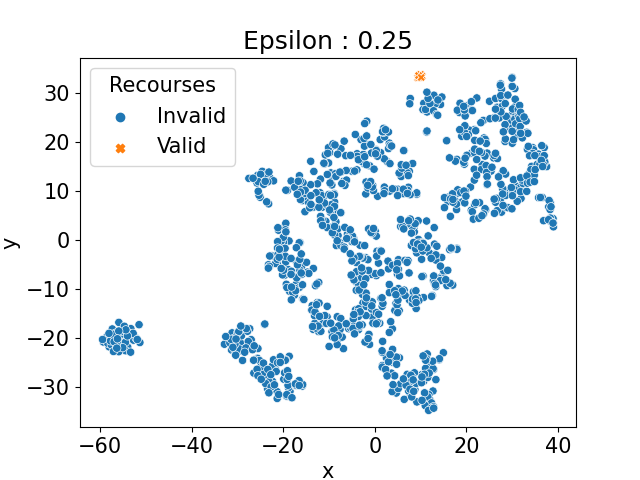}
            \label{fig:viz3}
        \end{subfigure}
        \caption{A t-SNE visualization of the change in availability of valid recourses (orange) for \robust models trained using $\epsilon = [0, 0.25]$, where a \vanilla model is a model trained using $\epsilon=0$. Results are shown for a neural network model trained on the Adult dataset. We observe fewer valid recourses for higher values of $\epsilon$ in this local neighborhood.
        }
        \label{fig:adult-viz}
\end{figure}
\section{Experimental Evaluation}
\label{sec:expt}

In this section, we empirically analyze the impact of \robust models on the cost and validity of recourses. First, we empirically validate our theoretical bounds on differences between the cost and validity of recourses output by state-of-the-art recourse generation algorithms when the underlying models are \robust vs. \vanilla. Second, we carry out further empirical analysis to assess the differences in cost and validity of the resulting recourses as the degree of the adversarial robustness of the underlying model changes on three real-world datasets.

\subsection{Experimental Setup}
\label{sec:setup}
Here, we describe the datasets, predictive models, \algor generation methods, and the evaluation metrics used in our empirical analysis.

\paragraph{Datasets.} We use three real-world datasets for our experiments: 1) The \textit{German Credit}~\citep{UCI} dataset comprises demographic (age, gender), personal (marital status), and financial (income, credit duration) features from 1000 credit applicants, with each sample labeled as "good" or "bad" depending on their credit risk. The task is to successfully predict if a given individual is a "good" or "bad" customer in terms of associated credit risk. 2) The \textit{Adult}~\citep{yeh2009comparisons} dataset contains demographic (e.g., age, race, and gender), education (degree), employment (occupation, hours-per week), personal (marital status, relationship), and financial (capital gain/loss) features for 48,842 individuals. The task is to predict if an individual’s income exceeds \$50K per year. 3) The \textit{COMPAS}~\citep{jordan15:effect} dataset has criminal records and demographics features for 18,876 defendants who got released on bail at the U.S state courts during 1990-2009. The dataset is designed to train a binary classifier to classify defendants into bail (i.e., unlikely to commit a violent crime if released) vs. no bail (i.e., likely to commit a violent crime).

\paragraph{Predictive models.} We generate recourses for the \vanilla and \robust version of Logistic Regression (linear) and Neural Networks (non-linear) models. We use two linear layers with ReLU activation functions as our predictor and set the number of nodes in the intermediate layers to twice the number of nodes in the input layer, which is the size of the input dimension in each dataset. 

 \paragraph{\AlgoR Methods.} We analyze the cost and validity for recourses generated using four popular classes of recourse generation methods, namely, gradient-based (SCFE), manifold-based (C-CHVAE), random search-based (GSM) methods (see Sec.~\ref{sec:prelim}), and robust methods (ROAR)~\cite{upadhyay2021robust}, when the underlying models are \vanilla and \robust.

\paragraph{Evaluation metrics.} To concretely measure the impact of adversarial robustness on \algor, we analyze the difference between cost and validity metrics for recourses generated using \vanilla and \robust model. To quantify the cost, we measure the average cost incurred to act upon the prescribed recourses across all test-set instances, \ie, $\textup{Cost}(\bx, \bx') = \frac{1}{|\cD_{\textup{test}}   |}\|\bx - \bx'\|_{2}$, where $\bx$ is the input and $\bx'$ is its corresponding recourse. To measure validity, we compute the probability of the generated recourse resulting in the desired outcome, \ie, $\textup{Validity}(\bx, \bx') = \frac{|\{\bx': f(\bx') = 1 \,\, \cap \, \, \bx' = g(\bx, f) \}|}{|\cD_{\textup{test}}|}$, where $g(x, f)$ returns recourses for input $\bx$ and predictive model $f$.

\paragraph{Implementation details.} We train \vanilla and \robust predictive models from two popular model classes (logistic regression and neural networks) for all three datasets. In the case of \robust models, we adopt the commonly used min-max optimization objective for adversarial training using varying degree of robustness, i.e., $\epsilon \in \{ 0, 0.02, 0.05, 0.10, 0.15, 0.20, 0.25, 0.3\}$. Note that the model trained with $\epsilon{=}0$ is the \vanilla model.

\subsection{Empirical Analysis}
\label{sec:results}
Next, we describe the experiments we carried out to understand the impact of adversarial robustness of predictive models on \algor. More specifically, we discuss the (1) empirical verification of our theoretical bounds, (2) empirical analysis of the differences between the costs of recourses when the underlying model is \vanilla vs. \robust, and (3) empirical analysis to compare the validity of the recourses corresponding to \vanilla vs. \robust models.

\paragraph{Empirical Verification of Theoretical Bounds.} We empirically validate our theoretical findings from Section ~\ref{sec:theory} on real-world datasets. 
In particular, we first estimate the empirical bounds (RHS of Theorems~\ref{thm:cost-bound-linear-sketch},\ref{thm:cost-bound-cchvae-sketch}) for each instance in the test set by plugging the corresponding values of the parameters in the theorems and compare them with the empirical estimates of the cost differences between recourses generated using gradient-based and manifold-based  recourse methods (LHS of Theorems~\ref{thm:cost-bound-linear-sketch},\ref{thm:cost-bound-cchvae-sketch}). Figure~\ref{fig:theory-validation-full} shows the results obtained from the aforementioned analysis of cost differences. We observe that our bounds are tight, and the empirical estimates fall well within our theoretical bounds. A similar trend is observed for Theorem~\ref{thm:cost-bound-non-linear-sketch} which is for the case of non-linear models, shown in Figure ~\ref{fig:theory-validation-bound-nn}. For the case of theoretical bounds for validity analysis in Theorem~\ref{thm:valid-linear-sketch}, we observe that the validity of the \vanilla model (denoted by $\Pr(f_{\text{NR}}(x) = 1)$ in Theorem~\ref{thm:valid-linear-sketch}) was higher than the validity of the \robust model for all the test samples  following the condition in Theorem ~\ref{thm:valid-linear-sketch} ($>$ 90\% samples) for a large number of training iterations used for training \robust models with $\epsilon \in \{ 0, 0.02, 0.05, 0.1, 0.15, 0.2, 0.25, 0.3\}$, shown Figure ~\ref{fig:theory-validation-full}.

\paragraph{Cost Analysis.} To analyze the impact of adversarial robustness on the cost of recourses, we compute the difference between the cost for obtaining a recourse using \vanilla and \robust model and plotted this difference for varying degrees of robustness $\epsilon$. Results in Figure~\ref{fig:all-cost-non-linear} show a significant increase in costs to find \algor for \robust neural network models with increasing degrees of robustness for all the datasets. In addition, the recourse cost for \robust model is always higher than that of \vanilla model (see appendix Figure~\ref{fig:cost-analysis-all} for similar trends for logistic regression model). Further, we observe a relatively smoother increasing trend for SCFE cost differences compared to others. We attribute this trend to the stochasticity present in \chvae and GSM. We find a higher cost difference in SCFE for most datasets, which could result from the larger sample size used in \chvae and GSM. Further, we observe a similar trend in cost differences on increasing the number of iterations to find a recourse.

\paragraph{Validity Analysis.} To analyze the impact of adversarial robustness on the validity of recourses, we compute the fraction of recourses resulting in the desired outcome, generated using a \vanilla and \robust model under resource constraints, and plot it against varying degrees of robustness $\epsilon$. Results in Figure~\ref{fig:theory-validation-full} show a strong impact of adversarial training on the validity for logistic regression and neural network models trained on three real-world datasets (also see Appendix~\ref{app:results}). On average, we observe that the validity drops to zero for models adversarially trained with $\epsilon > 0.2$. To shed more light on this observation, we use t-SNE visualization~\citep{van2008visualizing} -- a non-linear dimensionality reduction technique -- to visualize test samples in the dataset to two-dimensional space. In Figure~\ref{fig:adult-viz}, we observe a gradual decline in the number of valid recourses around a local neighborhood with an increasing degree of robustness $\epsilon$. The decline in the number of valid recourses suggests that multiple recourses in the neighborhood of the input sample are classified to the same class as the input for higher $\epsilon$, supporting our hypothesis that \robust models severely impact the validity of recourses and make the recourse search computationally expensive.
\section{Conclusion}
In this work, we theoretically and empirically analyzed the impact of \robust models on actionable explanations. We theoretically bounded the cost differences between recourses output by state-of-the-art techniques when the underlying models are \robust vs. \vanilla. We also bounded the validity differences between the recourses corresponding to \robust vs. \vanilla models. Further, we empirically validated our theoretical results using three real-world datasets and two popular classes of predictive models. Our theoretical and empirical analyses demonstrate that \robust models significantly increase the cost and decrease the validity of the resulting recourses, thereby highlighting the inherent trade-offs between achieving adversarial robustness in predictive models and providing reliable algorithmic recourses. Our work also paves the way for several interesting future research directions at the intersection of algorithmic recourse and adversarial robustness in predictive models. For instance, given the aforementioned trade-offs, it would be interesting to develop novel techniques that enable end users to navigate these trade-offs based on their personal preferences, \eg, an end user may choose to sacrifice the adversarial robustness of the underlying model to secure lower cost recourses.

\bibliography{aaai24}
\appendix
\setcounter{lemma}{0}
\setcounter{theorem}{0}

\onecolumn 

\section{Proof for Theorems in Section~\ref{sec:theory}}
Here, we provide detailed proofs of the Lemmas and Theorems defined in Section~\ref{sec:theory}.
\subsection{Proof for Lemma~\ref{thm:weight-linear-sketch}}
\label{sec:proof-lemma1-full}
\begin{lemma}(Difference between \vanilla and \robust model weights for linear model) For a given instance $\bx$, let $\bw_{\textup{NR}}$ and $\bw_{\textup{R}}$ be weights of the \vanilla and \robust model. Then, for a normalized Lipschitz activation function $\sigma(\cdot)$, the difference in the weights can be bounded as:
\begin{equation}
    \|\mathbf{w}_{\textup{NR}} - \mathbf{w}_{\textup{R}}\|_2 \leq \Delta
\end{equation}
where $\Delta = N\eta ( y\|\mathbf{x}^{T}\|_{2} + \epsilon \sqrt{d})$, $\eta$ is the learning rate, $\epsilon$ is the $\ell_{2}$-norm perturbation ball, $y$ is the label for $\bx$, $N$ is the total number of training epochs, and $d$ is the dimension of the input features. Subsequently, we show that $\|\mathbf{w}_{\textup{NR}}\|_2 - \Delta \leq \|\mathbf{w}_{\textup{R}}\|_2 \leq \|\mathbf{w}_{\textup{NR}}\|_2 + \Delta$.
\label{thm:weight-linear-full}
\end{lemma}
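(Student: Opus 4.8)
The plan is to follow the two gradient-descent trajectories from a common initialization and bound how far they drift apart over $N$ epochs, so that the bound on $\|\bw_{\textup{NR}}-\bw_{\textup{R}}\|_{2}$ reduces to a per-epoch estimate of the gradient mismatch. \textbf{Step 1 (separate gradient derivations).} For the \vanilla model, differentiating the logistic loss composed with the normalized-Lipschitz activation $\sigma(\cdot)$ at $f(\bx)=\bw^{T}\bx$ gives a per-example gradient of the form $(\sigma(\bw^{T}\bx)-y)\,\bx$, so one update reads $\bw \leftarrow \bw - \eta(\sigma(\bw^{T}\bx)-y)\bx$. For the \robust model I first resolve the inner maximization $\max_{\|\delta\|\le\epsilon}\ell(\sigma(\bw^{T}(\bx+\delta)),y)$: since $\bw^{T}(\bx+\delta)=\bw^{T}\bx+\bw^{T}\delta$ and the loss is monotone in its logit, the maximizer $\delta^{\star}$ lies on the boundary of the ball in the direction $\pm\bw$; by Danskin's theorem the robust gradient is then $(\sigma(\bw^{T}(\bx+\delta^{\star}))-y)(\bx+\delta^{\star})$, and crucially $\|\delta^{\star}\|_{2}\le\epsilon\sqrt{d}$ because that is the largest $\ell_{2}$-norm of a vector in the $\ell_{\infty}$-ball of radius $\epsilon$ used for the closed-form analysis.

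\textbf{Step 2 (accumulating the drift).} Starting both runs from the same weights, the final iterates satisfy $\bw_{\textup{NR}}-\bw_{\textup{R}}=\eta\sum_{t=1}^{N}\big(g_{\textup{R}}^{(t)}-g_{\textup{NR}}^{(t)}\big)$, where $g^{(t)}$ is the relevant gradient at epoch $t$, so $\|\bw_{\textup{NR}}-\bw_{\textup{R}}\|_{2}\le\eta\sum_{t=1}^{N}\|g_{\textup{R}}^{(t)}-g_{\textup{NR}}^{(t)}\|_{2}$ by the triangle inequality. Each summand splits again into a term carried by the clean input $\bx$ and a term carried by the perturbation $\delta^{\star}$. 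For the $\bx$-term I would use that $\sigma$ maps into $(0,1)$ to bound the scalar prefactor $|\sigma(\cdot)-y|$ by $y$ (equivalently by $1$), leaving $y\|\bx^{T}\|_{2}$; for the $\delta^{\star}$-term the same prefactor bound together with $\|\delta^{\star}\|_{2}\le\epsilon\sqrt{d}$ leaves $\epsilon\sqrt{d}$. Summing over the $N$ epochs gives $\|\bw_{\textup{NR}}-\bw_{\textup{R}}\|_{2}\le N\eta\,(y\|\bx^{T}\|_{2}+\epsilon\sqrt{d})=\Delta$.

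\textbf{Step 3 (two-sided bound on $\|\bw_{\textup{R}}\|_{2}$).} This is immediate from the reverse triangle inequality: $\big|\,\|\bw_{\textup{NR}}\|_{2}-\|\bw_{\textup{R}}\|_{2}\,\big|\le\|\bw_{\textup{NR}}-\bw_{\textup{R}}\|_{2}\le\Delta$, which rearranges to $\|\bw_{\textup{NR}}\|_{2}-\Delta\le\|\bw_{\textup{R}}\|_{2}\le\|\bw_{\textup{NR}}\|_{2}+\Delta$.

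\textbf{Main obstacle.} The delicate part is Step 2. First, I must justify exchanging the gradient with the inner maximum (Danskin's theorem, plus the essentially closed form of $\delta^{\star}$ for a linear model). Second, and more importantly, the per-epoch bound must be kept independent of the evolving $\|\bw^{(t)}\|_{2}$: the naive route of bounding $|\sigma(z^{\star})-\sigma(z)|\le|\bw^{T}\delta^{\star}|\le\|\bw\|_{2}\|\delta^{\star}\|_{2}$ would reintroduce a weight-norm factor and destroy the clean form of $\Delta$, so on the perturbation-carried term I should exploit only the boundedness of $\sigma$ (and the prefactor bound) rather than its Lipschitz constant. Once those two points are handled, the accumulation over epochs and the reverse-triangle step are routine.
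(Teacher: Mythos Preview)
Your proposal is correct and follows essentially the same approach as the paper: accumulate the per-epoch gradient mismatch from a common initialization, bound each term using only the boundedness of the sigmoid (not its Lipschitz constant), and finish with the reverse triangle inequality. The only cosmetic difference is that the paper first rewrites the inner maximization in closed form as $\mathcal{L}(y\,\mathbf{w}^{T}\mathbf{x}-\epsilon\|\mathbf{w}\|_{1})$ and then differentiates to pick up the extra $\epsilon\,\mathrm{sign}(\mathbf{w})$ term (whose $\ell_{2}$-norm is $\epsilon\sqrt{d}$), whereas you invoke Danskin and bound $\|\delta^{\star}\|_{2}\le\epsilon\sqrt{d}$ directly; since the $\ell_{\infty}$ maximizer is exactly $\delta^{\star}=\pm\epsilon\,\mathrm{sign}(\mathbf{w})$, the two routes are the same computation in different notation.
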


\begin{proof}
Without loss of generality, we consider the case of binary classification which uses the binary cross entropy or logistic loss. Let us denote the baseline and robust models as $f_{\text{NR}}(\mathbf{x}){=}\mathbf{w}^{\text{T}}_{\text{NR}}\mathbf{x}$ and $f_{\text{R}}(\mathbf{x}){=}\mathbf{w}^{\text{T}}_{\text{R}}\mathbf{x}$, where we have removed the bias term for simplicity. We consider the class label as $y \in \{+1, -1\}$, and loss function $\mathcal{L}(f(\mathbf{x})) = \text{log}(1 + \exp(-f(\mathbf{x})))$. Note that an adversarially robust model $f_{\text{R}}(\mathbf{x})$ is commonly trained using a min-max objective, where the inner maximization problem is given by:
\begin{equation}
    \max_{\|\delta\| \leq \epsilon} \mathcal{L}(\mathbf{w}^{\text{T}}_{\text{R}} (\mathbf{x}+\delta), y),
\end{equation}

where $\delta$ is the adversarial perturbation added to a given sample $\mathbf{x}$ and $\epsilon$ denotes the the perturbation norm ball around $\mathbf{x}$. Since our loss function is monotonic decreasing, the maximization of the loss function applied to a scalar is equivalent to just minimizing the scalar quantity itself, i.e.,
\begin{equation}
\max_{\|\delta\| \leq \epsilon} \mathcal{L} \left(y \cdot (\mathbf{w}^{\text{T}}_{\text{R}}(\mathbf{x}+\delta)) \right) =
\mathcal{L}\left( \min_{\|\delta\| \leq \epsilon}  y \cdot (\mathbf{w}^{\text{T}}_{\text{R}}(\mathbf{x}+\delta)) \right) \\
= \mathcal{L}\left(y\cdot(\mathbf{w}^{\text{T}}_{\text{R}}\mathbf{x}) + \min_{\|\delta\| \leq \epsilon} y \cdot \mathbf{w}^{\text{T}}_{\text{R}}\delta  \right)
\end{equation}

The optimal solution to $\min_{\|\delta\| \leq \epsilon} y \cdot \mathbf{w}^{\text{T}}_{\text{R}}\delta$ is given by $-\epsilon \|\mathbf{w}^{\text{T}}_{\text{R}}\|_{1}$~\citep{madryzico}. Therefore, instead of solving the min-max problem for an adversarially robust model, we can convert it to a pure minimization problem, i.e., 
\begin{equation}
    \min_{\mathbf{w}_{\text{R}}} \mathcal{L} \left(y \cdot (\mathbf{w}^{\text{T}}_{\text{R}}\mathbf{x}) - \epsilon \|\mathbf{w}_{\text{R}}\|_{1} \right )
\end{equation}

Correspondingly, the minimization objective for a baseline model is given by $\min_{\bw_{\text{NR}}} \cL \left(y \cdot (\bw^{\text{T}}_{\text{NR}}\bx)\right )$. Looking into the training dynamics under gradient descent, we can define the weights at epoch `t' for a baseline and robust model as a function of the Jacobian of the loss function with respect to their corresponding weights, i.e., 
\begin{equation}
    \frac{\mathbf{w}_{\text{NR}} - \mathbf{w}_{0}}{\eta} = \nabla_{\mathbf{w}_{\text{NR}}}\mathcal{L}\left(y.f_{\text{NR}}(\mathbf{x})\right ),
    \label{eq:sgd-base}
\end{equation}
\begin{equation}
    \frac{\mathbf{w}_{\text{R}} - \mathbf{w}_{0}}{\eta} = \nabla_{\mathbf{w}_{\text{R}}} \mathcal{L}\left(y.f_{\text{R}}(\mathbf{x}) - \epsilon \|\mathbf{w}_{\text{R}}\|_{1} \right ),
    \label{eq:sgd-robust}
\end{equation}
where $\eta$ is the learning rate of the gradient descent optimizer, $\mathbf{w}_{0}$ is the weight initialization of both models.
\begin{flalign*}
\nabla_{\mathbf{w}_{\text{NR}}}\mathcal{L}\left(y.f_{\text{NR}}(\mathbf{x})\right ) &= -\frac{\exp(-y.f_{\text{NR}}(\mathbf{x}))}{1 + \exp(-y.f_{\text{NR}}(\mathbf{x}))}y.\mathbf{x}^{\text{T}} \\
\nabla_{\mathbf{w}_{\text{R}}} \mathcal{L}\left(y.f_{\text{R}}(\mathbf{x}) - \epsilon \|\mathbf{w}_{\text{R}}\|_{1} \right ) &= - \frac{\exp(-y.f_{\text{R}}(\mathbf{x}) + \epsilon \|\mathbf{w}_{\text{R}}\|_{1})}{1 + \exp(-y.f_{\text{R}}(\mathbf{x}) + \epsilon \|\mathbf{w}_{\text{R}}\|_{1})}(-y.\mathbf{x}^{\text{T}} + \epsilon . \text{sign}(\mathbf{w}_{\text{R}})),
\end{flalign*}
where $\text{sign}(\mathbf{x})$ return +1, -1, 0 for $x>0$, $x<0$, $x=0$ respectively and $\sigma(\mathbf{x}){=}\frac{1}{1 + \exp(-\mathbf{x})}$ is the sigmoid function. Let us denote the weights of the baseline and robust model at the $n$-th iteration as $\mathbf{w}_{\text{NR}}^{n}$ and $\mathbf{w}_{\text{R}}^{n}$, respectively. Hence, we can define the $n$-th step of the gradient-descent for both models as:
\begin{eqnarray}
    \mathbf{w}_{\text{NR}}^{n} - \mathbf{w}_{\text{NR}}^{n-1} &= \eta \nabla_{\mathbf{w}_{\text{NR}}^{n-1}} \mathcal{L}_{\text{NR}}(\cdot)\\
    \mathbf{w}_{\text{R}}^{n} - \mathbf{w}_{\text{R}}^{n-1} &= \eta \nabla_{\mathbf{w}_{\text{R}}^{n-1}}\mathcal{L}_{\text{R}}(\cdot),
\end{eqnarray}
where $\eta$ is the learning rate of the gradient descent optimizer. Taking $n=1$, we get:
\begin{eqnarray}
    \mathbf{w}_{\text{NR}}^{1} - \mathbf{w}_{\text{NR}}^{0} &= \eta \nabla_{\mathbf{w}_{\text{NR}}^{0}} \mathcal{L}_{\text{NR}}(\cdot) \\
    \mathbf{w}_{\text{R}}^{1} - \mathbf{w}_{\text{R}}^{0} &= \eta \nabla_{\mathbf{w}_{\text{R}}^{0}} \mathcal{L}_{\text{R}}(\cdot),
\end{eqnarray}
where $\mathbf{w}_{\text{NR}}^{0}$ and $\mathbf{w}_{\text{R}}^{0}$ are the same initial weights for the \vanilla and \robust models. Subtracting both equations, we get:
\begin{equation}
    \frac{\mathbf{w}_{\text{NR}}^{1} - \mathbf{w}_{\text{R}}^{1}}{\eta} = \nabla_{\mathbf{w}_{\text{NR}}^{0}}\mathcal{L}_{\text{NR}}(\cdot) - \nabla_{\mathbf{w}_{\text{R}}^{0}}\mathcal{L}_{\text{R}}(\cdot)
    \label{eq:23}
\end{equation}
Similarly, for $n=2$ and using Equation~\ref{eq:23}, we get the following relation:
\begin{flalign*}
    \frac{\mathbf{w}_{\text{NR}}^{2} - \mathbf{w}_{\text{R}}^{2}}{\eta} - \Big( \frac{\mathbf{w}_{\text{NR}}^{1} - \mathbf{w}_{\text{R}}^{1}}{\eta} \Big) &= \nabla_{\mathbf{w}_{\text{NR}}^{1}}\mathcal{L}_{\text{NR}}(\cdot) - \nabla_{\mathbf{w}_{\text{R}}^{1}}\mathcal{L}_{\text{R}}(\cdot)\\
    \frac{\mathbf{w}_{\text{NR}}^{2} - \mathbf{w}_{\text{R}}^{2}}{\eta} &= \nabla_{\mathbf{w}_{\text{NR}}^{0}}\mathcal{L}_{\text{NR}}(\cdot) + \nabla_{\mathbf{w}_{\text{NR}}^{1}}\mathcal{L}_{\text{NR}}(\cdot) -
    \nabla_{\mathbf{w}_{\text{R}}^{0}}\mathcal{L}_{\text{R}}(\cdot) -
    \nabla_{\mathbf{w}_{\text{R}}^{1}}\mathcal{L}_{\text{R}}(\cdot)
\end{flalign*}
Using the above equations, we can now write the difference between the weights of the \vanilla and \robust models at the $n$-th iteration as:
\begin{flalign*}
    \frac{\mathbf{w}_{\text{NR}}^{n} - \mathbf{w}_{\text{R}}^{n}}{\eta} &= \sum_{i=0}^{n-1} \nabla_{\mathbf{w}_{\text{NR}}^{i}}\mathcal{L}_{\text{NR}}(\cdot) - \sum_{i=0}^{n-1} \nabla_{\mathbf{w}_{\text{R}}^{i}}\mathcal{L}_{\text{R}}(\cdot)\\
    &= \sum_{i=0}^{n-1} (\sigma(y.f_{\text{NR}}^{i}(\mathbf{x})) - 1)y.\mathbf{x}^{T} - \sum_{i=0}^{n-1} (\sigma(y.f_{\text{R}}^{i}(\mathbf{x}) - \epsilon||\mathbf{w}_{\text{R}}^{i}||_{1}) - 1)(y.\mathbf{x}^{T} - \epsilon~\text{sign}(\mathbf{w}_{\text{R}}^{i}))\\
    &= \sum_{i=0}^{n-1} \sigma(y.f_{\text{NR}}^{i}(\mathbf{x}))y.\mathbf{x}^{T} + \sum_{i=0}^{n-1} \Big( \sigma(y.f_{\text{R}}^{i}(\mathbf{x}) - \epsilon||\mathbf{w}_{\text{R}}^{i}||_{1})(\epsilon~\text{sign}(\mathbf{w}_{\text{R}}^{i}) - y.\mathbf{x}^{T}) - \epsilon~\text{sign}(\mathbf{w}_{\text{R}}^{i}) \Big)\\
    &\leq \sum_{i=0}^{n-1} \sigma(y.f_{\text{NR}}^{i}(\mathbf{x}))y.\mathbf{x}^{T} + \sum_{i=0}^{n-1} \Big( \sigma(y.f_{\text{R}}^{i}(\mathbf{x}))(\epsilon~\text{sign}(\mathbf{w}_{\text{R}}^{i}) - y.\mathbf{x}^{T}) - \epsilon~\text{sign}(\mathbf{w}_{\text{R}}^{i}) \Big) \tag{Using $\sigma(a-b)\leq\sigma(a)$~~for $b>0$}\\
    &\leq \sum_{i=0}^{n-1} \Big(\sigma(y.f_{\text{NR}}^{i}(\mathbf{x}) - \sigma(y.f_{\text{R}}^{i}(\mathbf{x}))\Big)y.\mathbf{x}^{T} + \sum_{i=0}^{n-1} \Big( \sigma(y.f_{\text{R}}^{i}(\mathbf{x})) - 1\Big) \epsilon~\text{sign}(\mathbf{w}_{\text{R}}^{i})
\end{flalign*}

Using $\ell_{2}$-norm on both sides, we get:
\begin{flalign*}
    \frac{1}{\eta}\|\mathbf{w}_{\text{NR}}^{n} - \mathbf{w}_{\text{R}}^{n}\|_{2} &\leq \|\sum_{i=0}^{n-1} \Big(\sigma(y.f_{\text{NR}}^{i}(\mathbf{x}) - \sigma(y.f_{\text{R}}^{i}(\mathbf{x}))\Big)y.\mathbf{x}^{T} + \sum_{i=0}^{n-1} \Big( \sigma(y.f_{\text{R}}^{i}(\mathbf{x})) - 1\Big) \epsilon~\text{sign}(\mathbf{w}_{\text{R}}^{i}) \|_{2}\\
    &\leq \|\sum_{i=0}^{n-1} \Big(\sigma(y.f_{\text{NR}}^{i}(\mathbf{x}) - \sigma(y.f_{\text{R}}^{i}(\mathbf{x}))\Big)y.\mathbf{x}^{T}\|_{2} + \|\sum_{i=0}^{n-1} \Big( \sigma(y.f_{\text{R}}^{i}(\mathbf{x})) - 1\Big) \epsilon~\text{sign}(\mathbf{w}_{\text{R}}^{i}) \|_{2} \tag{Using Triangle Inequality}\\
    &\leq \|\sum_{i=0}^{n-1} \Big(\sigma(y.f_{\text{NR}}^{i}(\mathbf{x}) - \sigma(y.f_{\text{R}}^{i}(\mathbf{x}))\Big)y.\mathbf{x}^{T}\|_{2} + \epsilon \sqrt{d} \sum_{i=0}^{n-1} \| \sigma(y.f_{\text{R}}^{i}(\mathbf{x})) - 1 \|_{2} \\
    &\leq \sum_{i=0}^{n-1}\| \Big(\sigma(y.f_{\text{NR}}^{i}(\mathbf{x}) - \sigma(y.f_{\text{R}}^{i}(\mathbf{x}))\Big) y.\mathbf{x}^{T}\|_{2} + \epsilon \sqrt{d} \sum_{i=0}^{n-1} \| \sigma(y.f_{\text{R}}^{i}(\mathbf{x})) - 1 \|_{2} \\
    &\leq \sum_{i=0}^{n-1} \|\Big(\sigma(y.f_{\text{NR}}^{i}(\mathbf{x})) - \sigma(y.f_{\text{R}}^{i}(\mathbf{x}))\Big)\|_{2} \|y.\mathbf{x}^{T}\|_{2} + \epsilon \sqrt{d} \sum_{i=0}^{n-1} \| \sigma(y.f_{\text{R}}^{i}(\mathbf{x})) - 1 \|_{2} \\
    &\leq n \|y.\mathbf{x}^{T}\|_{2} + \epsilon \sqrt{d} \sum_{i=0}^{n-1} \| 1 - \sigma(y.f_{\text{R}}^{i}(\mathbf{x})) \|_{2} \tag{since the maximum value of $\Big(\sigma(y.f_{\text{NR}}^{i}(\mathbf{x})) - \sigma(y.f_{\text{R}}^{i}(\mathbf{x}))\Big)$ is one}\\
    &\leq n \|y.\mathbf{x}^{T}\|_{2} + \epsilon \sqrt{d} \sum_{i=0}^{n-1} (1 - \sigma(y.f_{\text{R}}^{i}(\mathbf{x}))) \tag{since the term inside $\|\cdot\|_{2}$ is a scalar}\\
    &\leq n \|y.\mathbf{x}^{T}\|_{2} + \epsilon \sqrt{d} n\\
    \|\mathbf{w}_{\text{NR}}^{n} - \mathbf{w}_{\text{R}}^{n}\|_2 &\leq n\eta ( y\|\mathbf{x}^{T}\|_{2} + \epsilon \sqrt{d})\tag{since the maximum value of $(1 - \sigma(y.f_{\text{R}}^{i}(\mathbf{x})))$ is one}
\end{flalign*}

Using reverse triangle inequality, we can write: $ \|\mathbf{w}_{\text{NR}}^{n} - \mathbf{w}_{\text{R}}^{n}\|_2 \geq \Big| \|\mathbf{w}_{\text{NR}}^{n}\|_2 - \|\mathbf{w}_{\text{R}}^{n}\|_2 \Big|$, \ie, $\|\mathbf{w}_{\text{NR}}^{n} - \mathbf{w}_{\text{R}}^{n}\|_2 \geq \|\mathbf{w}_{\text{NR}}^{n}\|_2 - \|\mathbf{w}_{\text{R}}^{n}\|_2$ and $\|\mathbf{w}_{\text{NR}}^{n} - \mathbf{w}_{\text{R}}^{n}\|_2 \geq \|\mathbf{w}_{\text{R}}^{n}\|_2 - \|\mathbf{w}_{\text{NR}}^{n}\|_2$. We can now write the difference between the weights of the \vanilla and \robust models at the $n$-th iteration as:
\begin{flalign*}
    \Big| \|\mathbf{w}_{\text{NR}}^{n}\|_2 - \|\mathbf{w}_{\text{R}}^{n}\|_2 \Big| &\leq n\eta ( y\|\mathbf{x}^{T}\|_{2} + \epsilon \sqrt{d}) \tag{follows from reverse triangle inequality}\\
    -n\eta ( y\|\mathbf{x}^{T}\|_{2} + \epsilon \sqrt{d}) &\leq \|\mathbf{w}_{\text{NR}}^{n}\|_2 - \|\mathbf{w}_{\text{R}}^{n}\|_2  \leq n\eta ( y\|\mathbf{x}^{T}\|_{2} + \epsilon \sqrt{d})\\
    -n\eta ( y\|\mathbf{x}^{T}\|_{2} + \epsilon \sqrt{d}) &\leq \|\mathbf{w}_{\text{R}}^{n}\|_2 - \|\mathbf{w}_{\text{NR}}^{n}\|_2  \leq n\eta ( y\|\mathbf{x}^{T}\|_{2} + \epsilon \sqrt{d})\\
    \|\mathbf{w}_{\text{NR}}^{n}\|_2 - \Delta &\leq \|\mathbf{w}_{\text{R}}^{n}\|_2  \leq \|\mathbf{w}_{\text{NR}}^{n}\|_2 + \Delta,
\end{flalign*}
where $\Delta = n\eta ( y\|\mathbf{x}^{T}\|_{2} + \epsilon \sqrt{d})$.
\end{proof}

\subsection{Proof for Theorem~\ref{thm:cost-bound-linear-sketch}}
\label{sec:proof-thm1-full}

\begin{theorem}  (Cost difference of SCFE for linear models) For a given instance $\bx$, let $\bx'_{\textup{NR}}=\bx+\zeta_{\textup{NR}}$ and $\bx'_{\textup{R}}=\bx+\zeta_{\textup{R}}$ be the recourse generated using Wachter's algorithm for the \vanilla and \robust linear models. Then, for a normalized Lipschitz activation function $\sigma(\cdot)$, the cost difference in the recourse for both models can be bounded as:
\begin{equation}
    \|\zeta_{\textup{NR}}\|_2 - \|\zeta_{\textup{R}}\|_2
    \leq \Big|~\lambda\frac{2\norm{\bw_{\textup{NR}}}_{2} + \Delta}{\norm{\bw_{\textup{NR}}}_{2} (\|\mathbf{w}_{\textup{NR}}\|_2 - \Delta)}~\Big|
    \label{eq:scfe_bound}
\end{equation}
where $\bw_{\textup{NR}}$ is the weight of the \vanilla model, $\lambda$ is the scalar coefficient on the distance between original sample $\bx$ and generated counterfactual $\bx'$, and $\Delta$ is the weight difference between the weights of the \vanilla and \robust linear models from Lemma~\ref{thm:weight-linear-full}.
\label{thm:cost-bound-linear-full}
\end{theorem}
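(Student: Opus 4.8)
The plan is to reduce the whole statement to the closed-form expression for the optimal SCFE displacement on a linear model (Definition~\ref{thm:definition-optimal}) and then substitute the weight-norm sandwich supplied by Lemma~\ref{thm:weight-linear-full}. For a linear scoring function $f(\bx)=\bw^{\mathrm{T}}\bx$, the optimal recourse is $\zeta^{*}=m\frac{\lambda}{\lambda+\|\bw\|_{2}^{2}}\,\bw$, so its norm is $\|\zeta^{*}\|_{2}=|m|\,\frac{\lambda\,\|\bw\|_{2}}{\lambda+\|\bw\|_{2}^{2}}$. Because the normalized Lipschitz activation keeps the model score and the target score in a bounded range, the target residual obeys $|m|\le 1$, and since $\lambda+\|\bw\|_{2}^{2}\ge\|\bw\|_{2}^{2}$ this collapses to the clean per-model estimate $\|\zeta^{*}\|_{2}\le \lambda/\|\bw\|_{2}$. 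Applying it to each predictor gives $\|\zeta_{\textup{NR}}\|_{2}\le \lambda/\|\bw_{\textup{NR}}\|_{2}$ and $\|\zeta_{\textup{R}}\|_{2}\le \lambda/\|\bw_{\textup{R}}\|_{2}$.

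Next I would bound the difference of the two costs. Since both norms are nonnegative, $\|\zeta_{\textup{NR}}\|_{2}-\|\zeta_{\textup{R}}\|_{2}\le\|\zeta_{\textup{NR}}\|_{2}+\|\zeta_{\textup{R}}\|_{2}\le\lambda\Big(\frac{1}{\|\bw_{\textup{NR}}\|_{2}}+\frac{1}{\|\bw_{\textup{R}}\|_{2}}\Big)=\lambda\,\frac{\|\bw_{\textup{NR}}\|_{2}+\|\bw_{\textup{R}}\|_{2}}{\|\bw_{\textup{NR}}\|_{2}\,\|\bw_{\textup{R}}\|_{2}}$. Now I would invoke Lemma~\ref{thm:weight-linear-full}, i.e.\ $\|\bw_{\textup{NR}}\|_{2}-\Delta\le\|\bw_{\textup{R}}\|_{2}\le\|\bw_{\textup{NR}}\|_{2}+\Delta$: feeding the upper bound into the numerator ($\|\bw_{\textup{NR}}\|_{2}+\|\bw_{\textup{R}}\|_{2}\le 2\|\bw_{\textup{NR}}\|_{2}+\Delta$) and the lower bound into the denominator ($\|\bw_{\textup{NR}}\|_{2}\,\|\bw_{\textup{R}}\|_{2}\ge\|\bw_{\textup{NR}}\|_{2}(\|\bw_{\textup{NR}}\|_{2}-\Delta)$) yields exactly $\lambda\,\frac{2\|\bw_{\textup{NR}}\|_{2}+\Delta}{\|\bw_{\textup{NR}}\|_{2}\,(\|\bw_{\textup{NR}}\|_{2}-\Delta)}$; wrapping the right-hand side in $|\cdot|$ gives \eqref{eq:scfe_bound}. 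Running the same chain with the roles of the two models swapped bounds $\|\zeta_{\textup{R}}\|_{2}-\|\zeta_{\textup{NR}}\|_{2}$ by the same quantity, which is what makes the absolute value on the right legitimate.

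None of the individual estimates is hard, so the real care is in the bookkeeping and in one assumption that must be made explicit: the \vanilla weight norm has to dominate the perturbation budget, $\|\bw_{\textup{NR}}\|_{2}>\Delta$, or else the denominator is non-positive and the bound is vacuous --- and this is exactly where the implicit hypothesis that both recourse searches actually succeed enters, since $\Delta=N\eta(y\|\bx^{\mathrm{T}}\|_{2}+\epsilon\sqrt{d})$ grows with the number of epochs, the learning rate, the radius $\epsilon$, and $\|\bx\|_{2}$. A design choice worth flagging is that I deliberately route through the \emph{sum} $\|\zeta_{\textup{NR}}\|_{2}+\|\zeta_{\textup{R}}\|_{2}$ rather than the sharper $\big|\,\|\zeta_{\textup{NR}}\|_{2}-\|\zeta_{\textup{R}}\|_{2}\,\big|\le\|\zeta_{\textup{NR}}-\zeta_{\textup{R}}\|_{2}$: the latter would force me to control both $\|\bw_{\textup{NR}}-\bw_{\textup{R}}\|_{2}$ and the mismatch of the scalar factors $\lambda/(\lambda+\|\bw\|_{2}^{2})$, and it would not collapse to the advertised closed form, so the looser route is the one that matches the statement.
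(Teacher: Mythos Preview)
Your proposal is correct and lands on exactly the same intermediate bound $\lambda/\|\bw_{\textup{NR}}\|_{2}+\lambda/\|\bw_{\textup{R}}\|_{2}$ that the paper reaches, after which both arguments finish identically by substituting the sandwich from Lemma~\ref{thm:weight-linear-full}. The only difference is packaging: the paper first writes the vector-difference norm $\|\zeta_{\textup{NR}}-\zeta_{\textup{R}}\|_{2}$, applies the triangle inequality (which immediately reproduces your sum $\|\zeta_{\textup{NR}}\|_{2}+\|\zeta_{\textup{R}}\|_{2}$), and then invokes the reverse triangle inequality at the end, whereas you bound each $\|\zeta\|_{2}$ directly and sum. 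Your closing remark that the vector-difference route ``would not collapse to the advertised closed form'' is therefore overly cautious --- the paper takes precisely that route and it \emph{does} collapse, because the triangle inequality discards the cross-structure (the mismatch of scalar prefactors and of $\bw_{\textup{NR}}$ vs.\ $\bw_{\textup{R}}$) that you feared having to track. Your more direct presentation is arguably cleaner, but neither route is sharper than the other.
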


\begin{proof}
Following the definition of SCFE in Equation~\ref{eq:scfe}, we can find a counterfactual sample $\bx'$ that is "closest" to the original instance $\bx$ by minimizing the following objective:
\begin{equation}
    \argmin_{\mathbf{x'}} (f(\bx') - y')^2 + \lambda d(\bx', \bx),
    \label{eq:wachter}
\end{equation}
where $s$ is the target score, $\lambda$ is the regularization coefficient, and $d(\cdot)$ is the distance between the original and counterfactual sample $\mathbf{x'}$. 
 
Using the optimal recourse cost (Definition~\ref{thm:definition-optimal}), we can derive the upper bound of the cost difference for generating recourses using \vanilla and \robust models:
\begin{align*}
    \| \zeta_{\text{NR}} - \zeta_{\text{R}}\|_{2} &= \norm{ \frac{(s - \bw_{\text{NR}}^{T}\bx ) \lambda}{\lambda + \norm{\bw_{\text{NR}}}^{2}_{2}}\cdot \bw_{\text{NR}}  - \frac{(s - \bw_{\text{R}}^{T}\bx) \lambda}{\lambda + \norm{\bw_{\text{R}}}^{2}_{2}}\cdot \bw_{\text{R}}}_{2} \\
    &= \norm{ \frac{(s - \bw_{\text{NR}}^{T}\bx ) \lambda}{\lambda + \norm{\bw_{\text{NR}}}^{2}_{2}}\cdot \bw_{\text{NR}}  + \frac{(\bw_{\text{R}}^{T}\bx - s ) \lambda}{\lambda + \norm{\bw_{\text{R}}}^{2}_{2}}\cdot \bw_{\text{R}}}_{2} \\
    & \leq  \left | \frac{(s - \bw_{\text{NR}}^{T}\bx ) \lambda}{\lambda + \norm{\bw_{\text{NR}}}^{2}_{2}} \right |  \norm{\bw_{\text{NR}}}_{2}  + \left | \frac{(s - \bw_{\text{R}}^{T}\bx ) \lambda}{\lambda + \norm{\bw_{\text{R}}}^{2}_{2}} \right |  \norm{\bw_{\text{R}}}_{2} \tag{Using Triangle Inequality and norm properties}
\end{align*}
Note that the difference between the target and the predicted score for both \vanilla and \robust models is upper bounded by a term that is always positive. Hence, we get:
\begin{align*}
    \| \zeta_{\text{NR}} - \zeta_{\text{R}}\|_{2} 
    & \leq \left | \frac{\lambda}{\lambda + \norm{\bw_{\text{NR}}}^{2}_{2}} \right | \norm{\bw_{\text{NR}}}_{2} + \left | \frac{\lambda}{\lambda + \norm{\bw_{\text{R}}}^{2}_{2}} \right |  \norm{\bw_{\text{R}}}_{2}\\
    & \leq \frac{\lambda}{\norm{\bw_{\text{NR}}}_{2}} + \frac{\lambda}{\norm{\bw_{\text{R}}}_{2}} \tag{since $\lambda$ is a small positive constant}\\
    & \leq \lambda\frac{\norm{\bw_{\text{NR}}}_{2} + \norm{\bw_{\text{R}}}_{2}}{\norm{\bw_{\text{NR}}}_{2}\norm{\bw_{\text{R}}}_{2}}\\
    & \leq \lambda\frac{2\norm{\bw_{\text{NR}}}_{2} + \Delta}{\norm{\bw_{\text{NR}}}_{2} (\|\mathbf{w}_{\text{NR}}\|_2 - \Delta)} \tag{from Lemma~\ref{thm:weight-non-linear-full}}
\end{align*}

Using reverse triangle inequality, we can write: $ \|\zeta_{\text{NR}} - \zeta_{\text{R}}\|_2 \geq \Big| \|\zeta_{\text{NR}}\|_2 - \|\zeta_{\text{R}}\|_2 \Big|$, \ie, $\|\zeta_{\text{NR}} - \zeta_{\text{R}}\|_2 \geq \|\zeta_{\text{NR}}\|_2 - \|\zeta_{\text{R}}\|_2$ and $\|\zeta_{\text{NR}} - \zeta_{\text{R}}\|_2 \geq \|\zeta_{\text{R}}\|_2 - \|\zeta_{\text{NR}}\|_2$. We can now write the cost difference of recourses generated for the \vanilla and \robust models as:
\begin{flalign*}
    \Big| \|\zeta_{\text{NR}}\|_2 - \|\zeta_{\text{R}}\|_2 \Big| &\leq \lambda\frac{2\norm{\bw_{\text{NR}}}_{2} + \Delta}{\norm{\bw_{\text{NR}}}_{2} (\|\mathbf{w}_{\text{NR}}\|_2 - \Delta)} \tag{follows from reverse triangle inequality}\\
    -\lambda\frac{2\norm{\bw_{\text{NR}}}_{2} + \Delta}{\norm{\bw_{\text{NR}}}_{2} (\|\mathbf{w}_{\text{NR}}\|_2 - \Delta)} &\leq \|\zeta_{\text{NR}}\|_2 - \|\zeta_{\text{R}}\|_2  \leq \lambda\frac{2\norm{\bw_{\text{NR}}}_{2} + \Delta}{\norm{\bw_{\text{NR}}}_{2} (\|\mathbf{w}_{\text{NR}}\|_2 - \Delta)},
\end{flalign*}
where $\Delta = n\eta ( y\|\mathbf{x}^{T}\|_{2} + \epsilon \sqrt{d})$.
\end{proof}

\subsection{Proof for Theorem~\ref{thm:cost-bound-non-linear-sketch}}
\label{sec:proof-thm2-full}
\begin{theorem} (Cost difference for SCFE for wide neural network) For an NTK model with weights $\bw^{\textup{NTK}}_{\textup{NR}}$ and $\bw^{\textup{NTK}}_{\textup{R}}$ for the \vanilla and \robust models, the cost difference between the recourses generated for sample $\bx$ is bounded as:
    \begin{equation}
         \|\zeta_{\textup{NR}}\|_2 - \|\zeta_{\textup{R}}\|_2 \leq \Big|~\frac{2}{\textup{H}(\|\bar{\bw}^{\textup{NTK}}_{\textup{NR}}\|_{2},\|\bar{\bw}^{\textup{NTK}}_{\textup{R}}\|_{2})}~\Big|,
         \label{eq:proof-non-linear-cost-full}
    \end{equation}
    where $\textup{H}(\cdot, \cdot)$ denotes the harmonic mean, $\bar{\bw}^{\textup{NTK}}_{\textup{NR}}{=}\nabla_{\bx}\bK^{\infty}(\bx, \bX)\bw^{\textup{NTK}}_{\textup{NR}}$, $\bK^{\infty}$ is the NTK associated with the wide neural network model, $\bar{\bw}^{\textup{NTK}}_{\textup{R}}{=}\nabla_{\bx}\bK^{\infty}(\bx, \bX_{\textup{R}})\bw^{\textup{NTK}}_{\textup{R}}$, $\bw^{\textup{NTK}}_{\textup{NR}}{=}(\bK^{\infty}(\bX, \bX){+}\beta\bI_{n})^{-1}\bY$, $\bw^{\textup{NTK}}_{\textup{R}}{=}(\bK^{\infty}(\bX_{\textup{R}}, \bX_{\textup{R}}) + \beta\bI_{n})^{-1}\bY$, $\beta$ is the bias of the NTK model, $(\bX, \bX_{\textup{R}})$ are the training samples for the \vanilla and \robust models, and $\bY$ are the labels of the training samples.
    \label{thm:cost-bound-non-linear-full}
\end{theorem}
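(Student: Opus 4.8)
The plan is to reduce the wide-network SCFE problem to the linear SCFE problem already handled in Theorem~\ref{thm:cost-bound-linear-full}, by passing to the Neural Tangent Kernel regime. First I would use the fact that a sufficiently wide two-layer ReLU network trained to convergence behaves as kernel ridge regression with the NTK, so its prediction can be written $f^{\textup{NTK}}(\bx') = \bK^{\infty}(\bx',\bX)\,\bw^{\textup{NTK}}$ with $\bw^{\textup{NTK}}=(\bK^{\infty}(\bX,\bX)+\beta\bI_n)^{-1}\bY$ (and analogously with $\bX_{\textup{R}}$ for the robust model), where the closed form of $\bK^{\infty}$ is given by Definition~\ref{def:kernel-matrix}. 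Then I would Taylor-expand $f^{\textup{NTK}}$ around the instance $\bx$: for a recourse $\bx'=\bx+\zeta$ that stays close to $\bx$,
\[
f^{\textup{NTK}}(\bx+\zeta)\approx f^{\textup{NTK}}(\bx)+\big\langle \nabla_{\bx}\bK^{\infty}(\bx,\bX)\,\bw^{\textup{NTK}},\ \zeta\big\rangle = f^{\textup{NTK}}(\bx)+\langle \bar{\bw}^{\textup{NTK}},\ \zeta\rangle,
\]
so that, to first order, the SCFE objective $(f^{\textup{NTK}}(\bx')-s)^2+\lambda\,d(\bx',\bx)$ coincides with the linear-model objective with "effective weight" $\bar{\bw}^{\textup{NTK}}=\nabla_{\bx}\bK^{\infty}(\bx,\bX)\,\bw^{\textup{NTK}}$ in place of $\bw$.

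With that reduction in hand, I would apply Definition~\ref{thm:definition-optimal} (optimal cost for linear models) with $\bw\leftarrow\bar{\bw}^{\textup{NTK}}$ to obtain $\|\zeta^{*}\|_2 = |m|\,\tfrac{\lambda}{\lambda+\|\bar{\bw}^{\textup{NTK}}\|_2^2}\,\|\bar{\bw}^{\textup{NTK}}\|_2$, bound the target residual $m=s-f^{\textup{NTK}}(\bx')$ by its normalized maximum (exactly as in the proof of Theorem~\ref{thm:cost-bound-linear-full}), and use $\tfrac{\lambda}{\lambda+\|\bar{\bw}^{\textup{NTK}}\|_2^2}\|\bar{\bw}^{\textup{NTK}}\|_2\le \tfrac{1}{\|\bar{\bw}^{\textup{NTK}}\|_2}$ together with the absorption of the small positive constant $\lambda$. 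Carrying this out for both $\bar{\bw}^{\textup{NTK}}_{\textup{NR}}$ and $\bar{\bw}^{\textup{NTK}}_{\textup{R}}$ and then combining via the triangle inequality $\|\zeta_{\textup{NR}}-\zeta_{\textup{R}}\|_2\le\|\zeta_{\textup{NR}}\|_2+\|\zeta_{\textup{R}}\|_2$ and the reverse triangle inequality $\|\zeta_{\textup{NR}}-\zeta_{\textup{R}}\|_2\ge\|\zeta_{\textup{NR}}\|_2-\|\zeta_{\textup{R}}\|_2$ yields
\[
\|\zeta_{\textup{NR}}\|_2-\|\zeta_{\textup{R}}\|_2 \;\le\; \frac{1}{\|\bar{\bw}^{\textup{NTK}}_{\textup{NR}}\|_2}+\frac{1}{\|\bar{\bw}^{\textup{NTK}}_{\textup{R}}\|_2}.
\]
The final step is cosmetic: since $\tfrac1a+\tfrac1b=\tfrac{a+b}{ab}=\tfrac{2}{2ab/(a+b)}=\tfrac{2}{\textup{H}(a,b)}$, the right-hand side equals $2/\textup{H}(\|\bar{\bw}^{\textup{NTK}}_{\textup{NR}}\|_2,\|\bar{\bw}^{\textup{NTK}}_{\textup{R}}\|_2)$, and wrapping it in $|\cdot|$ (it is already nonnegative) gives the stated bound in Eqn.~\ref{eq:proof-non-linear-cost-full}.

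The main obstacle is the Taylor-expansion/linearization step: unlike a genuinely linear predictor, the map $\bx\mapsto f^{\textup{NTK}}(\bx)$ is nonlinear (the NTK feature map depends nonlinearly on the input even though the network output is linear in it), so the identification with a linear SCFE problem holds only to first order. One must therefore argue that the recourse $\zeta$ is small enough that the gradient $\nabla_{\bx}\bK^{\infty}(\bx,\bX)\,\bw^{\textup{NTK}}$ is essentially constant along the segment from $\bx$ to $\bx'$, so that the neglected higher-order terms can be discarded; this is precisely where the "wide network" / lazy-training assumption does the work, and in practice the paper likely just treats the remainder as negligible. A secondary, more routine point is to make the uniform bound on the residual $m$ and the handling of $\lambda$ match the linear case exactly, so that the constants collapse to the clean harmonic-mean expression rather than leaving residual $\lambda$-dependent factors.
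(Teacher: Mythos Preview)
Your proposal is correct and follows essentially the same route as the paper: linearize the NTK predictor via a first-order Taylor expansion to obtain the effective weight $\bar{\bw}^{\textup{NTK}}$, apply the linear SCFE closed-form cost (Definition~\ref{thm:definition-optimal}) to each model, bound the residual and drop the small $\lambda$ to get $\tfrac{1}{\|\bar{\bw}^{\textup{NTK}}_{\textup{NR}}\|_2}+\tfrac{1}{\|\bar{\bw}^{\textup{NTK}}_{\textup{R}}\|_2}$, and then use the reverse triangle inequality and the harmonic-mean identity. The paper likewise treats the Taylor remainder as negligible without further justification, so your identification of that step as the main soft spot is accurate.
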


\begin{proof}
    For a wide neural network model with ReLU activations, the prediction for a given input $\bx$ is given by:
    \begin{equation}
        f^{\text{NTK}}(\bx) = \Big(\bK^{\infty}(\bx, \bX)\Big)^{\text{T}}\bw^{\text{NTK}},
    \end{equation}
    where $\bX$ is the training dataset and the NTK weights $\bw^{\text{NTK}}=(\bK^{\infty}(\bX, \bX) + \beta \bI_{n})^{-1}\bY$.
    To this end, the first order Taylor expansion of a wide neural network prediction model for an infinitesimal cost $\zeta$ is given as $f^{\text{NTK}}(\bx + \zeta)\approx f^{\text{NTK}}(\bx) + \zeta^{\text{T}}\bar{\bw}^{\textup{NTK}}$, where $\bar{\bw}^{\textup{NTK}}=\nabla_{\bx}\bK^{\infty}(\bx, \bX)\bw^{\textup{NTK}}$~\citep{pawelczyk2022trade} and the cost of generating a recourse of the corresponding NTK is given as $\zeta = \frac{s-f(\bx)}{\lambda + \|\bar{\bw}^{\text{NTK}}\|^{2}_{2}}\bar{\bw}^{\text{NTK}}$.
    
    Without any loss of generality, the cost for generating a valid recourse for a \vanilla ($\zeta_{\text{NR}}$) and \robust ($\zeta_{\text{R}}$) model can be written as:
    \begin{eqnarray}
        \zeta_{\text{NR}} = \frac{s-f^{\text{NTK}}_{\text{NR}}(\bx)}{\lambda + \|\bar{\bw}^{\text{NTK}}_{\text{NR}}\|^{2}_{2}}\bar{\bw}^{\text{NTK}}_{\text{NR}}\\
        \zeta_{\text{R}} = \frac{s-f^{\text{NTK}}_{\text{R}}(\bx)}{\lambda + \|\bar{\bw}^{\text{NTK}}_{\text{R}}\|^{2}_{2}}\bar{\bw}^{\text{NTK}}_{\text{R}}
    \end{eqnarray}

Hence, the cost difference between the recourses generated for \vanilla and \robust models is:
\begin{flalign}
    \zeta_{\text{NR}} - \zeta_{\text{R}} &= \frac{s-f^{\text{NTK}}_{\text{NR}}(\bx)}{\lambda + \|\bar{\bw}^{\text{NTK}}_{\text{NR}}\|^{2}_{2}}\bar{\bw}^{\text{NTK}}_{\text{NR}} - \frac{s-f^{\text{NTK}}_{\text{R}}(\bx)}{\lambda + \|\bar{\bw}^{\text{NTK}}_{\text{R}}\|^{2}_{2}}\bar{\bw}^{\text{NTK}}_{\text{R}}\label{app:cost-diff-non-linear}
\end{flalign}
Following Eqn.~\ref{app:cost-diff-non-linear} and taking $\ell_2$-norm on both sides, we can now compute the upper bound for the cost difference between the recourses generated for \vanilla and \robust models:
\begin{flalign}
    \|\zeta_{\text{NR}} - \zeta_{\text{R}}\|_2 &= \|~\frac{s-f^{\text{NTK}}_{\text{NR}}(\bx)}{\lambda + \|\bar{\bw}^{\text{NTK}}_{\text{NR}}\|^{2}_{2}}\bar{\bw}^{\text{NTK}}_{\text{NR}} + \frac{f^{\text{NTK}}_{\text{R}}(\bx)-s}{\lambda + \|\bar{\bw}^{\text{NTK}}_{\text{R}}\|^{2}_{2}}\bar{\bw}^{\text{NTK}}_{\text{R}}~\|_2\\
    \|\zeta_{\text{NR}} - \zeta_{\text{R}}\|_2 &\leq \|~\frac{s-f^{\text{NTK}}_{\text{NR}}(\bx)}{\lambda + \|\bar{\bw}^{\text{NTK}}_{\text{NR}}\|^{2}_{2}}\bar{\bw}^{\text{NTK}}_{\text{NR}}\|_2 + \|\frac{f^{\text{NTK}}_{\text{R}}(\bx)-s}{\lambda + \|\bar{\bw}^{\text{NTK}}_{\text{R}}\|^{2}_{2}}\bar{\bw}^{\text{NTK}}_{\text{R}}~\|_2 \tag{Using triangle inequality}\\
    \|\zeta_{\text{NR}} - \zeta_{\text{R}}\|_2 &\leq \frac{\|\bar{\bw}^{\text{NTK}}_{\text{NR}}\|_2}{\lambda + \|\bar{\bw}^{\text{NTK}}_{\text{NR}}\|^{2}_{2}} + \frac{\|\bar{\bw}^{\text{NTK}}_{\text{R}}~\|_2}{\lambda + \|\bar{\bw}^{\text{NTK}}_{\text{R}}\|^{2}_{2}} \tag{for reliable recourses and using $\|k\bx\|_2 = |k|\|\bx\|_2$}\\
    \|\zeta_{\text{NR}} - \zeta_{\text{R}}\|_2 &\leq \frac{1}{\|\bar{\bw}^{\text{NTK}}_{\text{NR}}\|_{2}} + \frac{1}{\|\bar{\bw}^{\text{NTK}}_{\text{R}}\|_{2}}\tag{since $\lambda$ is a small positive constant}
\end{flalign}

Again, using reverse triangle inequality, we can write the cost difference of recourses generated for the \vanilla and \robust models as:
\begin{flalign*}
    \Big| \|\zeta_{\text{NR}}\|_2 - \|\zeta_{\text{R}}\|_2 \Big| &\leq \frac{1}{\|\bar{\bw}^{\text{NTK}}_{\text{NR}}\|_{2}} + \frac{1}{\|\bar{\bw}^{\text{NTK}}_{\text{R}}\|_{2}}  \tag{follows from reverse triangle inequality}\\
    -\Big(\frac{1}{\|\bar{\bw}^{\text{NTK}}_{\text{NR}}\|_{2}} + \frac{1}{\|\bar{\bw}^{\text{NTK}}_{\text{R}}\|_{2}} \Big) &\leq \|\zeta_{\text{NR}}\|_2 - \|\zeta_{\text{R}}\|_2  \leq \frac{1}{\|\bar{\bw}^{\text{NTK}}_{\text{NR}}\|_{2}} + \frac{1}{\|\bar{\bw}^{\text{NTK}}_{\text{R}}\|_{2}}\\
    -\frac{2}{\textup{H}(\|\bar{\bw}^{\textup{NTK}}_{\textup{NR}}\|_{2},\|\bar{\bw}^{\textup{NTK}}_{\textup{R}}\|_{2})} &\leq \|\zeta_{\textup{NR}}\|_2 - \|\zeta_{\textup{R}}\|_2 \leq \frac{2}{\textup{H}(\|\bar{\bw}^{\textup{NTK}}_{\textup{NR}}\|_{2},\|\bar{\bw}^{\textup{NTK}}_{\textup{R}}\|_{2})},
\end{flalign*}
where $\textup{H}(\|\bar{\bw}^{\textup{NTK}}_{\textup{NR}}\|_{2},\|\bar{\bw}^{\textup{NTK}}_{\textup{R}}\|_{2})$ represents the harmonic mean of $\|\bar{\bw}^{\textup{NTK}}_{\textup{NR}}\|_{2}$ and $\|\bar{\bw}^{\textup{NTK}}_{\textup{R}}\|_{2}$.
\end{proof}

\subsection{Proof for Theorem~\ref{thm:cost-bound-cchvae-sketch}}
\label{sec:proof-thm3}

\begin{theorem} (Cost difference for C-CHVAE)
    Let $\bz_{\textup{NR}}'$ and $\bz_{\textup{R}}'$ be the solution returned by the \chvae recourse method by sampling from $\ell_{p}$-norm ball in the latent space using an L$_{G}$-Lipschitz decoder $\cG(\cdot)$ for a \vanilla and \robust model. By definition of the recourse method, let $\bx_{\textup{NR}}'{=}\cG(\bz_{\textup{NR}}'){=}\bx+\zeta_{\textup{NR}}$ and $\bx_{\textup{R}}'{=}\cG(\bz_{\textup{R}}'){=}\bx+\zeta_{\textup{R}}$ be the respective recourses in the input space whose difference can then be bounded as:
    \begin{equation}
         \|\zeta_{\textup{NR}}\|_2 - \|\zeta_{\textup{R}}\|_2 \leq \Big|L_{G}(r_{\textup{R}}+r_{\textup{NR}})\Big|,
         \label{eq:proof_cost_difference-full}
    \end{equation}
    where $r_{\textup{NR}}$ and $r_{\textup{R}}$ be the corresponding radii chosen by the algorithm such that they successfully return a recourse for the \vanilla and \robust model.
\end{theorem}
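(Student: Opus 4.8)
The plan is to transport the latent-space search radii used by \chvae through the $L_G$-Lipschitz decoder into the input space, and then combine the two resulting estimates with the triangle and reverse-triangle inequalities, exactly in the style of the proofs of Theorems~\ref{thm:cost-bound-linear-full} and~\ref{thm:cost-bound-non-linear-full}.

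First I would record the guarantee supplied by the \chvae search. Writing $\bz_{0} := \cI_{\gamma}(\bx)$ for the (shared) latent encoding of $\bx$, the algorithm enlarges an $\ell_{p}$-ball around $\bz_{0}$ until it first contains a latent point whose decoding flips the prediction; the returned latent recourses therefore satisfy $\|\bz_{\textup{NR}}'-\bz_{0}\| \le r_{\textup{NR}}$ and $\|\bz_{\textup{R}}'-\bz_{0}\| \le r_{\textup{R}}$, with $r_{\textup{NR}},r_{\textup{R}}\ge 0$ the successful radii for the \vanilla and \robust models (I take $p=2$ here so the latent norm matches the $\ell_2$ cost metric). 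Since both models share the same encoder and decoder, $\bz_{0}$ is common to the two searches, so in input space the costs differ only through the decoded points: $\zeta_{\textup{NR}}-\zeta_{\textup{R}} = (\cG(\bz_{\textup{NR}}')-\bx)-(\cG(\bz_{\textup{R}}')-\bx) = \cG(\bz_{\textup{NR}}')-\cG(\bz_{\textup{R}}')$, so the $\bx$ term cancels regardless of any reconstruction error of the autoencoder.

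Next I would invoke the $L_{G}$-Lipschitzness of $\cG$ (Definition~\ref{def:generative}) to get $\|\zeta_{\textup{NR}}-\zeta_{\textup{R}}\|_{2} = \|\cG(\bz_{\textup{NR}}')-\cG(\bz_{\textup{R}}')\|_{2} \le L_{G}\|\bz_{\textup{NR}}'-\bz_{\textup{R}}'\|_{2}$, and then split through $\bz_{0}$ with the triangle inequality, $\|\bz_{\textup{NR}}'-\bz_{\textup{R}}'\|_{2} \le \|\bz_{\textup{NR}}'-\bz_{0}\|_{2}+\|\bz_{0}-\bz_{\textup{R}}'\|_{2} \le r_{\textup{NR}}+r_{\textup{R}}$, which yields $\|\zeta_{\textup{NR}}-\zeta_{\textup{R}}\|_{2}\le L_{G}(r_{\textup{NR}}+r_{\textup{R}})$. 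Finally, the reverse triangle inequality gives $\big|\|\zeta_{\textup{NR}}\|_{2}-\|\zeta_{\textup{R}}\|_{2}\big| \le \|\zeta_{\textup{NR}}-\zeta_{\textup{R}}\|_{2} \le L_{G}(r_{\textup{NR}}+r_{\textup{R}}) = |L_{G}(r_{\textup{NR}}+r_{\textup{R}})|$, since $L_{G},r_{\textup{NR}},r_{\textup{R}}$ are all nonnegative, and in particular $\|\zeta_{\textup{NR}}\|_{2}-\|\zeta_{\textup{R}}\|_{2}\le|L_{G}(r_{\textup{NR}}+r_{\textup{R}})|$, the claimed bound.

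The chain of inequalities is routine; the only genuine modeling step — and hence the natural place to scrutinize the statement — is the claim that the \chvae radii $r_{\textup{NR}},r_{\textup{R}}$ really do upper-bound the latent displacements $\|\bz_{\textup{NR}}'-\bz_{0}\|$ and $\|\bz_{\textup{R}}'-\bz_{0}\|$. This is immediate from how the growing-sphere search in latent space terminates, but it presumes the returned recourse lies inside (not merely near) the final ball queried, so I would state this as the operative assumption at the outset. A minor secondary point is the mismatch between the $\ell_{p}$ ball used by \chvae and the $\ell_{2}$ cost in the conclusion; I would dispose of it either by fixing $p=2$ for this theorem or by folding a norm-equivalence constant into $L_{G}$.
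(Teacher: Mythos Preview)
Your proof is correct and uses the same ingredients as the paper---Lipschitzness of $\cG$, the triangle inequality, and the reverse triangle inequality---just in a slightly different order: you apply Lipschitzness once to $\|\cG(\bz_{\textup{NR}}')-\cG(\bz_{\textup{R}}')\|$ and then split in latent space through $\bz_{0}$, whereas the paper first splits in input space through $\bx=\cG(\bz)$ and then applies Lipschitzness to each piece. Your ordering is marginally cleaner, since it does not tacitly invoke the perfect-reconstruction identity $\bx=\cG(\cI_{\gamma}(\bx))$ that the paper's intermediate step uses.
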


\begin{proof}
From the formulation of the counterfactual algorithm, we can write the difference between the counterfactuals $\bx_{\textup{R}}'$ and $\bx_{\textup{NR}}'$ as:
\begin{align}
        \|\bx_{\text{R}}' - \bx_{\text{NR}}'\|_{p} &= \|\cG(\bz_{\text{R}}') - \cG(\bz_{\text{NR}}')\|_{p}
   \label{eq:latent}
\end{align}

Using Equation~\ref{eq:latent}, we can derive the upper bound of the cost difference using triangle inequality properties.
\begin{align}
    \|\bx_{\text{R}}' - \bx_{\text{NR}}'\|_{p} &= \|\cG(\bz_{\text{R}}') - \bx + \bx - \cG(\bz_{\text{NR}}')\|_p \\
    \|\bx_{\text{R}}' - \bx_{\text{NR}}'\|_{p} &\leq \|\cG(\bz_{\text{R}}') - \bx\|_p + \|\bx - \cG(\bz_{\text{NR}}')\|_p \tag{using triangle inequality}\\
    &= \|\cG(\bz_{\text{R}}') - \cG(\bz)\|_p + \|\cG(\bz) - \cG(\bz_{\text{NR}}')\|_p \\
    &\leq L_{G}\|\bz_{\text{R}}'- \bz \|_p + L_{G}\|\bz - \bz_{\text{NR}}'\|_p \tag{from Bora et al.~\citep{bora2017compressed}} \\ 
    \|\bx_{\text{R}}' - \bx_{\text{NR}}'\|_{p} &\leq L_{\text{G}}(r_{\text{R}}+r_{\text{NR}}),
    \label{eq:cchvae-ub}
\end{align}
where $r_{\text{R}}$ and $r_{\text{NR}}$ are the radius of the $\ell_{p}$-norm for generating samples from the robust and baseline model, respectively.

Following Theorems 1-2, we use reverse triangle inequality on Equation~\ref{eq:cchvae-ub} and get the lower and upper bounds for the cost difference of C-CHVAE recourses generated using non-robust and adversarially robust models as:
\begin{flalign*}
    \Big| \|\zeta_{\text{NR}}\|_2 - \|\zeta_{\text{R}}\|_2 \Big| &\leq L_{\text{G}}(r_{\text{R}}+r_{\text{NR}})\\
    -L_{\text{G}}(r_{\text{R}}+r_{\text{NR}}) &\leq \|\zeta_{\text{NR}}\|_2 - \|\zeta_{\text{R}}\|_2  \leq L_{\text{G}}(r_{\text{R}}+r_{\text{NR}})
\end{flalign*}
\end{proof}

\subsection{Proof for Theorem~\ref{thm:valid-linear-sketch}}
\label{app:validity-linear}

\begin{theorem} (Validity comparison for linear model) For a given instance $\mathbf{x} \in \mathbb{R}^{d}$ and desired target label denoted by unity, let $\bx_{\textup{R}}'$ and $\bx_{\textup{NR}}'$ be the counterfactuals for \robust $f_{\textup{R}}(\bx)$ and \vanilla $f_{\textup{NR}}(\bx)$ models, respectively. Then, $\Pr(f_{\textup{NR}}(\bx_{\textup{NR}}') = 1) \geq \Pr(f_{\textup{R}}(\bx_{\textup{R}}') = 1)$ if $|f_{\textup{NR}}(\bx_{\textup{R}}') - f_{\textup{NR}}(\bx_{\textup{NR}}')| \leq \Delta \norm{\bx_{\textup{R}}'}_{2}$.
\label{thm:valid-linear-full}
\end{theorem}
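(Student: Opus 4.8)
The plan is to reduce the probabilistic claim to a one-dimensional comparison of the post-recourse scores, and then control the gap between the two models' scores at the robust recourse point using Lemma~\ref{thm:weight-linear-sketch}. First I would model the positive-class probability of a linear classifier through the logistic link, so that $\Pr(f(\bx') = 1) = \sigma(f(\bx'))$ with $f(\bx') = \bw^{T}\bx'$; equivalently, the log-odds $\log \frac{\Pr(f(\bx')=1)}{1-\Pr(f(\bx')=1)}$ equals $f(\bx')$ exactly. Since $\sigma$ (hence also $\log\sigma$) is strictly increasing, the target inequality $\Pr(f_{\textup{NR}}(\bx_{\textup{NR}}') = 1) \geq \Pr(f_{\textup{R}}(\bx_{\textup{R}}') = 1)$ is equivalent to the scalar inequality $f_{\textup{NR}}(\bx_{\textup{NR}}') \geq f_{\textup{R}}(\bx_{\textup{R}}')$. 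This monotone reduction is where the ``natural logarithms / data processing'' step enters: applying a monotone map to both sides does not change the direction of the comparison.

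Next I would decompose the robust score at its own recourse through the \vanilla weights:
\begin{align*}
    f_{\textup{R}}(\bx_{\textup{R}}') &= \bw_{\textup{R}}^{T}\bx_{\textup{R}}' = \bw_{\textup{NR}}^{T}\bx_{\textup{R}}' + (\bw_{\textup{R}} - \bw_{\textup{NR}})^{T}\bx_{\textup{R}}' \\
    &= f_{\textup{NR}}(\bx_{\textup{R}}') + (\bw_{\textup{R}} - \bw_{\textup{NR}})^{T}\bx_{\textup{R}}'.
\end{align*}
Applying Cauchy--Schwarz to the cross term and then Lemma~\ref{thm:weight-linear-sketch} gives $|(\bw_{\textup{R}} - \bw_{\textup{NR}})^{T}\bx_{\textup{R}}'| \leq \norm{\bw_{\textup{NR}} - \bw_{\textup{R}}}_{2}\,\norm{\bx_{\textup{R}}'}_{2} \leq \Delta\norm{\bx_{\textup{R}}'}_{2}$, hence $f_{\textup{R}}(\bx_{\textup{R}}') \leq f_{\textup{NR}}(\bx_{\textup{R}}') + \Delta\norm{\bx_{\textup{R}}'}_{2}$. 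Combining this with the hypothesis $|f_{\textup{NR}}(\bx_{\textup{R}}') - f_{\textup{NR}}(\bx_{\textup{NR}}')| \leq \Delta\norm{\bx_{\textup{R}}'}_{2}$ (so that the \vanilla score shifts by at most $\Delta\norm{\bx_{\textup{R}}'}_{2}$ between the two recourse points) pins $f_{\textup{NR}}(\bx_{\textup{NR}}')$ at or above $f_{\textup{R}}(\bx_{\textup{R}}')$; by the reduction of the first paragraph this is precisely the validity comparison.

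The step I expect to be the main obstacle is the sign/direction bookkeeping: the chain only closes if the one-sided consequences of Cauchy--Schwarz and of the stated hypothesis are aligned so that the score slack at $\bx_{\textup{R}}'$ coming from the weight gap is exactly absorbed by the change in the \vanilla score between $\bx_{\textup{NR}}'$ and $\bx_{\textup{R}}'$. I would therefore track carefully which recourse is ``native'' to which model, use the form of Lemma~\ref{thm:weight-linear-sketch} that bounds $\norm{\bw_{\textup{NR}} - \bw_{\textup{R}}}_{2}$ directly (rather than the two norms separately), and phrase the monotone reduction so that no hidden positivity assumption on $f(\bx')$ is needed. A secondary point is justifying the logistic-link model of validity; if validity is instead defined through a hard threshold, the same argument survives after replacing ``$\sigma$ monotone'' with ``$\Pr(\bw^{T}\bx' \geq 0)$ is nondecreasing in $\bw^{T}\bx'$'' under whatever noise model is assumed on $\bx'$.
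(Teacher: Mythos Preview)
Your proposal is essentially the same approach as the paper's proof: both model validity through the logistic link, reduce the probabilistic comparison to the scalar score inequality $f_{\textup{NR}}(\bx_{\textup{NR}}') \geq f_{\textup{R}}(\bx_{\textup{R}}')$ by monotonicity of $\sigma$, then isolate the weight-difference term $(\bw_{\textup{R}}-\bw_{\textup{NR}})^{T}\bx_{\textup{R}}'$ and bound it via Cauchy--Schwarz together with Lemma~\ref{thm:weight-linear-sketch}. Your additive decomposition $f_{\textup{R}}(\bx_{\textup{R}}') = f_{\textup{NR}}(\bx_{\textup{R}}') + (\bw_{\textup{R}}-\bw_{\textup{NR}})^{T}\bx_{\textup{R}}'$ is algebraically the same rearrangement the paper writes as $\bw_{\textup{NR}}^{T}(\bx_{\textup{R}}'-\bx_{\textup{NR}}') \leq (\bw_{\textup{NR}}-\bw_{\textup{R}})^{T}\bx_{\textup{R}}'$, and the concern you flag about sign/direction bookkeeping lands exactly on the paper's ``taking $\ell_2$-norm on both sides'' step.
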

\begin{proof}
For a linear model case, $\Pr(f(\bx) = 1) = \frac{e^{\bw^{T}\bx }}{1+ e^{\bw^{T}\bx}}$, where we take the sigmoid of the model output $\bw^{T}\bx$. Next, we derive the difference in probability of a valid recourse from \vanilla and \robust model:
\begin{align}
    \Pr(f_{\textup{NR}}(\bx_{\textup{NR}}') = 1) - \Pr(f_{\textup{R}}(\bx_{\textup{R}}') = 1) & = \frac{e^{\bw_{\textup{NR}}^{T}\bx_{\textup{NR}}'}}{1+ e^{\bw_{\textup{NR}}^{T}\bx_{\textup{NR}}'}} - \frac{e^{\bw_{\textup{R}}^{T}\bx_{\textup{R}}'}}{1+ e^{\bw_{\textup{R}}^{T}\bx_{\textup{R}}'}}
    \\ & = \frac{e^{\bw_{\textup{NR}}^{T}\bx_{\textup{NR}}'} - e^{\bw_{\textup{R}}^{T}\bx_{\textup{R}}'}}{(1+ e^{\bw_{\textup{R}}^{T}\bx_{\textup{R}}'})(1+ e^{\bw_{\textup{NR}}^{T}\bx_{\textup{NR}}'})}
\end{align}
Since $(1+ e^{\bw_{\textup{R}}^{T}\bx_{\textup{R}}'})(1+ e^{\bw_{\textup{NR}}^{T}\bx_{\textup{NR}}'}) > 0$, so  $\Pr(f_{\textup{NR}}(\bx_{\textup{NR}}') = 1) \geq \Pr(f_{\textup{R}}(\bx_{\textup{R}}') = 1))$ occurs when,
\begin{flalign}
e^{\bw_{\textup{NR}}^{T}\bx_{\textup{NR}}'} &\geq e^{\bw_{\textup{R}}^{T}\bx_{\textup{R}}'} \\ 
\bw_{\textup{NR}}^{T}(\bx_{\textup{NR}}' - \bx_{\textup{R}}') &\geq (\bw_{\textup{R}}^{T} - \bw_{\textup{NR}}^{T}) \bx_{\textup{R}}' \tag{Taking natural logarithm on both sides} \\
\bw_{\textup{NR}}^{T}(\bx_{\textup{R}}' - \bx_{\textup{NR}}' ) &\leq ( \bw_{\textup{NR}}^{T} - \bw_{\textup{R}}^{T} ) \bx_{\textup{R}}' \\
\norm{\bw_{\textup{NR}}^{T}(\bx_{\textup{R}}' - \bx_{\textup{NR}}')}_{2} &\leq \norm{( \bw_{\textup{NR}}^{T} - \bw_{\textup{R}}^{T}) \bx_{\textup{R}}'}_{2} \tag{Taking $\ell_2$-norm on both sides}\\
\norm{\bw_{\textup{NR}}^{T}(\bx_{\textup{R}}' - \bx_{\textup{NR}}')}_{2} &\leq \norm{\bw_{\textup{NR}} - \bw_{\textup{R}}}_{2} \norm{\bx_{\textup{R}}'}_{2} \tag{Using Cauchy-Schwartz} \\
\norm{\bw_{\textup{NR}}^{T}(\bx_{\textup{R}}' - \bx_{\textup{NR}}')}_{2} &\leq \Delta\norm{\bx_{\textup{R}}'}_{2} \tag{From Lemma~\ref{thm:weight-linear-full} }\\
|f_{\textup{NR}}(\bx_{\textup{R}}') - f_{\textup{NR}}(\bx_{\textup{NR}}')| &\leq \Delta\norm{\bx_{\textup{R}}'}_{2}
\end{flalign}
\end{proof}

\subsection{Proof for Lemma~\ref{thm:weight-non-linear-sketch}}
\label{sec:proof-lemma2-full}

\begin{lemma}(Difference between \vanilla and \robust model weights for wide neural network) For a given NTK model, let $\bw_{\textup{NR}}^{\textup{NTK}}$ and $\bw_{\textup{R}}^{\textup{NTK}}$ be weights of the \vanilla and \robust model. Then, for a wide neural network model with ReLU activations, the difference in the weights can be bounded as:
\begin{equation}
    \|\bw^{\textup{NTK}}_{\textup{NR}} - \bw^{\textup{NTK}}_{\textup{R}}\|_2 \leq \|(\bK^{\infty}(\bX, \bX) + \beta\bI_{n})^{-1} - (\bK^{\infty}(\bX_{\textup{R}}, \bX_{\textup{R}}) + \beta\bI_{n})^{-1}\|_2 \|\bY\|_2
\end{equation}
where $\bK^{\infty}(\cdot~;~\cdot)$ is the kernel matrix for ReLU networks as defined in Definition~\ref{def:kernel-matrix}$ and (\bX, \bX_{\textup{R}})$ are the training samples for the \vanilla and \robust models.
\label{thm:weight-non-linear-full}
\end{lemma}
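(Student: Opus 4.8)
The plan is to reduce the claim to a one-line operator-norm estimate, since both weight vectors are given in closed form. First I would recall the closed-form NTK solutions $\bw^{\textup{NTK}}_{\textup{NR}}=(\bK^{\infty}(\bX,\bX)+\beta\bI_{n})^{-1}\bY$ and $\bw^{\textup{NTK}}_{\textup{R}}=(\bK^{\infty}(\bX_{\textup{R}},\bX_{\textup{R}})+\beta\bI_{n})^{-1}\bY$, and note that both inverses are well defined: $\bK^{\infty}$ is a kernel Gram matrix (Definition~\ref{def:kernel-matrix}), hence positive semi-definite, so adding $\beta\bI_{n}$ with $\beta>0$ yields positive-definite, invertible matrices. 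Subtracting the two expressions and factoring out the common label vector gives $\bw^{\textup{NTK}}_{\textup{NR}}-\bw^{\textup{NTK}}_{\textup{R}}=\big[(\bK^{\infty}(\bX,\bX)+\beta\bI_{n})^{-1}-(\bK^{\infty}(\bX_{\textup{R}},\bX_{\textup{R}})+\beta\bI_{n})^{-1}\big]\bY$.

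Next I would take the Euclidean norm of both sides and apply the compatibility of the spectral (operator) matrix norm with the $\ell_{2}$ vector norm, $\norm{\bA\bv}_{2}\le\norm{\bA}_{2}\norm{\bv}_{2}$ — the step the proof sketch calls ``Cauchy--Schwartz'' — taking $\bA$ to be the bracketed difference of the two inverses and $\bv=\bY$. This immediately yields $\norm{\bw^{\textup{NTK}}_{\textup{NR}}-\bw^{\textup{NTK}}_{\textup{R}}}_{2}\le\Delta_{\textup{K}}\norm{\bY}_{2}$ with $\Delta_{\textup{K}}=\norm{(\bK^{\infty}(\bX,\bX)+\beta\bI_{n})^{-1}-(\bK^{\infty}(\bX_{\textup{R}},\bX_{\textup{R}})+\beta\bI_{n})^{-1}}_{2}$, which is the first claim. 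For the two-sided bound on $\norm{\bw^{\textup{NTK}}_{\textup{R}}}_{2}$ I would then invoke the reverse triangle inequality $\big|\,\norm{\bw^{\textup{NTK}}_{\textup{NR}}}_{2}-\norm{\bw^{\textup{NTK}}_{\textup{R}}}_{2}\,\big|\le\norm{\bw^{\textup{NTK}}_{\textup{NR}}-\bw^{\textup{NTK}}_{\textup{R}}}_{2}\le\Delta_{\textup{K}}\norm{\bY}_{2}$ and unfold the absolute value, exactly mirroring the end of the argument in Lemma~\ref{thm:weight-linear-sketch}, to get $\norm{\bw^{\textup{NTK}}_{\textup{NR}}}_{2}-\Delta_{\textup{K}}\norm{\bY}_{2}\le\norm{\bw^{\textup{NTK}}_{\textup{R}}}_{2}\le\norm{\bw^{\textup{NTK}}_{\textup{NR}}}_{2}+\Delta_{\textup{K}}\norm{\bY}_{2}$.

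The algebra here is genuinely routine, so the ``main obstacle'' is mostly about stating the right hypotheses cleanly: making explicit that the matrix norm appearing in $\Delta_{\textup{K}}$ is the spectral norm (so that $\norm{\bA\bY}_{2}\le\norm{\bA}_{2}\norm{\bY}_{2}$ is legitimate) and that both regularized kernel matrices are invertible. One modeling point deserves a sentence of justification: in the wide-network / lazy-training regime we treat adversarial training as kernel ridge regression on the perturbed design $\bX_{\textup{R}}$ with the \emph{same} labels $\bY$, which is precisely what lets $\bY$ factor out of the difference; if the perturbations were label-dependent this cancellation would fail and an extra additive term would need to be controlled. Everything else is a direct computation, so no further difficulty is anticipated.
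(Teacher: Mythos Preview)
Your proposal is correct and essentially identical to the paper's own proof: recall the two closed-form NTK weights, subtract, factor out $\bY$, apply the operator-norm/Cauchy--Schwarz bound $\norm{\bA\bY}_{2}\le\norm{\bA}_{2}\norm{\bY}_{2}$, and then use the reverse triangle inequality for the two-sided bound on $\norm{\bw^{\textup{NTK}}_{\textup{R}}}_{2}$. Your added remarks on invertibility of the regularized kernel matrices and on the shared label vector $\bY$ are sound clarifications that the paper leaves implicit.
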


\begin{proof}
The NTK weights for a wide neural network model with ReLU activations are given as:
\begin{align*}
    \bw^{\textup{NTK}}_{\textup{NR}}=(\bK^{\infty}(\bX, \bX) + \beta\bI_{n})^{-1}\bY\\   
    \bw^{\textup{NTK}}_{\textup{R}}=(\bK^{\infty}(\bX_{\textup{R}}, \bX_{\textup{R}}) + \beta\bI_{n})^{-1}\bY,
\end{align*}
Using the above equations, we can now write the difference between the weights of the \vanilla and \robust NTK models as:
\begin{flalign}
    \bw^{\textup{NTK}}_{\textup{NR}} - \bw^{\textup{NTK}}_{\textup{R}} =(\bK^{\infty}(\bX, \bX) + \beta\bI_{n})^{-1}\bY - (\bK^{\infty}(\bX_{\textup{R}}, \bX_{\textup{R}}) + \beta\bI_{n})^{-1}\bY,
\end{flalign}
Taking $\ell_2$-norm on both sides, we get:
\begin{flalign}
    \|\bw^{\textup{NTK}}_{\textup{NR}} - \bw^{\textup{NTK}}_{\textup{R}}\|_2 &=\|((\bK^{\infty}(\bX, \bX) + \beta\bI_{n})^{-1} - (\bK^{\infty}(\bX_{\textup{R}}, \bX_{\textup{R}}) + \beta\bI_{n})^{-1})\bY\|_2\\
    &\leq \|(\bK^{\infty}(\bX, \bX) + \beta\bI_{n})^{-1} - (\bK^{\infty}(\bX_{\textup{R}}, \bX_{\textup{R}}) + \beta\bI_{n})^{-1}\|_2 \|\bY\|_2 \tag{Using Cauchy-Schwartz}
 \end{flalign}

 Similar to Lemma~\ref{thm:weight-linear-full}, we can bound the difference between the weights of the \vanilla and \robust NTK models as:
 
 Using reverse triangle inequality, we can write: $ \|\bw^{\textup{NTK}}_{\textup{NR}} - \bw^{\textup{NTK}}_{\textup{R}}\|_2 \geq \Big| \|\bw^{\textup{NTK}}_{\textup{NR}}\|_2 - \|\bw^{\textup{NTK}}_{\textup{R}}\|_2 \Big|$, \ie, $\|\bw^{\textup{NTK}}_{\textup{NR}} - \bw^{\textup{NTK}}_{\textup{R}}\|_2 \geq \|\bw^{\textup{NTK}}_{\textup{NR}}\|_2 - \|\bw^{\textup{NTK}}_{\textup{R}}\|_2$ and $\|\bw^{\textup{NTK}}_{\textup{NR}} - \bw^{\textup{NTK}}_{\textup{R}}\|_2 \geq \|\bw^{\textup{NTK}}_{\textup{R}}\|_2 - \|\bw^{\textup{NTK}}_{\textup{NR}}\|_2$. Hence, the difference between the weights of the \vanilla and \robust NTK models as:
\begin{flalign*}
    \Big| \|\bw^{\textup{NTK}}_{\textup{NR}}\|_2 - \|\bw^{\textup{NTK}}_{\textup{R}}\|_2 \Big| &\leq \|(\bK^{\infty}(\bX, \bX) + \beta\bI_{n})^{-1} - (\bK^{\infty}(\bX_{\textup{R}}, \bX_{\textup{R}}) + \beta\bI_{n})^{-1}\|_2 \|\bY\|_2 \tag{follows from reverse triangle inequality}\\
    -\Delta_{\text{K}}\|\bY\|_2 &\leq \|\bw^{\textup{NTK}}_{\textup{NR}}\|_2 - \|\bw^{\textup{NTK}}_{\textup{R}}\|_2  \leq \Delta_{\text{K}}\|\bY\|_2 \tag{where $\Delta_{\text{K}} {=} \|(\bK^{\infty}(\bX, \bX) {+} \beta\bI_{n})^{-1} {-} (\bK^{\infty}(\bX_{\textup{R}}, \bX_{\textup{R}}) {+}\beta\bI_{n})^{-1}\|_2$}\\
    -\Delta_{\text{K}}\|\bY\|_2 &\leq \|\bw^{\textup{NTK}}_{\textup{R}}\|_2 - \|\bw^{\textup{NTK}}_{\textup{NR}}\|_2  \leq \Delta_{\text{K}}\|\bY\|_2\\
    \|\bw^{\textup{NTK}}_{\textup{NR}}\|_2 - \Delta_{\text{K}}\|\bY\|_2 &\leq \|\bw^{\textup{NTK}}_{\textup{R}}\|_2  \leq \|\bw^{\textup{NTK}}_{\textup{NR}}\|_2 + \Delta_{\text{K}}\|\bY\|_2
\end{flalign*}
\end{proof}

\subsection{Proof for Theorem~\ref{thm:valid-nonlinear-sketch}}
\label{app:validity-nonlinear}
\begin{theorem} (Validity Comparison for wide neural network) For a given instance $\bx \in \mathbb{R}^{d}$ and desired target label denoted by unity, let $\bx_{\textup{R}}'$ and $\bx_{\textup{NR}}'$ be the counterfactuals for \robust $f_{\textup{R}}^{\textup{NTK}}(\bx)$ and \vanilla $f_{\textup{NR}}^{\textup{NTK}}(\bx)$ wide neural network models respectively. Then, $\Pr(f_{\textup{NR}}^{\textup{NTK}}(\bx_{\textup{NR}}') = 1) \geq \Pr(f_{\textup{R}}^{\textup{NTK}}(\bx_{\textup{R}}') = 1)$ if $\norm{(\bK^{\infty}(\bx_{\textup{R}}', \bX_{\textup{R}}) - \bK^{\infty}(\bx_{\textup{NR}}', \bX) )^{\textup{T}} \bw_{\textup{NR}}^{\textup{NTK}}} \leq  \norm{\bK^{\infty}(\bx_{\textup{R}}', \bX_{\textup{R}})^{\textup{T}}} \Delta_{\textup{K}} \|\bY\|_2 $.
\label{thm:valid-nonlinear-full}
\end{theorem}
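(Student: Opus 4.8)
The plan is to replay the linear-model argument of Theorem~\ref{thm:valid-linear-full} almost verbatim, substituting the NTK score for the linear score and the NTK weight gap of Lemma~\ref{thm:weight-non-linear-full} for the linear weight gap $\Delta$. The starting point is that for a wide ReLU network the prediction is $f^{\textup{NTK}}(\bx)=\big(\bK^{\infty}(\bx,\bX)\big)^{\textup{T}}\bw^{\textup{NTK}}$, so the validity probability is this score passed through a sigmoid, $\Pr(f^{\textup{NTK}}(\bx)=1)=e^{f^{\textup{NTK}}(\bx)}/(1+e^{f^{\textup{NTK}}(\bx)})$. Everything then reduces to comparing the two scalar scores $f_{\textup{NR}}^{\textup{NTK}}(\bx_{\textup{NR}}')$ and $f_{\textup{R}}^{\textup{NTK}}(\bx_{\textup{R}}')$.

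Concretely, I would first form $\Pr(f_{\textup{NR}}^{\textup{NTK}}(\bx_{\textup{NR}}')=1)-\Pr(f_{\textup{R}}^{\textup{NTK}}(\bx_{\textup{R}}')=1)$ over the common denominator $(1+e^{f_{\textup{NR}}^{\textup{NTK}}(\bx_{\textup{NR}}')})(1+e^{f_{\textup{R}}^{\textup{NTK}}(\bx_{\textup{R}}')})$, which is strictly positive, so the sign of the difference matches the sign of $e^{f_{\textup{NR}}^{\textup{NTK}}(\bx_{\textup{NR}}')}-e^{f_{\textup{R}}^{\textup{NTK}}(\bx_{\textup{R}}')}$; by monotonicity of $\exp$ (take logs) the claim $\Pr_{\textup{NR}}\geq\Pr_{\textup{R}}$ is equivalent to $\big(\bK^{\infty}(\bx_{\textup{NR}}',\bX)\big)^{\textup{T}}\bw_{\textup{NR}}^{\textup{NTK}}\geq\big(\bK^{\infty}(\bx_{\textup{R}}',\bX_{\textup{R}})\big)^{\textup{T}}\bw_{\textup{R}}^{\textup{NTK}}$. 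Next I would add and subtract $\big(\bK^{\infty}(\bx_{\textup{R}}',\bX_{\textup{R}})\big)^{\textup{T}}\bw_{\textup{NR}}^{\textup{NTK}}$ and rearrange, turning this into $\big(\bK^{\infty}(\bx_{\textup{R}}',\bX_{\textup{R}})-\bK^{\infty}(\bx_{\textup{NR}}',\bX)\big)^{\textup{T}}\bw_{\textup{NR}}^{\textup{NTK}}\leq\big(\bK^{\infty}(\bx_{\textup{R}}',\bX_{\textup{R}})\big)^{\textup{T}}\big(\bw_{\textup{NR}}^{\textup{NTK}}-\bw_{\textup{R}}^{\textup{NTK}}\big)$. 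Passing to $\ell_2$-norms and bounding the right side by Cauchy--Schwarz gives the sufficient condition $\norm{\big(\bK^{\infty}(\bx_{\textup{R}}',\bX_{\textup{R}})-\bK^{\infty}(\bx_{\textup{NR}}',\bX)\big)^{\textup{T}}\bw_{\textup{NR}}^{\textup{NTK}}}\leq\norm{\bK^{\infty}(\bx_{\textup{R}}',\bX_{\textup{R}})^{\textup{T}}}\,\norm{\bw_{\textup{NR}}^{\textup{NTK}}-\bw_{\textup{R}}^{\textup{NTK}}}$, and finally invoking Lemma~\ref{thm:weight-non-linear-full} to replace $\norm{\bw_{\textup{NR}}^{\textup{NTK}}-\bw_{\textup{R}}^{\textup{NTK}}}$ by $\Delta_{\textup{K}}\|\bY\|_2$ produces exactly the condition in the theorem statement.

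The main obstacle --- and the only place this is more than a transcription of the linear proof --- is the add-and-subtract decomposition: in the linear case both models share the feature map $\bx\mapsto\bx$, whereas here the robust model's kernel is built from the perturbed training data $\bX_{\textup{R}}$ and is evaluated at a different query point $\bx_{\textup{R}}'$, so one must split the score discrepancy into a kernel term $\big(\bK^{\infty}(\bx_{\textup{R}}',\bX_{\textup{R}})-\bK^{\infty}(\bx_{\textup{NR}}',\bX)\big)^{\textup{T}}\bw_{\textup{NR}}^{\textup{NTK}}$ and a weight term $\big(\bK^{\infty}(\bx_{\textup{R}}',\bX_{\textup{R}})\big)^{\textup{T}}\big(\bw_{\textup{NR}}^{\textup{NTK}}-\bw_{\textup{R}}^{\textup{NTK}}\big)$ precisely so that Lemma~\ref{thm:weight-non-linear-full} applies to the weight term and the kernel term is left in the form written in the statement. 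One also has to be careful, as in the linear case, that the norm/Cauchy--Schwarz step is used only to pass to a \emph{sufficient} condition for the desired score inequality; the sigmoid identity, the positivity of the denominator, the logarithm step, and the remaining linear algebra are all routine.
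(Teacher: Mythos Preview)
Your proposal is correct and follows essentially the same route as the paper: sigmoid difference over a positive common denominator, monotonicity of $\exp$ to reduce to the score inequality, the add--and--subtract of $\bK^{\infty}(\bx_{\textup{R}}',\bX_{\textup{R}})^{\textup{T}}\bw_{\textup{NR}}^{\textup{NTK}}$ to split into a kernel term and a weight term, then Cauchy--Schwarz and Lemma~\ref{thm:weight-non-linear-full}. Your remark that the norm step yields only a \emph{sufficient} condition is, if anything, more careful than the paper's own presentation of that step.
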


\begin{proof}
From the NTK model definition, we can write:
\begin{equation}
        f^{\text{NTK}}(\bx) = \Big(\bK^{\infty}(\bx, \bX)\Big)^{\text{T}}\bw^{\text{NTK}},
\end{equation}
For a wide neural network model with ReLU activations, $\Pr(f^{\text{NTK}}(\bx) = 1) = \frac{e^{\bK^{\infty}(\bx, \bX)^{\text{T}}\bw^{\text{NTK}} }}{1+ e^{\bK^{\infty}(\bx, \bX)^{\text{T}}\bw^{\text{NTK}}}}$, which is the sigmoid of the model output. Next, we derive the difference in probability of a valid recourse from \vanilla and \robust model:
\begin{align}
    \Pr(f_{\textup{NR}}^{\text{NTK}}(\bx_{\textup{NR}}') = 1) - \Pr(f_{\textup{R}}^{\text{NTK}}(\bx_{\textup{R}}') = 1) & = \frac{e^{\bK^{\infty}(\bx_{\textup{NR}}', \bX)^{\text{T}}\bw^{\text{NTK}}_{\text{NR}}}}{1+ e^{\bK^{\infty}(\bx_{\textup{NR}}', \bX)^{\text{T}}\bw^{\text{NTK}}_{\text{NR}}}} - \frac{e^{\bK^{\infty}(\bx_{\textup{R}}', \bX_{\text{R}})^{\text{T}}\bw^{\text{NTK}}_{\text{R}}}}{1+ e^{\bK^{\infty}(\bx_{\textup{R}}', \bX_{\text{R}})^{\text{T}}\bw^{\text{NTK}}_{\text{R}}}}
    \\ & = \frac{e^{\bK^{\infty}(\bx_{\textup{NR}}', \bX)^{\text{T}}\bw^{\text{NTK}}_{\text{NR}}} - e^{\bK^{\infty}(\bx_{\textup{R}}', \bX_{\text{R}})^{\text{T}}\bw^{\text{NTK}}_{\text{R}}}}{(1+ e^{\bK^{\infty}(\bx_{\textup{NR}}', \bX)^{\text{T}}\bw^{\text{NTK}}_{\text{NR}}})(1+ e^{\bK^{\infty}(\bx_{\textup{R}}', \bX_{\text{R}})^{\text{T}}\bw^{\text{NTK}}_{\text{R}}})}
\end{align}

Since $(1+ e^{\bK^{\infty}(\bx_{\textup{NR}}', \bX)^{\text{T}}\bw^{\text{NTK}}_{\text{NR}}})(1+ e^{\bK^{\infty}(\bx_{\textup{R}}', \bX_{\text{R}})^{\text{T}}\bw^{\text{NTK}}_{\text{R}}}) > 0$, so  $\Pr(f_{\textup{NR}}^{\text{NTK}}(\bx_{\textup{NR}}') = 1) \geq \Pr(f_{\textup{R}}^{\text{NTK}}(\bx_{\textup{R}}') = 1)$ occurs when,
\begin{flalign}
e^{\bK^{\infty}(\bx_{\textup{NR}}', \bX)^{\text{T}}\bw^{\text{NTK}}_{\text{NR}}} &\geq e^{\bK^{\infty}(\bx_{\textup{R}}', \bX_{\text{R}})^{\text{T}}\bw^{\text{NTK}}_{\text{R}}} \\ 
\bK^{\infty}(\bx_{\textup{NR}}', \bX)^{\text{T}}\bw^{\text{NTK}}_{\text{NR}} &\geq \bK^{\infty}(\bx_{\textup{R}}', \bX_{\text{R}})^{\text{T}}\bw^{\text{NTK}}_{\text{R}} \tag{Taking natural logarithm on both sides} \\ 
(\bK^{\infty}(\bx_{\textup{NR}}', \bX) - \bK^{\infty}(\bx_{\textup{R}}', \bX_{\text{R}}) )^{\text{T}} \bw_{\textup{NR}}^{\text{NTK}} &\geq \bK^{\infty}(\bx_{\textup{R}}', \bX_{\text{R}})^{\text{T}}(\bw_{\textup{R}}^{\text{NTK}} - \bw_{\textup{NR}}^{\text{NTK}})   \\
(\bK^{\infty}(\bx_{\textup{R}}', \bX_{\text{R}}) - \bK^{\infty}(\bx_{\textup{NR}}', \bX) )^{\text{T}} \bw_{\textup{NR}}^{\text{NTK}} &\leq \bK^{\infty}(\bx_{\textup{R}}', \bX_{\text{R}})^{\text{T}}(\bw_{\textup{NR}}^{\text{NTK}} - \bw_{\textup{R}}^{\text{NTK}})   \\
\norm{(\bK^{\infty}(\bx_{\textup{R}}', \bX_{\text{R}}) - \bK^{\infty}(\bx_{\textup{NR}}', \bX) )^{\text{T}} \bw_{\textup{NR}}^{\text{NTK}}} &\leq \norm{\bK^{\infty}(\bx_{\textup{R}}', \bX_{\text{R}})^{\text{T}}(\bw_{\textup{NR}}^{\text{NTK}} - \bw_{\textup{R}}^{\text{NTK}})} \tag{Taking norm on both sides}\\
\norm{(\bK^{\infty}(\bx_{\textup{R}}', \bX_{\text{R}}) - \bK^{\infty}(\bx_{\textup{NR}}', \bX) )^{\text{T}} \bw_{\textup{NR}}^{\text{NTK}}} &\leq \norm{\bK^{\infty}(\bx_{\textup{R}}', \bX_{\text{R}})^{\text{T}}} \norm{\bw_{\textup{NR}}^{\text{NTK}} - \bw_{\textup{R}}^{\text{NTK}}} \tag{Using Cauchy-Schwartz} \\
\norm{(\bK^{\infty}(\bx_{\textup{R}}', \bX_{\text{R}}) - \bK^{\infty}(\bx_{\textup{NR}}', \bX) )^{\text{T}} \bw_{\textup{NR}}^{\text{NTK}}} &\leq  \norm{\bK^{\infty}(\bx_{\textup{R}}', \bX_{\text{R}})^{\text{T}}} \Delta_{\text{K}} \|\bY\|_2  \tag{From Lemma~\ref{thm:weight-non-linear-full}}
\end{flalign}
\end{proof}

\section{Additional Experimental Results}
\label{app:results}

In this section, we have plots for cost differences, validity, and adversarial accuracy for the two logistic regression and neural network models trained on three real-world datasets. 

\begin{figure}[ht]
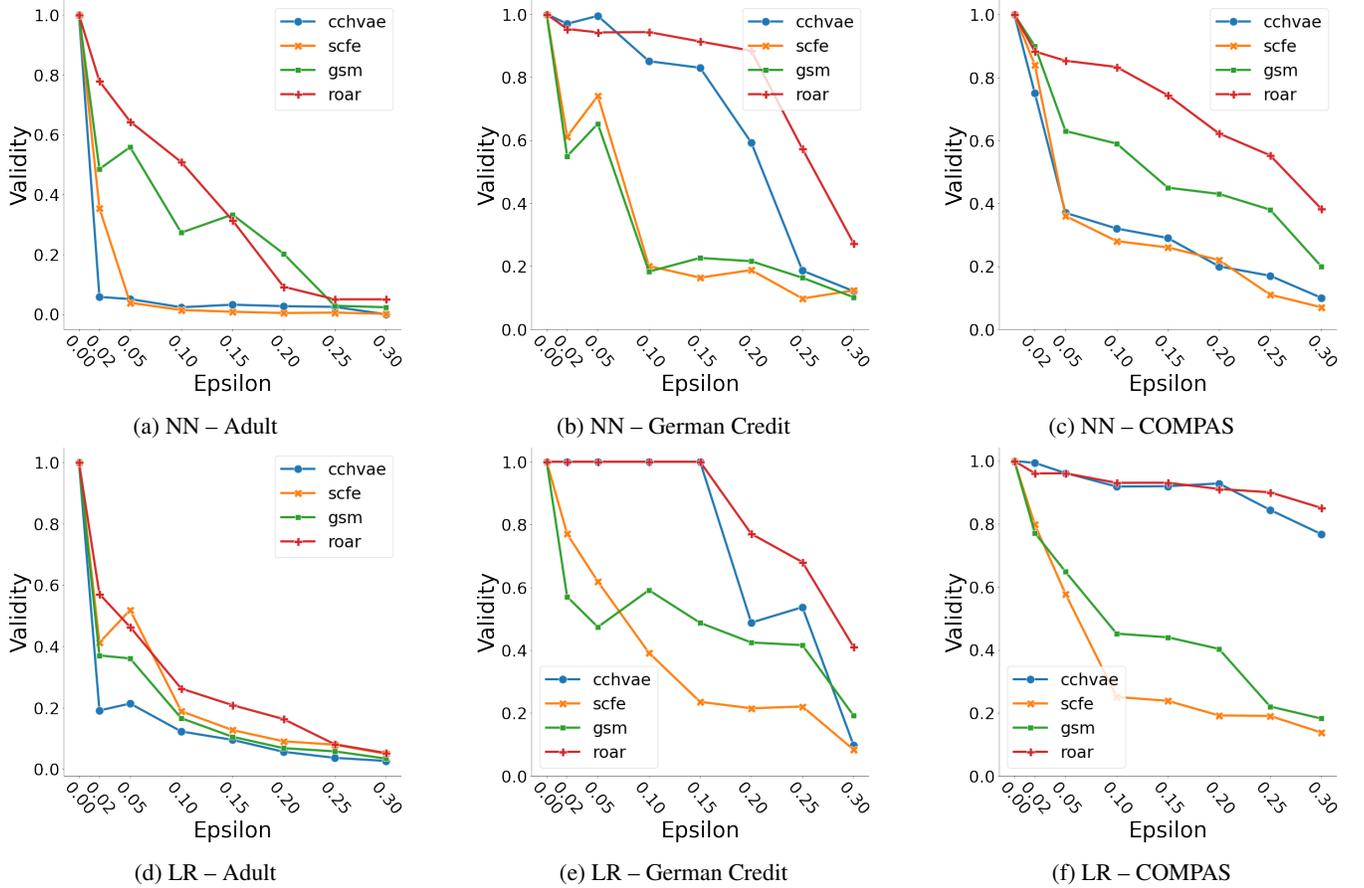

\centering
\begin{subfigure}{.3\linewidth}
    \centering
    \includegraphics[width=\textwidth]{results/adult/adult_nn_new_validity_final.png}
    \caption{NN -- Adult}\label{fig:validity1}
\end{subfigure}
    \hfill
\begin{subfigure}{.3\linewidth}
    \centering
    \includegraphics[width=\textwidth]{results/german/german_validity_nn.png}
    \caption{NN -- German Credit}\label{fig:validity2}
\end{subfigure}
   \hfill
\begin{subfigure}{.3\linewidth}
    \centering
    \includegraphics[width=\textwidth]{results/compas/compas_nn_validity.png}
    \caption{NN -- COMPAS}\label{fig:validity3}
\end{subfigure}
\bigskip
\begin{subfigure}{.3\linewidth}
    \centering
            \includegraphics[width=\textwidth]{results/adult/adult_lr_validity.png}
    \caption{LR -- Adult}\label{fig:validity4}
\end{subfigure}
    \hfill
\begin{subfigure}{.3\linewidth}
    \centering
    \includegraphics[width=\textwidth]{results/german/german_lr_validity.png}
    \caption{LR -- German Credit}\label{fig:validity5}
\end{subfigure}
   \hfill
\begin{subfigure}{.3\linewidth}
    \centering
    \includegraphics[width=\textwidth]{results/compas/compas_lr_validity.png}
    \caption{LR -- COMPAS}\label{fig:validity6}
\end{subfigure}
\caption{Analyzing validity of recourse generated using \vanilla and \robust Logistic Regression (LR) and Neural Networks (NN) for Adult, COMPAS, and German Credit datasets. We find that the validity decreases for increasing values of $\epsilon$.}\label{fig:validity-all}
\end{figure}

\begin{figure}[ht]
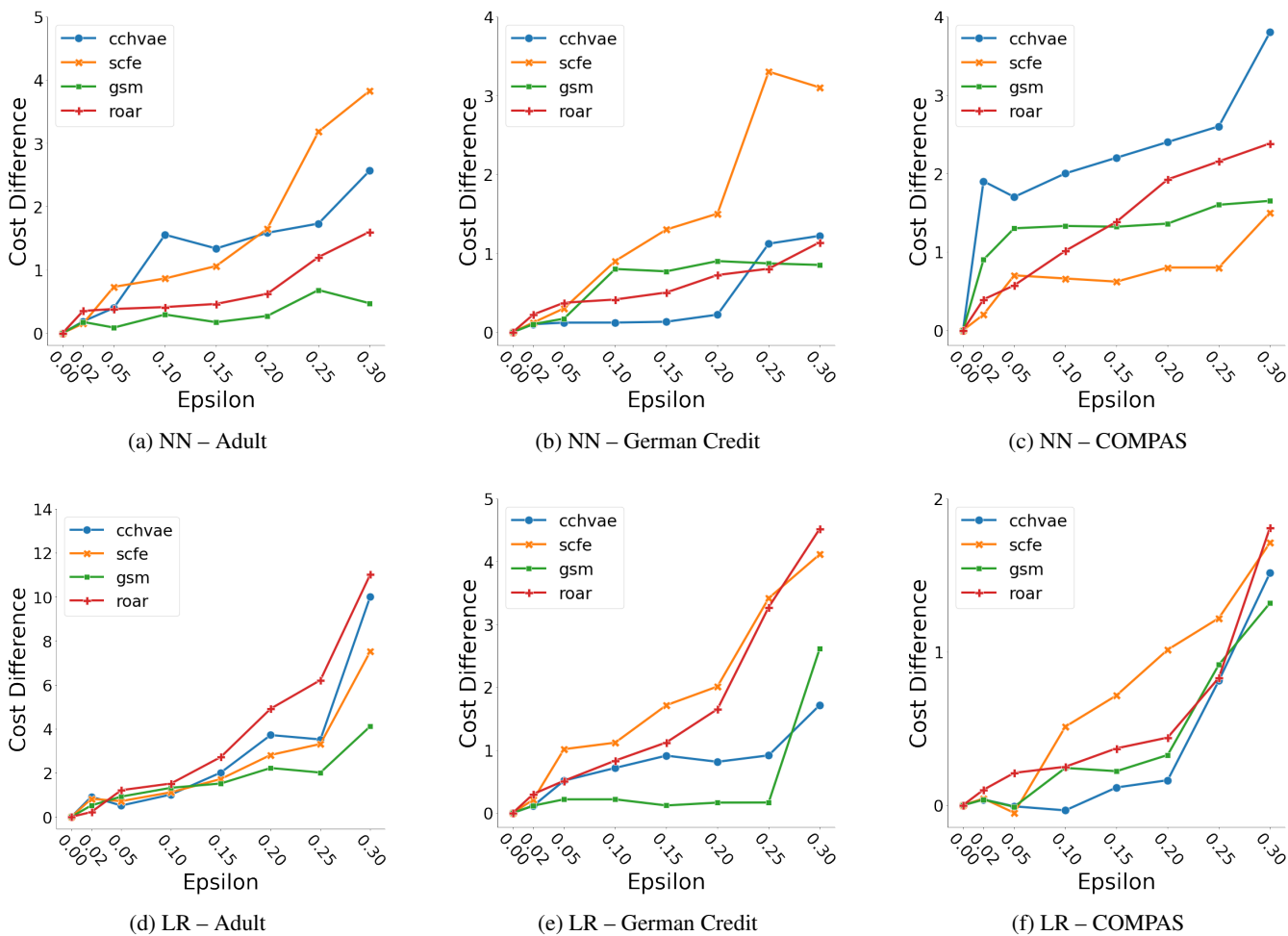

\centering
\begin{subfigure}{.3\linewidth}
    \centering
            \includegraphics[width=\textwidth]{results/adult/adult_nn_cost_difference_final.png}
    \caption{NN -- Adult}\label{fig:cost1}
\end{subfigure}
    \hfill
\begin{subfigure}{.3\linewidth}
    \centering
    \includegraphics[width=\textwidth]{results/german/german_nn_cost_difference.png}
    \caption{NN -- German Credit}\label{fig:cost2}
\end{subfigure}
   \hfill
\begin{subfigure}{.3\linewidth}
    \centering
    \includegraphics[width=\textwidth]{results/compas/compas_nn_cost_difference.png}
    \caption{NN -- COMPAS}\label{fig:cost3}
\end{subfigure}

\bigskip
\begin{subfigure}{.3\linewidth}
    \centering
            \includegraphics[width=\textwidth]{results/adult/adult_lr_cost_difference.png}
    \caption{LR -- Adult}\label{fig:cost4}
\end{subfigure}
    \hfill
\begin{subfigure}{.3\linewidth}
    \centering
    \includegraphics[width=\textwidth]{results/german/german_lr_cost_difference.png}
    \caption{LR -- German Credit}\label{fig:cost5}
\end{subfigure}
   \hfill
\begin{subfigure}{.3\linewidth}
    \centering
    \includegraphics[width=\textwidth]{results/compas/compas_lr_cost_difference.png}
    \caption{LR -- COMPAS}\label{fig:cost6}
\end{subfigure}
\caption{Analyzing cost differences between recourse generated using \vanilla and
\robust Logistic Regression (LR) and Neural Networks(NN) for Adult, COMPAS, and German Credit datasets. We find that the cost difference (i.e., $\ell_{2}-$norm) between the recourses generated for \vanilla and \robust models increases for increasing values of $\epsilon$.}
\label{fig:cost-analysis-all}
\end{figure}

\begin{figure}
\centering
\begin{subfigure}{.3\linewidth}
    \centering
            \includegraphics[width=\textwidth]{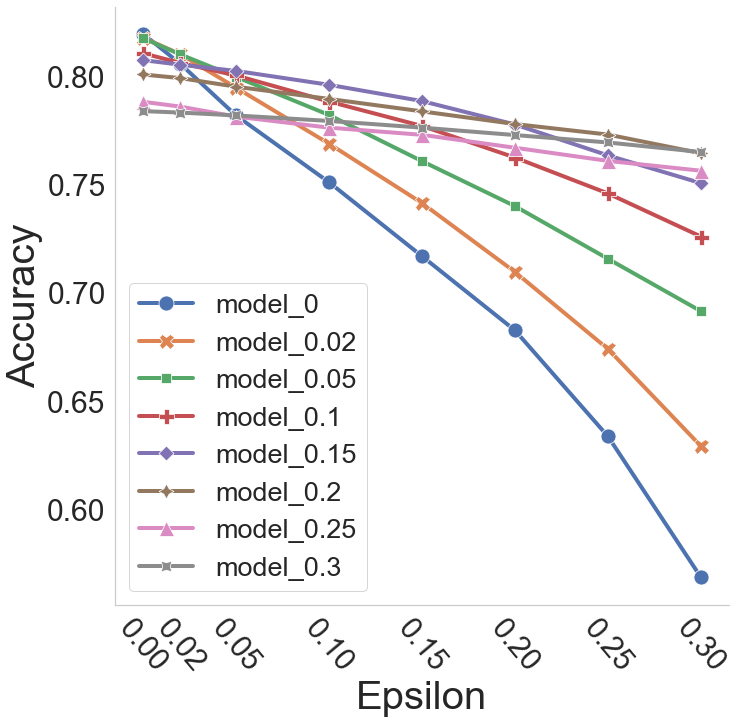}
    \caption{NN -- Adult}\label{fig:acc-adv1}
\end{subfigure}
    \hfill
\begin{subfigure}{.3\linewidth}
    \centering
    \includegraphics[width=\textwidth]{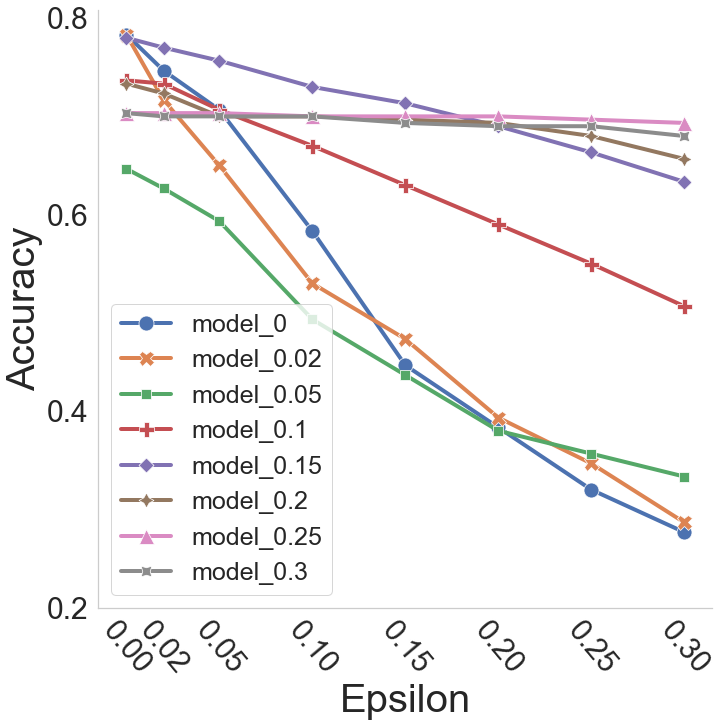}
    \caption{NN -- German Credit}\label{fig:acc-adv2}
\end{subfigure}
   \hfill
\begin{subfigure}{.3\linewidth}
    \centering
    \includegraphics[width=\textwidth]{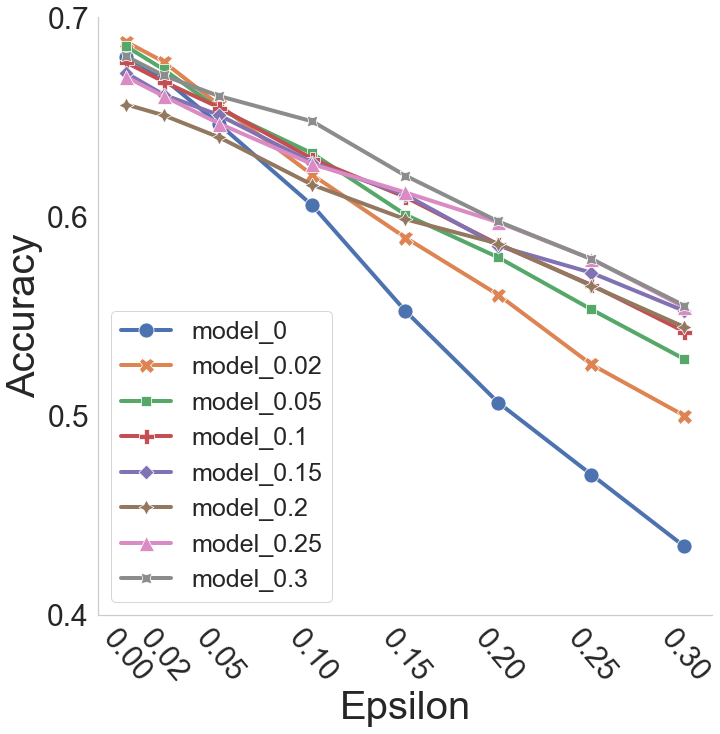}
    \caption{NN -- COMPAS}\label{fig:acc-adv3}
\end{subfigure}

\bigskip
\begin{subfigure}{.3\linewidth}
    \centering
            \includegraphics[width=\textwidth]{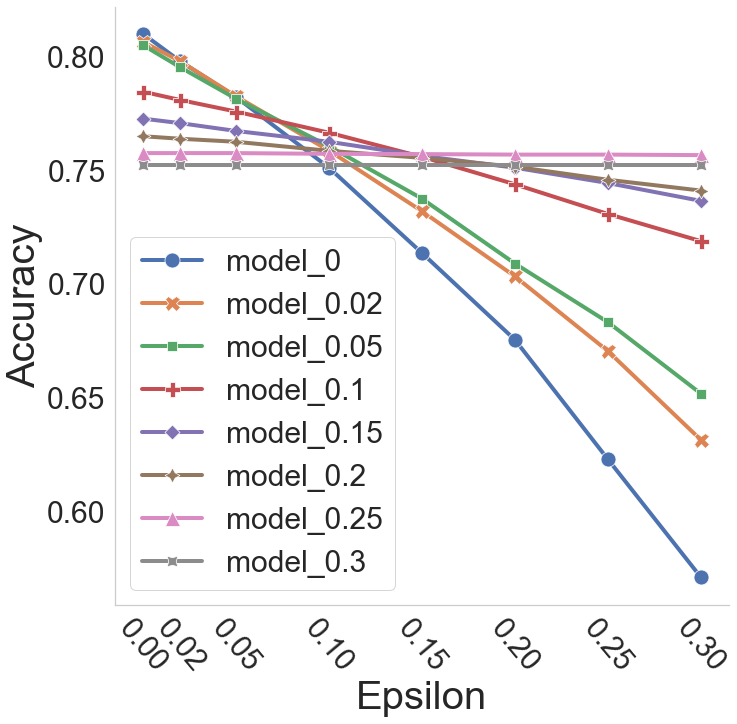}
    \caption{LR -- Adult}\label{fig:acc-adv4}
\end{subfigure}
    \hfill
\begin{subfigure}{.3\linewidth}
    \centering
    \includegraphics[width=\textwidth]{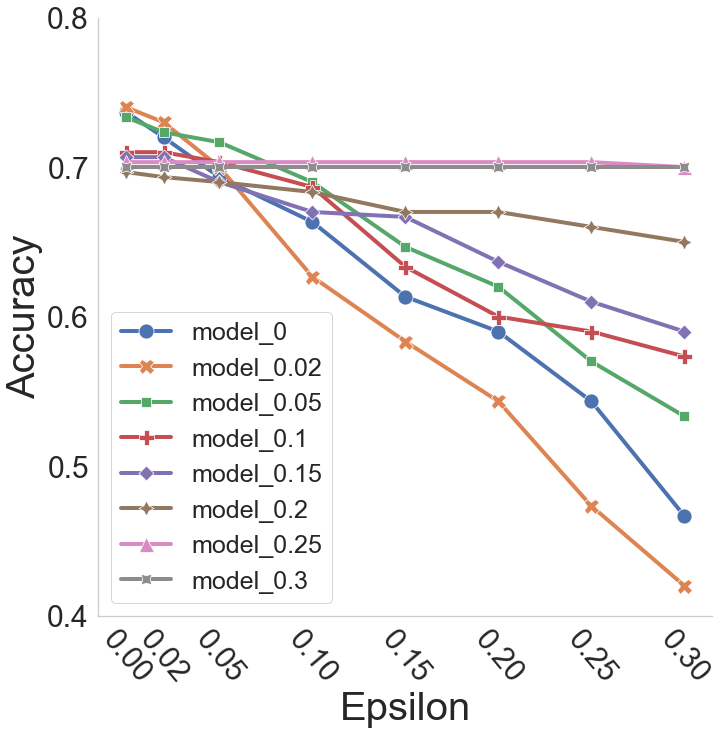}
    \caption{LR -- German Credit}\label{fig:acc-adv5}
\end{subfigure}
   \hfill
\begin{subfigure}{.3\linewidth}
    \centering
    \includegraphics[width=\textwidth]{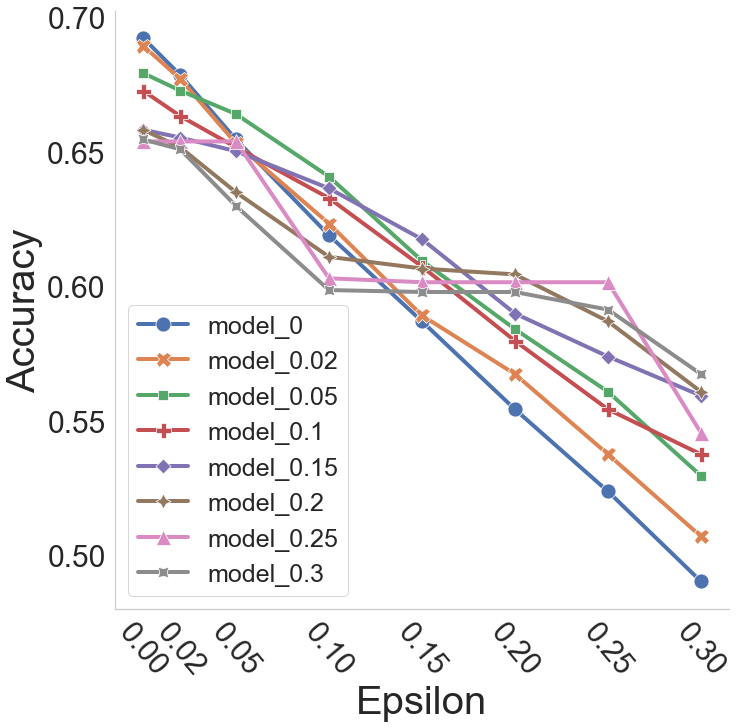}
    \caption{LR -- COMPAS}\label{fig:acc-adv6}
\end{subfigure}
\caption{ Here we plot the adversarial accuracy of the different models we trained on varying degree of robustness ($\epsilon$). As expected, we observe the adversarial accuracy for the \vanilla model is lowest out of all, and gradually gets better when the model is adversarially trained.  }
\label{fig:acc-adv-all}
\end{figure}

\begin{figure}[t!]
        \centering
        \begin{subfigure}[b]{0.32\textwidth}
            \centering
            \includegraphics[width=\textwidth]{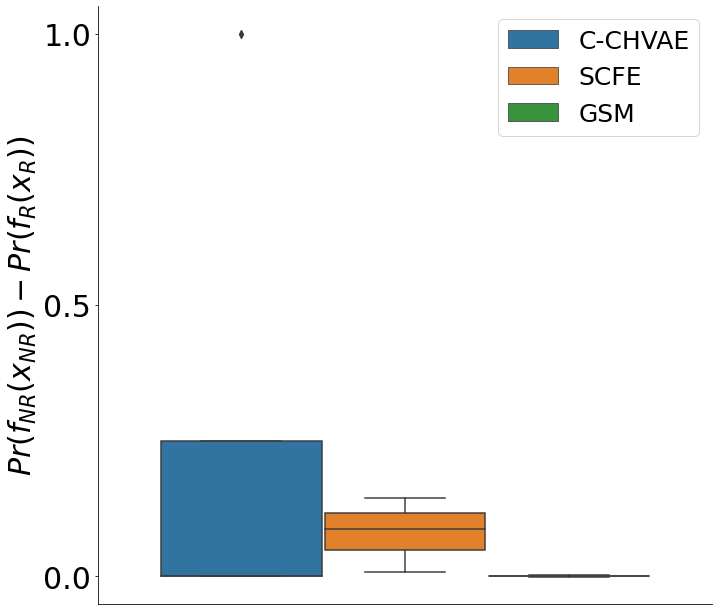}
            \caption{$\epsilon = 0.05$}
            \label{appn:val-eps-1}
        \end{subfigure}
        \begin{subfigure}[b]{0.32\textwidth}
            \centering
            \includegraphics[width=\textwidth]{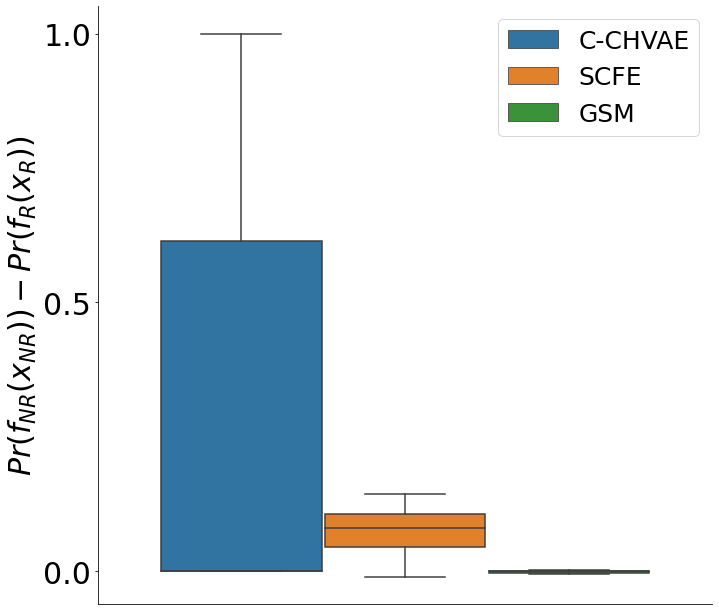}
            \caption{$\epsilon = 0.2$}
            \label{appn:tval-eps-2}
        \end{subfigure}
        \begin{subfigure}[b]{0.32\textwidth}
            \centering
            \includegraphics[width=\textwidth]{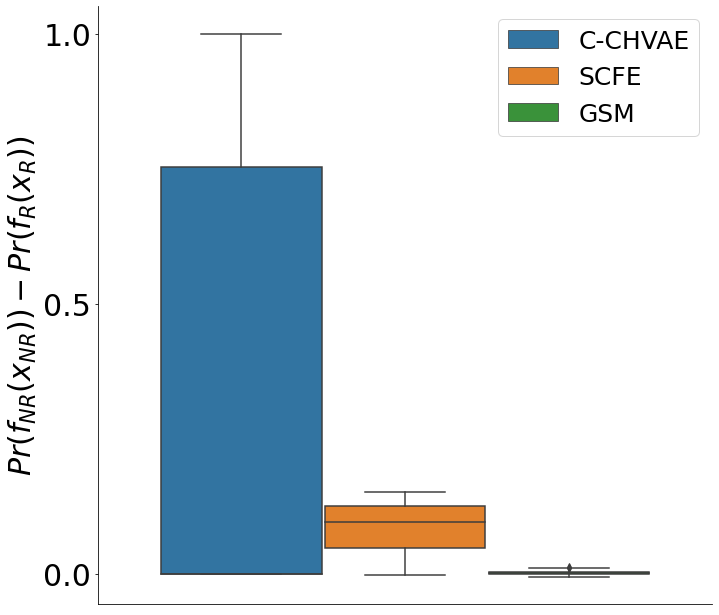}
            \caption{$\epsilon = 0.3$}
            \label{appn:val-eps-3}
        \end{subfigure}
        \caption{ Comparison between the validity of recourses generated for \vanilla and \robust model for varying degrees of robustness ($\epsilon$) for Adult dataset.  }
        \label{appn:theory-validation}
\end{figure}

\begin{figure}[t!]
        \centering
        \begin{subfigure}[b]{0.27\textwidth}
            \centering
            \includegraphics[width=\textwidth]{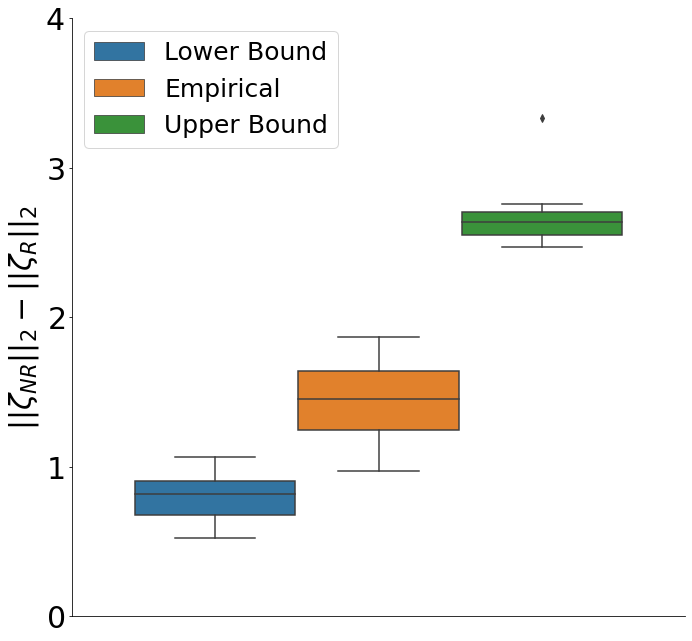}
            \caption{}
            \label{fig:theory-validity-non-linear}
        \end{subfigure}
        \begin{subfigure}[b]{0.27\textwidth}
            \centering
            \includegraphics[width=\textwidth]{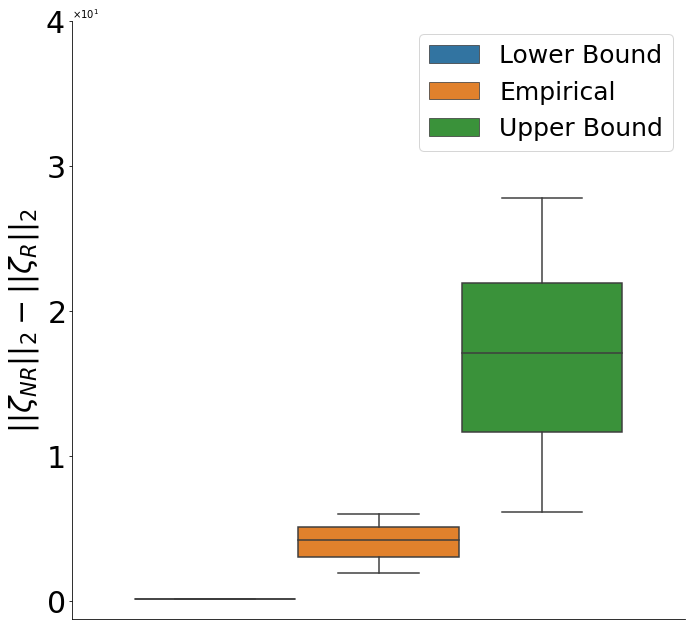}
            \caption{}
            \label{fig:theory-cost}
        \end{subfigure}
        \begin{subfigure}[b]{0.27\textwidth}
            \centering
            \includegraphics[width=\textwidth]{results/adult/adult_scfe_validity_theoretical.png}
            \caption{}
            \label{fig:theory-validity-non-linear}
        \end{subfigure}
        \caption{
        (a) Empirically calculated cost differences (in orange) for the original model and our theoretical lower (in blue) and upper (in green) bounds for SCFE recourses corresponding to \robust (trained using $\epsilon{=}0.3$) vs. \vanilla neural networks corresponding to test samples of the Adult dataset, based on Theorem \ref{thm:cost-bound-non-linear-sketch}. (b) Empirically calculated cost differences (in orange) for the original model and our theoretical lower (in blue) and upper (in green) bounds for SCFE recourses corresponding to \robust (trained using $\epsilon{=}0.3$) vs. \vanilla linear approximation of neural networks corresponding to test samples of the Adult dataset. Figure (c) is the empirical difference between the validity of recourses for \vanilla and \robust linear approximated model.  Results show no violations of our theoretical bounds.
        }
        \label{fig:theory-validation-bound-nn}
\end{figure}

\subsection{Analysis on Larger Neural Networks}
\label{app:wide_nn}

We show the impact on cost difference and validity with the increasing size of neural networks used to train \vanilla and \robust models in Figure \ref{fig:all-cost-non-linear-large}. 

\begin{figure*}[h]
        \begin{flushleft}
            \footnotesize
            \hspace{2.5cm}Cost Differences\hspace{5cm}Validity
        \end{flushleft}
        \begin{flushleft}
            \footnotesize
            \hspace{1.3cm}Depth\hspace{3.0cm}Width\hspace{2.5cm}Depth\hspace{2.7cm}Width
        \end{flushleft}
        \centering
        \begin{subfigure}[b]{0.24\textwidth}
            \centering
            \includegraphics[width=\textwidth]{results/adult/adult_nn_2_cost_difference_nn2.png}
            \label{fig:cost-adult}
        \end{subfigure}
        \begin{subfigure}[b]{0.24\textwidth}
            \centering
            \includegraphics[width=\textwidth]{results/adult/adult_nn_width_cost_difference_nn2.png}
            \label{fig:cost-compas}
        \end{subfigure}
        \begin{subfigure}[b]{0.24\textwidth}
            \centering
            \includegraphics[width=\textwidth]{results/adult/adult_nn_depth_valdity_nn2.png}
            \label{fig:validity-adult}
        \end{subfigure}
        \begin{subfigure}[b]{0.24\textwidth}
            \centering
            \includegraphics[width=\textwidth]{results/adult/adult_nn_width_valdity_nn2_nn_width.png}
            \label{fig:validity-compas}
        \end{subfigure}
        \caption{This figure analyzes the cost and validity differences between recourses generated using \vanilla and \robust neural networks trained on the Adult dataset. These differences are examined as the model size increases in terms of depth (defined as the number of hidden layers) and width (defined as the number of nodes in each hidden layer in a neural network of depth=2). Our findings suggest that: i) the cost difference (i.e., $\ell_{2}-$norm) between the recourses generated for \vanilla and \robust models remains consistent even as the model's depth or width increases, and ii) the validity of the recourses remains consistent even as the model's depth or width increases. Here, the \robust model is trained with $\epsilon = 0.3$.
        }
        \label{fig:all-cost-non-linear-large}
\end{figure*}

\newpage
\section{Broader Impact and Limitations}
\label{app:limitations}

Our theoretical and empirical analysis in understanding the impact of adversarially robust models on actionable explanations can be used by machine learning (ML) practitioners and model developers to analyze the trade-off between achieving adversarial robustness in ML models and providing reliable algorithmic recourses. The theoretical bounds derived in this work raise fundamental questions about designing and developing algorithmic recourse techniques for \vanilla and \robust models and identifying conditions when the generated recourses are invalid and, thus, unreliable. Furthermore, our empirical analysis can be used to navigate the trade-offs between adversarial robustness and actionable recourses when the underlying models are \vanilla vs. \robust before deploying them to real-world high-stake applications. 

Our work lies at the intersection of two broad research areas in trustworthy machine learning -- adversarial robustness and algorithmic recourse, which have their own set of pitfalls. For instance, state-of-the-art algorithmic recourse techniques are often unreliable and generate counterfactuals similar to adversarial examples that force an ML model to generate adversary-selected outputs. It is thus critical to be aware of these similarities before relying on recourse techniques to guide decision-makers in high-risk applications like healthcare, criminal justice, and credit lending.

\newpage
\begin{figure}[t!]
        \centering
        \begin{subfigure}[b]{0.45\textwidth}
            \centering
            \includegraphics[width=\textwidth]{results/adult/appn_adult_scfe_cost_difference_theoretical_0.1.png}
            \caption{}
            \label{fig:theory-validity-non-linear}
        \end{subfigure}
        \begin{subfigure}[b]{0.45\textwidth}
            \centering
            \includegraphics[width=\textwidth]{results/adult/appn_adult_scfe_cost_difference_theoretical_0.2.png}
            \caption{}
            \label{fig:theory-cost}
        \end{subfigure}
        \caption{
        (a) Empirically calculated cost differences (in orange) for the original model and our theoretical lower (in blue) and upper (in green) bounds for SCFE recourses corresponding to \robust (trained using $\epsilon{=}0.1$) vs. \vanilla neural networks corresponding to test samples of the Adult dataset, based on Theorem \ref{thm:cost-bound-non-linear-sketch}. (b) Empirically calculated cost differences (in orange) for the original model and our theoretical lower (in blue) and upper (in green) bounds for SCFE recourses corresponding to \robust (trained using $\epsilon{=}0.2$) vs. \vanilla neural networks corresponding to test samples of the Adult dataset, based on Theorem \ref{thm:cost-bound-non-linear-sketch}.
        }
        \label{fig:theory-validation-bound-nn}
\end{figure}

\begin{figure}
    \centering
    \includegraphics[width=0.63\textwidth]{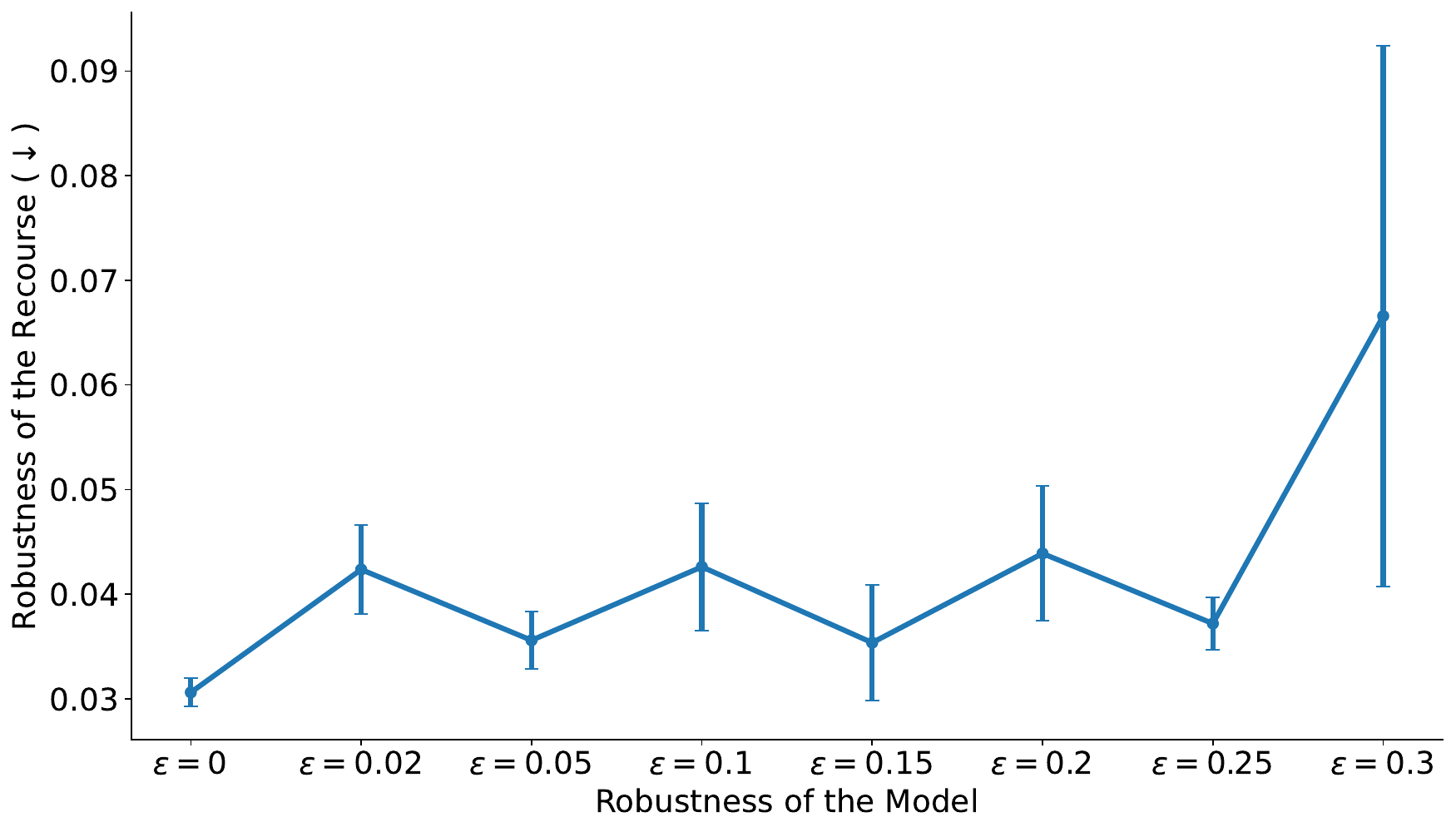}
    \caption{Analyzing the robustness of recourses generated using SCFE with increasing degree of robustness ($\epsilon$) for adversarially trained robust models for the COMPAS dataset. We find that the robustness of the recourse decreases as we increase the robustness of the underlying neural network model.}
    \label{fig:rebuttal}
\end{figure}

\section*{Acknowledgements}
This work is supported in part by the NSF awards IIS-2008461, IIS-2040989, IIS-2238714, and faculty research awards from Google, Adobe, JPMorgan, Harvard Data Science Initiative, and the Digital, Data, and Design (D\texttt{\^}3) Institute at Harvard. The views expressed here are those of the authors and do not reflect the official policy or position of the funding agencies.
\end{document}